\newcommand{\iid}{\stackrel{i.i.d.}{\sim}}
\newcommand{\reg}{{\rm Reg}}
\def\Real{\Re}
\def\<{\langle}
\def\>{\rangle}
\def\st{\text{\rm s.\,t.\ }}
\def\Pr{{\rm Pr}}
\def\0{\mbox{\bf 0}}
\def\1{\mbox{\bf 1}}
\def\2{\mbox{\bf 2}}
\def\3{\mbox{\bf 3}}
\def\4{\mbox{\bf 4}}
\def\5{\mbox{\bf 5}}
\def\6{\mbox{\bf 6}}
\def\7{\mbox{\bf 7}}
\def\8{\mbox{\bf 8}}
\def\9{\mbox{\bf 9}}
\def\e{\mbox{\boldmath $e$}}
\def\r{\mbox{\boldmath $r$}}
\def\x{\mbox{\boldmath $x$}}
\def\GC{\mbox{$\cal G$}}
\def\HC{\mbox{$\cal H$}}
\def\ZC{\mbox{$\cal Z$}}
\def\hatw{\widehat{w}}
\def\what{\widehat{w}}
\newtheorem{theorem}{Theorem}
\newtheorem{proposition}{Proposition}
\newtheorem{definition}{Definition}
\newtheorem{lemma}{Lemma}
\newtheorem{remark}{Remark}
\newtheorem{example}{Example}
\begin{document}
\title{Condition Number Analysis\\ of Kernel-based Density Ratio Estimation 
}
\author{
  Takafumi Kanamori\\ Nagoya University \\ \tt{kanamori@is.nagoya-u.ac.jp}
  \and
 Taiji Suzuki\\ University of Tokyo\\ \tt{s-taiji@stat.t.u-tokyo.ac.jp}
 \and
 Masashi Sugiyama\\ Tokyo Institute of Technology\\ \tt{sugi@cs.titech.ac.jp}
 }
%
%
%
%
\date{}
\maketitle

\begin{abstract}
 The ratio of two probability densities can be used for solving various machine learning tasks
 such as covariate shift adaptation (importance sampling),
 outlier detection (likelihood-ratio test), and feature selection
 (mutual information).
 Recently, several methods of directly estimating the density ratio 
 have been developed, e.g., kernel mean matching,
 maximum likelihood density ratio estimation,
 and least-squares density ratio fitting.
 In this paper, we consider a kernelized variant of 
 the least-squares method and investigate its theoretical properties
 from the viewpoint of the condition number using smoothed analysis
 techniques---the condition number
 of the Hessian matrix determines the convergence rate of optimization
 and the numerical stability. 
 We show that the kernel least-squares method has a smaller condition number
 than a version of kernel mean matching and other M-estimators, 
 implying that the kernel least-squares method has preferable numerical properties. 
 We further give an alternative formulation of the kernel least-squares estimator
 which is shown to possess an even smaller condition number. 
 We show that numerical studies meet our theoretical analysis. 
\end{abstract}

\section{Introduction}\label{sec:Introduction}


The problem of estimating the ratio of two probability densities
is attracting a great deal of attention these days, 
since the density ratio can be used for various purposes
such as covariate shift adaptation 
\cite{JSPI:Shimodaira:2000,ICML:Zadrozny:2004,StatDeci:Sugiyama+Mueller:2005,NIPS2006_915,JMLR:Sugiyama+etal:2007,bickel09:_discr_learn_under_covar_shift}, 
outlier detection \cite{nc:schoelkopf+platt+shawe-taylor:2001,mach:Tax+Duin:2004,AIR:Hodge+Austin:2004,ICDM:Hido+etal:2008},
and divergence estimation \cite{NIPS:Nguyen+etal:2008,FSDM:Suzuki+etal:2008}.

A naive approach to density ratio estimation is to first separately estimate two
probability densities
and then take the ratio of the estimated densities.
However, density estimation is known to be a hard problem particularly in 
high-dimensional cases unless we have simple and good parametric density models
\cite{book:Vapnik:1998,book:Haerdle+etal:2004},
which may not be the case in practice.

Recently, methods of directly estimating the density ratio
without going through density estimation have been developed.
The \emph{kernel mean matching} (KMM) method \cite{NIPS2006_915} directly gives
estimates of the density ratio by matching the two distributions efficiently
using a special property of \emph{universal reproducing kernel Hilbert spaces (RKHSs)}
\cite{JMLR:Steinwart:2001}. 
Another approach is an M-estimator 
\cite{NIPS:Nguyen+etal:2008} based on
non-asymptotic variational characterization
of the $f$-divergence \cite{JRSS-B:Ali+Silvey:1966,SSM-Hungary:Csiszar:1967}.
See also \emcite{NIPS:Sugiyama+etal:2008} for a similar
algorithm under the Kullback-Leibler divergence.
Non-parametric convergence properties of the M-estimator in RKHSs
have been elucidated under the Kullback-Leibler divergence
\cite{NIPS:Nguyen+etal:2008,AISM:Sugiyama+etal:2008}. 
A squared-loss version of the M-estimator for linear density-ratio models
called \emph{unconstraint Least-Square Importance Fitting} (uLSIF) has been developed
and has been shown to possess useful computational properties,
e.g., a closed-form solution is available
and the leave-one-out cross-validation score can be analytically computed
\cite{kanamori09:_least_squar_approac_to_direc_impor_estim}. 
 


In this paper, we consider a kernelized variant of uLSIF (KuLSIF)
and analyze its properties in numerical optimization
from the viewpoint of the {\em condition number}. 
The condition number of the Hessian matrix of objective function 
plays a crucial role 
\cite{luenberger08:_linear_and_nonlin_progr,bertsekas96:_nonlin_progr},
i.e., it determines the convergence rate of optimization and the numerical stability. 
When an objective function to be optimized is randomly chosen and fed into an optimization
algorithm, the computational cost of an algorithm can be assessed by the distribution of
the condition number. 
The distribution of condition numbers of randomly perturbed matrices has been studied 
by the name of \emph{smoothed analysis}
\cite{spielman04:_smoot_analy_of_algor,sankar06:_smoot_analy_of_condit_number}. 
Smoothed analysis was originally introduced to explain the
success of algorithms and heuristics that could not be well-understood through traditional
worst-case and average-case analysis---it gives a more realistic analysis of the practical
performance of algorithms. 

We apply smoothed analysis techniques to derive the distribution of the condition
number of density-ratio estimation algorithms.
More specifically, we first give a unified view of
the objective functions of KuLSIF and KMM.
Then we show that
KuLSIF has a smaller condition number than an ``induction'' variant of KMM,
implying that KuLSIF is  more preferable than KMM in optimization.
We further show that KuLSIF---which could be regarded as an instance of
M-estimators---has the smallest condition number among all M-estimators
in the min-max sense (i.e., the worst condition number over all
density ratio functions is the smallest in KuLSIF).
We also give probabilistic evaluation of the condition number of M-estimators
and show that KuLSIF is favorable. These theoretical findings are also verified through
numerical experiments. 
We further give an alternative formulation of KuLSIF which is denoted as Reduced-KuLSIF, 
and show that it possesses an even smaller condition number.

The rest of this paper is organized as follows.
In Section~\ref{sec:Estimation_of_Density_Ratio},
we formulate the problem of density ratio estimation
and briefly review existing methods.
In Section~\ref{sec:Kernel-uLSIF},
we describe the KuLSIF algorithm, and show its fundamental properties 
such as the convergence rate and availability of 
the analytic-form solution and the analytic-form leave-one-out cross-validation score.
Section~\ref{sec:condition-number} is the main contribution of this paper,
giving condition number analysis of density ratio estimation methods.
In Section~\ref{sec:reduction_cond_num}, we give an alternative
formulation of KuLSIF by transforming loss functions
and show that is possesses an even smaller condition number. 
In Section~\ref{sec:Simulation_Results}, 
we experimentally investigate the behavior of the condition numbers,
confirming validity of our theories.
In Section~\ref{sec:Conclusion}, we conclude
by summarizing our contributions and showing possible future directions.

\section{Estimation of Density Ratio}\label{sec:Estimation_of_Density_Ratio}
We formulate the problem of density ratio estimation
and briefly review existing methods.

\subsection{Formulation and Notations}
Consider two probability distributions $P$ and $Q$ on a probability space $\ZC$. 
Assume that both distributions have the probability densities $p$ and $q$, respectively. 
We assume $p(x)>0$ for all $x\in\ZC$. Suppose that we are given two sets of independent
and identically distributed (i.i.d.) samples, 
\begin{equation}
 X_1,\ldots,X_n  \iid  P,\qquad
 Y_1,\ldots,Y_m  \iid  Q. 
 \label{eqn:samples}
\end{equation}
Our goal is to estimate the density ratio 
\[
w_0(x)=\frac{q(x)}{p(x)}\;(\ge0)
\]
based on the observed samples.

We summarize some notations to be used throughout the paper. 
For a vector $a$ in the Euclidean space, $\|a\|$ denotes the Euclidean norm. 
Given a probability distribution $P$ and a random variable $h(X)$, we denote the
expectation of $h(X)$ under $P$ by $\int h dP$ or $\int h(x)P(dx)$. 
Given samples $X_1,\ldots,X_n$ from $P$, the empirical distribution
is denoted by $P_n$. The expectation $\int hdP_n$ denotes the empirical means of $h(X)$,
that is, $\frac{1}{n}\sum_{i=1}^{n}h(X_i)$. 
Let $\|\cdot\|_{\infty}$ be the infinity norm, and $\|\cdot\|_{P}$ be 
the $L_2$-norm under the probability $P$, i.e. $\|h\|_{P}^2=\int|h|^2dP$. 
For a reproducing kernel Hilbert space (RKHS) $\mathcal{H}$ 
\cite{book:Schoelkopf+Smola:2002}, the inner product and the norm on $\mathcal{H}$ are
denoted as $\<\cdot,\cdot\>_{\mathcal{H}}$ and $\|\cdot\|_{\mathcal{H}}$, respectively. 

Below we review several approaches to density ratio estimation.

\subsection{Kernel Mean Matching} \label{sec:KMM} 
The {\em kernel mean matching} (KMM) method allows us to directly obtain an estimate of
$w_0(x)$ at $X_1,\ldots,X_n$ without going through density 
estimation \cite{NIPS2006_915}.

The basic idea of KMM is to find $w_0(x)$ such that the mean discrepancy between
non-linearly transformed samples drawn from $P$ and $Q$ is minimized in a 
{\em universal reproducing kernel Hilbert space} \cite{JMLR:Steinwart:2001}. 
We introduce the definition of universal kernel below. 
\begin{definition}
 [\emcite{JMLR:Steinwart:2001}]
 \label{definition:universal-kernel}
 A continuous kernel $k$ on a compact metric space $\mathcal{Z}$ is called universal if
 the RKHS $\mathcal{H}$ of $k$ is dense in the set of all continuous functions on
 $\mathcal{Z}$, that is, for every continuous function $g$ on $\mathcal{Z}$ and all
 $\varepsilon>0$, there exists an $f\in\mathcal{H}$ such that
 $\|f-g\|_\infty<\varepsilon$. The corresponding RKHS is called universal RKHS. 
\end{definition}
The Gaussian kernel is an example of universal kernels. 
Let $\mathcal{H}$ be a universal RKHS endowed with the kernel function 
$k:\ZC\times \ZC\longrightarrow\Real$.
For any $x\in\ZC$, the function $k(\cdot,x)$ is regarded as an element of $\mathcal{H}$. 
Then, it has been shown that the solution of the following optimization problem agrees 
with the true density ratio $w_0$: 
\begin{align*}
  \min_{w}
  \;\;  \frac{1}{2}\,\bigg\|\int w(x)k(\cdot,x) P(dx)-\int k(\cdot,y) Q(dy) \bigg\|^2_{\cal H},\qquad 
 \mbox{s.t. }
  \int\! w dP=1
  \ \ \mbox{and} \ \ w\ge0. 
\end{align*}
Indeed, when $w=w_0$, the loss function equals to zero. 
An empirical version of the above problem is reduced to the following convex quadratic
program: 
\begin{align}
 \begin{array}{l}
  \displaystyle  \min_{w_1,\ldots,w_n}
  \frac{1}{2n}\sum_{i,j=1}^{n} w_i w_{j}k(X_i,X_j)
  -\frac{1}{m}\sum_{j=1}^{m}\sum_{i=1}^{n} w_ik(X_i,Y_j),\vspace*{2mm}\\
  \displaystyle \qquad \mbox{s.t.\ }
 \bigg|\frac{1}{n}\sum_{i=1}^{n}w_i-1\bigg| \leq \epsilon
  \ \ \mbox{and} \ \ 
 0\leq w_1, w_2,\ldots, w_n\leq B. 
 \end{array}
 \label{eqn:kmm-opt-prob}
\end{align}
Tuning parameters, $B \geq 0$ and $\epsilon\geq 0$, 
control the regularization effects. 
The solution 
$\widehat{w}_1,\ldots,\widehat{w}_n$ is an estimate of the density ratio at the samples
from $P$, i.e., $w_0(X_1),\ldots,w_0(X_n)$. Note that KMM does not estimate the function
$w_0$ on $\ZC$ but the values on sample points (i.e., transduction). 

\subsection{M-estimator based on $f$-divergence Approach}
An estimator of the density ratio based on
the $f$-divergence \cite{JRSS-B:Ali+Silvey:1966,SSM-Hungary:Csiszar:1967}
has been proposed by \emcite{NIPS:Nguyen+etal:2008}. 
Let $\varphi:\Real\rightarrow\Real$ be a convex function, then the
$f$-divergence between $P$ and $Q$ is defined by the integral 
\begin{align*}
 I(P,Q)~=~\int \varphi(q/p) dP. 
\end{align*}
Setting $\varphi(z)=-\log z$, we obtain the Kullback-Leibler divergence as an example of
$f$-divergences. 
Let the conjugate dual function $\psi$ of $\varphi$ be 
\[
\psi(z)=\sup_{u\in\Real}\{zu-\varphi(u)\}=-\inf_{u\in\Real}\{\varphi(u)-zu\}. 
\]
When $\varphi$ is a convex function, we also have
\begin{equation}
\varphi(z)=-\inf_{u\in\Real}\{\psi(u)-zu\}. 
 \label{eqn:convex-duality}
\end{equation}
Substituting \eqref{eqn:convex-duality} into the $f$-divergence, 
we obtain another expression,  
\begin{align}
 I(P,Q)~=~-\inf_w \left[\int\!\!\psi(w)dP-\int\!\!wdQ\right], 
 \label{eqn:f-div-conjugate-dual}
\end{align}
where the infimum is taken over all measurable functions $w:\ZC\rightarrow\Real$. The
infimum is attained at the function $w$ such that 
\begin{align*}
\frac{q(x)}{p(x)}= \psi'(w(x)), 
 \end{align*}
where $\psi'$ is the derivative of $\psi$. Approximating 
\eqref{eqn:f-div-conjugate-dual} with the empirical distributions $P_n$ and $Q_m$, we
obtain the empirical loss function. This estimator is referred to as the 
{\em M-estimator} of the density ratio.  
A more practical algorithm for the Kullback-Leibler divergence has been independently
proposed in \emcite{NIPS:Sugiyama+etal:2008}. 

When an RKHS $\mathcal{H}$ is employed as a statistical model, an estimator is obtained
by minimizing the loss function which approximates \eqref{eqn:f-div-conjugate-dual} over 
$\mathcal{H}$, 
\begin{align}
 \inf_w\ \int \psi(w) dP_n-\int w dQ_m+\frac{\lambda}{2}\|w\|_{\mathcal{H}}^2, 
 \quad w\in\mathcal{H}. 
 \label{eqn:f-div-RKHS-estimator}
\end{align}
The density ratio $w_0$ is estimated by $\psi'(\widehat{w}(x))$, where $\widehat{w}$ is the
minimizer of 
\eqref{eqn:f-div-RKHS-estimator}. 
The regularization term $\frac{\lambda}{2}\|w\|_{\mathcal{H}}^2$ with the
regularization parameter $\lambda$ is introduced to avoid overfitting. 
In the RKHS $\mathcal{H}$, the representer theorem \cite{JMAA:Kimeldorf+Wahba:1971} 
is applicable, and the optimization problem on $\mathcal{H}$ 
is reduced to a finite dimensional optimization problem. Statistical convergence
properties of the kernel estimator for the Kullback-Leibler divergence have been
investigated in \emcite{NIPS:Nguyen+etal:2008} and \emcite{AISM:Sugiyama+etal:2008}.

\subsection{Least-squares Approach}
\label{subsec:Least-squares Approach}
The linear model 
\begin{equation}
 \widehat{w}(x)~=~\sum_{i=1}^{b}\alpha_i h_i(x)
  \label{eqn:linear-model-density-ratio}
\end{equation}
is assumed for estimation of the density ratio $w_0$, where the coefficients
$\alpha_1,\ldots,\alpha_b$ are the parameters of the model. The basis functions 
$h_i,\ i=1,\ldots,b$ are chosen so that the non-negativity condition
$h_i(x)\geq 0$ is satisfied. A practical choice would be the Gaussian kernel function
$h_i(x)=e^{-\|x-c_i\|^2/2\sigma^2}$ with appropriate kernel center $c_i\in\ZC$ and
kernel width $\sigma$ \cite{NIPS:Sugiyama+etal:2008}. 

The {\em unconstraint least-square importance fitting} (uLSIF)
\cite{kanamori09:_least_squar_approac_to_direc_impor_estim}
estimates the parameter $\alpha$
based on the square error: 
\begin{align*}
\frac{1}{2}\int(\widehat{w}-w_0)^2 dP
 ~=~&\frac{1}{2}\int \widehat{w}^2 dP - \int\widehat{w}dQ + \frac{1}{2}\int w_0^2dP. 
\end{align*}
The last term in the above expression is a constant and can be safely
ignored when minimizing the square error of the estimator $\widehat{w}$. 
Therefore, the solution of the following minimization problem over the linear model, 
\begin{align}
 \min_w\ \frac{1}{2}\int w^2 dP_n - \int w dQ_m
 +\lambda\cdot\reg(\alpha) ,
 \label{eqn:population-mean-uLSIF}
\end{align}
is expected to approximate the true density ratio $w_0$, 
where the regularization term $\reg(\alpha)$ with the regularization
parameter $\lambda$ is introduced to avoid overfitting. 
We define the column vector $\alpha=(\alpha_1,\ldots,\alpha_b)^\top$ and the vector-valued 
function $h(x)=(h_1(x),\ldots,h_b(x))^\top$. 
Substituting the linear model \eqref{eqn:linear-model-density-ratio} into the objective
function of \eqref{eqn:population-mean-uLSIF},
we obtain 
\begin{align}
 \min_{\alpha\in\Real^b}\ 
 \frac{1}{2}\alpha^\top \widehat{H} \alpha - \widehat{g}^\top \alpha 
 +\lambda\cdot\reg(\alpha) 
 \label{uLSIF-loss}, 
\end{align}
where $\widehat{H}$ and $\widehat{g}$ are
the $b$ by $b$ matrix and the $b$-dimensional vector
defined as $\widehat{H}=\int h h^\top dP_n$ and $\widehat{g}=\int h dQ_m$, 
respectively. 
Let $\widehat{\alpha}$ be the minimizer of \eqref{uLSIF-loss}, then the estimator
of $w_0$ is given as $\widehat{w}(x)=\sum_{i=1}^{b}\widehat{\alpha}_i h_i(x)$. 
There are several ways to impose the non-negativity
condition $\widehat{w}(x)\geq 0$ 
\cite{kanamori09:_least_squar_approac_to_direc_impor_estim}. 
Here,
truncation of $\widehat{w}$ defined as 
\[
\widehat{w}_+(x)=\max\{\widehat{w}(x),\ 0\}
\]
is used for obtaining a non-negative estimator. 

Note that the loss function \eqref{eqn:f-div-conjugate-dual} with $\psi(z)=z^2/2$ is
essentially equivalent to the loss of uLSIF. 
uLSIF has an advantage in computation over other M-estimators: 
When $\reg(\alpha)=\|\alpha\|^2/2$, 
the estimator $\widehat{\alpha}$ can be obtained in an analytic form. 
As a result, the leave-one-out cross-validation (LOOCV) score can also be computed in a
closed form \cite{kanamori09:_least_squar_approac_to_direc_impor_estim}, 
which allows us to compute the LOOCV score very efficiently.
LOOCV is an (almost) unbiased estimator of the prediction error and
can be used for determining hyper-parameters such as regularization parameter $\lambda$ or
Gaussian kernel width $\sigma$.

\section{Kernel uLSIF}\label{sec:Kernel-uLSIF}
The purpose of this paper is to show that
a kernelized variant of uLSIF (which we refer to as \emph{kernel uLSIF}; KuLSIF)
has good theoretical properties and thus useful.
In this section, we formalize the KuLSIF algorithm and
briefly show its fundamental properties.
Then in the next section, we analyze the computational efficiency of KuLSIF algorithm from
the viewpoint of the condition number. 


\subsection{uLSIF on RKHS}\label{sec:uLSIF_on_RHKS}
We assume that the model for the density ratio is an RKHS $\mathcal{H}$
endowed with a kernel function $k$ on $\ZC\times \ZC$, and we consider the optimization
problem \eqref{eqn:population-mean-uLSIF} on $\mathcal{H}$. 
According to \eqref{eqn:population-mean-uLSIF}, the estimator $\widehat{w}$ is obtained as
\begin{align}
 \label{eqn:kernel-uLSIF-minprob}
 \begin{array}{l}
  \displaystyle
   \min_w\ \frac{1}{2}\int w^2 dP_n-\int wdQ_m + \frac{\lambda}{2} \|w\|_{\cal H}^2,
   \quad \st\ \  w\in \mathcal{H}. 
 \end{array}
\end{align}
The regularization term $\frac{\lambda}{2}\|w\|_{\cal H}^2$ with the regularization
parameter $\lambda$ $(\geq 0)$ is introduced to avoid overfitting. The truncated estimator
$\widehat{w}_+=\max\{\widehat{w},0\}$ may be preferable in practice;
the estimation procedure of $\widehat{w}$ or $\widehat{w}_+$ based on
\eqref{eqn:kernel-uLSIF-minprob} is called KuLSIF. 


The following theorem reveals the convergence rate of the estimators $\widehat{w}$ and
$\widehat{w}_+$. 
\begin{theorem}[Convergence Rate of KuLSIF]
 \label{theorem:non-para-bound}
 Assume that the domain $\ZC$ is compact. Let $\HC$ be an RKHS with the Gaussian kernel. 
 Suppose that $q/p=w_0\in{\cal H}$, 
 and $\|w_0\|_{\mathcal{H}}<\infty$. 
 Set the regularization parameter $\lambda=\lambda_{n,m}$ so that
 \begin{align*}
  \lim_{n,m\rightarrow\infty}\lambda_{n,m}=0,\qquad
  \lambda_{n,m}^{-1} = O((n\wedge m)^{1-\delta}), 
 \end{align*}
 where $n \wedge m=\min\{n,m\}$ and $\delta$ is arbitrary number satisfying $0<\delta<1$. 
 Then the estimators $\what$ and $\what_+$ satisfy
 \begin{align*}
  \|\what_+-w_0\|_{P} ~\leq~  \|\what-w_0\|_{P} ~=~ O_p(\lambda_{n,m}^{1/2}), 
 \end{align*}
 where $\|\cdot\|_{P}$ is the $L_2$-norm under the probability $P$. 
\end{theorem}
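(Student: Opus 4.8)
The plan is to separate the statement into the elementary truncation inequality and the genuine rate, proving the latter by writing the KuLSIF estimator in closed operator form and running a bias--variance decomposition in the $L_2(P)$ norm.

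First I would dispose of $\|\what_+-w_0\|_P\le\|\what-w_0\|_P$. Since $w_0=q/p\ge0$, for every $x$ either $\what(x)\ge0$, so $\what_+(x)=\what(x)$ and the two errors agree, or $\what(x)<0\le w_0(x)$, so $\what_+(x)=0$ and $|\what_+(x)-w_0(x)|=w_0(x)\le|w_0(x)-\what(x)|$. Thus the pointwise domination $|\what_+-w_0|\le|\what-w_0|$ holds, and integrating against $P$ gives the inequality with no probabilistic input.

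For the rate, I would use the first-order optimality condition for \eqref{eqn:kernel-uLSIF-minprob}. Introducing the empirical and population covariance operators and mean elements on $\HC$,
\begin{align*}
\Sigma&=\int k(\cdot,x)\otimes k(\cdot,x)\,dP,\qquad \Sigma_n=\int k(\cdot,x)\otimes k(\cdot,x)\,dP_n,\\
\mu&=\int k(\cdot,y)\,dQ,\qquad \mu_m=\int k(\cdot,y)\,dQ_m,
\end{align*}
the stationarity condition reads $(\Sigma_n+\lambda I)\what=\mu_m$, hence $\what=(\Sigma_n+\lambda I)^{-1}\mu_m$, with population counterpart $w_\lambda=(\Sigma+\lambda I)^{-1}\mu$. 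The identity $\mu=\Sigma w_0$ (valid since $dQ=w_0\,dP$) yields $w_\lambda-w_0=-\lambda(\Sigma+\lambda I)^{-1}w_0$. Writing $\|f\|_P^2=\langle f,\Sigma f\rangle_{\HC}$ and using $\|\Sigma^{1/2}(\Sigma+\lambda I)^{-1}\|\le\tfrac{1}{2\sqrt{\lambda}}$, the bias is controlled by $\|w_\lambda-w_0\|_P\le\tfrac{\sqrt{\lambda}}{2}\|w_0\|_\HC=O(\sqrt{\lambda})$. By the triangle inequality it then remains to show the variance term $\|\what-w_\lambda\|_P=o_p(\sqrt{\lambda})$.

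For the variance I would expand
\begin{align*}
\what-w_\lambda=(\Sigma_n+\lambda I)^{-1}(\mu_m-\mu)+(\Sigma_n+\lambda I)^{-1}(\Sigma-\Sigma_n)w_\lambda,
\end{align*}
apply $\Sigma^{1/2}$ to pass to the $L_2(P)$ norm, and bound the two pieces by RKHS concentration. Both $\mu_m-\mu$ and $(\Sigma-\Sigma_n)w_\lambda$ are centered empirical means of bounded $\HC$-valued variables; here I use that the Gaussian kernel has $k(x,x)=1$, so $\|w_0\|_\infty\le\|w_0\|_\HC<\infty$ and $\int f^2\,dQ\le\|w_0\|_\infty\int f^2\,dP$. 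The hard part is that the naive resolvent bound $\|(\Sigma_n+\lambda I)^{-1}\|\le\lambda^{-1}$ loses too much: one must instead compute the second moments exactly and show they are governed by the effective dimension $N(\lambda)=\mathrm{tr}\,[\Sigma(\Sigma+\lambda I)^{-1}]$, giving each term the rate $O_p(\sqrt{N(\lambda)/(n\wedge m)})$ rather than $O_p(1/(\lambda\sqrt{n\wedge m}))$. This is exactly where the Gaussian-kernel hypothesis enters: on a compact domain its eigenvalues decay (sub)exponentially, so $N(\lambda)$ is only polylogarithmic, $N(\lambda)=O((\log\lambda^{-1})^{d})$. Together with $\lambda^{-1}=O((n\wedge m)^{1-\delta})$, which forces $\lambda(n\wedge m)\gtrsim(n\wedge m)^{\delta}\to\infty$ while $N(\lambda)$ grows polylogarithmically, this gives $N(\lambda)/(n\wedge m)=o(\lambda)$ and hence $\|\what-w_\lambda\|_P=o_p(\sqrt{\lambda})$; the role of the margin $\delta$ is precisely to absorb the logarithmic factors. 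A standard subsidiary step---showing $\|(\Sigma+\lambda I)^{-1/2}(\Sigma_n-\Sigma)(\Sigma+\lambda I)^{-1/2}\|\le\tfrac12$ with probability tending to one, again via $N(\lambda)$---justifies replacing $\Sigma_n$ by $\Sigma$ inside the resolvents. Combining the bias and variance bounds yields $\|\what-w_0\|_P=O(\sqrt{\lambda})+o_p(\sqrt{\lambda})=O_p(\lambda^{1/2})$, and the truncation inequality transfers the bound to $\what_+$.
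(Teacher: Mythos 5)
Your proposal is correct, but it takes a genuinely different route from the paper. The paper never writes the estimator in closed operator form: it works entirely within empirical process theory, starting from the ``basic inequality'' (the empirical regularized loss at $\what$ is at most that at $w_0$), then bounding the two fluctuation terms $\int(\what-w_0)\,d(Q-Q_m)$ and $\int(\what^2-w_0^2)\,d(P-P_n)$ uniformly over $\mathcal{H}$ via a bracketing-entropy bound for the Gaussian RKHS (Zhou's covering number result combined with Lemma 5.14 of van de Geer), and finally running a three-case analysis to conclude $\|\what-w_0\|_P=O_p(\lambda^{1/2})$. You instead exploit the linearity of the squared-loss estimator, $\what=(\Sigma_n+\lambda I)^{-1}\mu_m$, and run a resolvent-based bias--variance decomposition in which the Gaussian hypothesis enters through eigenvalue decay of $\Sigma$ and the effective dimension $N(\lambda)$; your identification that the naive $\lambda^{-1}$ resolvent bound is insufficient for small $\delta$, and that $N(\lambda)=\mathrm{polylog}(1/\lambda)$ rescues the rate, is exactly right, and your pointwise truncation argument for $\|\what_+-w_0\|_P\le\|\what-w_0\|_P$ is the natural one (the paper leaves it implicit). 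The trade-offs: the paper's entropy-based argument does not use linearity of the estimator and hence transfers to other losses and kernels (cf.\ Remark~\ref{remark:convergence-rate-extension-kernel}, where only an entropy condition $H_B(\delta,\mathcal{H}_M,P)=O(M/\delta)^\gamma$ is required), whereas your argument is specific to the quadratic loss but yields an explicit bias term $\lambda(\Sigma+\lambda I)^{-1}w_0$ with clean constants and would sharpen automatically under favorable spectral decay. The one place you are leaning on an uncited fact is the claim that the integral operator of the Gaussian kernel with respect to an \emph{arbitrary} probability measure on a compact domain has polylogarithmic effective dimension; this is true and can be derived from the same sup-norm covering number bound the paper cites (entropy numbers control singular values via Carl's inequality), but it must be invoked explicitly for your variance and concentration steps ($\|(\Sigma+\lambda I)^{-1/2}(\Sigma_n-\Sigma)(\Sigma+\lambda I)^{-1/2}\|\le\tfrac12$ w.h.p.) to be complete.
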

Proofs may be found in Appendix~\ref{appendix:Proof_Convergence_Theorem}. 
By choosing small $\delta>0$, the convergence rate will get close to the order of
$O(1/\sqrt{n\wedge m})$ which is the convergence rate for parametric models. 
See \emcite{NIPS:Nguyen+etal:2008} and \emcite{AISM:Sugiyama+etal:2008}
for similar convergence analysis under the Kullback-Leibler divergence.
\begin{remark}
 \label{remark:convergence-rate-extension-kernel}
 Although Theorem \ref{theorem:non-para-bound} focuses on the Gaussian kernel, extension
 to the other kernels is straightforward. 
 Let $\mathcal{Z}$ be a probability space, and $k$ be a kernel function over 
 $\mathcal{Z}\times \mathcal{Z}$, and suppose $\sup_{x\in\mathcal{Z}}k(x,x)<\infty$. 
 According to the proof of Theorem \ref{theorem:non-para-bound}, 
 we assume that the bracketing entropy $H_B(\delta,\mathcal{H}_M,P)$
 is bounded above by $O(M/\delta)^\gamma$, where $0<\gamma<2$ (see the proof in 
 Appendix~\ref{appendix:Proof_Convergence_Theorem} for the definition). Then, we obtain   
 \begin{align*}
  \|\what_+-w_0\|_{P} ~\leq~ \|\what-w_0\|_{P} ~=~ O_p(\lambda_{n,m}^{1/2}),
 \end{align*}
 where $\lambda_{n,m}^{-1}=O((n\wedge m)^{1-\delta})$ with $1-2/(2+\gamma)<\delta<1$. 
\end{remark}


\subsection{Analytic-form Solution of KuLSIF}\label{sec:Computation_of_kernel-uLSIF}
The problem \eqref{eqn:kernel-uLSIF-minprob} is an infinite dimensional 
optimization problem, if the dimension of ${\cal H}$ is infinite. The representer theorem 
\cite{JMAA:Kimeldorf+Wahba:1971}, however, is applicable to RKHSs, and then,
we immediately have the following theorem. 
\begin{theorem}
  \label{thm:representer-theoerm}
 Suppose the samples \eqref{eqn:samples} are observed. 
 The estimator $\widehat{w}$ given as the solution of \eqref{eqn:kernel-uLSIF-minprob} has
 the form of 
 \begin{align}
  \widehat{w}(z)~=~\sum_{i=1}^n\alpha_i k(z,X_i)+\sum_{j=1}^m\beta_j k(z,Y_j), 
  \label{eqn:representer-theorem}
 \end{align}
 where $\alpha_1,\ldots,\alpha_n, \beta_1,\ldots,\beta_m\in\Real$. 
\end{theorem}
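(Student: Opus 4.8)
The plan is to invoke the representer theorem in its standard form and verify that the objective functional in \eqref{eqn:kernel-uLSIF-minprob} satisfies the hypotheses that make the theorem applicable. Recall that the classical representer theorem states that if we minimize a functional of the form $L\big(w(z_1),\ldots,w(z_N)\big) + g(\|w\|_{\mathcal{H}})$ over $w\in\mathcal{H}$, where $L$ depends on $w$ only through its evaluations at the finitely many points $z_1,\ldots,z_N$ and $g$ is a strictly increasing function of the norm, then any minimizer lies in the span of $\{k(\cdot,z_1),\ldots,k(\cdot,z_N)\}$. Here the relevant points are precisely the observed samples $X_1,\ldots,X_n$ and $Y_1,\ldots,Y_m$, so $N=n+m$, and the claimed form \eqref{eqn:representer-theorem} is exactly the span of the corresponding kernel sections.

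First I would rewrite the objective in \eqref{eqn:kernel-uLSIF-minprob} to make the point-evaluation dependence explicit. The empirical integrals expand as
\begin{align*}
 \int w^2\,dP_n = \frac{1}{n}\sum_{i=1}^n w(X_i)^2,
 \qquad
 \int w\,dQ_m = \frac{1}{m}\sum_{j=1}^m w(Y_j),
\end{align*}
so the data-dependent part of the objective is a function of the values $w(X_1),\ldots,w(X_n),w(Y_1),\ldots,w(Y_m)$ alone, while the remaining term $\frac{\lambda}{2}\|w\|_{\mathcal{H}}^2$ depends only on the norm. This is precisely the structure required by the representer theorem.

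The key step is the orthogonal decomposition argument. Let $V=\operatorname{span}\{k(\cdot,X_i),k(\cdot,Y_j)\}$ and write any $w\in\mathcal{H}$ as $w=w_\parallel+w_\perp$ with $w_\parallel\in V$ and $w_\perp\perp V$. By the reproducing property, $w(X_i)=\langle w,k(\cdot,X_i)\rangle_{\mathcal{H}}=\langle w_\parallel,k(\cdot,X_i)\rangle_{\mathcal{H}}=w_\parallel(X_i)$, and likewise $w(Y_j)=w_\parallel(Y_j)$, so the point evaluations—and hence the entire data term—are unchanged by discarding $w_\perp$. Meanwhile $\|w\|_{\mathcal{H}}^2=\|w_\parallel\|_{\mathcal{H}}^2+\|w_\perp\|_{\mathcal{H}}^2\ge\|w_\parallel\|_{\mathcal{H}}^2$, with equality only when $w_\perp=0$. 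Thus replacing $w$ by $w_\parallel$ never increases the objective and strictly decreases it unless $w_\perp=0$, so any minimizer must already lie in $V$, giving the form \eqref{eqn:representer-theorem}.

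I do not anticipate a genuine obstacle here, since this is a direct application of a standard result; the only point demanding mild care is the regularity of the objective needed to guarantee that a minimizer exists (so that the statement is non-vacuous). This follows because the data term is continuous and the regularizer $\frac{\lambda}{2}\|w\|_{\mathcal{H}}^2$ is coercive and strictly convex for $\lambda>0$, ensuring the functional attains its minimum. One should also note that the decomposition argument shows the minimizer is unique within $\mathcal{H}$ up to the component in $V$, which is consistent with the representer-theorem conclusion.
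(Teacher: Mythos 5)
Your proposal is correct and matches the paper's approach: the paper simply states that the result ``follows a direct application of the original representer theorem'' and omits the proof, and your orthogonal-decomposition argument is exactly the standard proof of that theorem, applied after observing that the empirical integrals depend on $w$ only through its values at $X_1,\ldots,X_n,Y_1,\ldots,Y_m$. The only minor caveat is that your strict-decrease and existence claims use $\lambda>0$, which is indeed the regime intended by the paper (cf.\ Theorem~\ref{thm:beta-const}), so nothing is lost.
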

The theorem follows a direct application of the original representer theorem, so we omit
its proof. This theorem shows that the estimator $\widehat{w}$ lies in a finite
dimensional subspace of ${\cal H}$.


Furthermore, for KuLSIF (i.e., the squared-loss),
the parameters in $\widehat{w}(z)$ can be obtained
analytically. 
Let $K_{11}$, $K_{12}$, $K_{21}$, and $K_{22}$ be the sub-matrices of the Gram matrix:
\begin{align*}
 (K_{11})_{ii'}~=~&k(X_i,X_{i'}),\;\;\;\; 
 (K_{12})_{ij}~=~k(X_i,Y_j),\;\;\;\; 
 K_{21}~=~K_{12}^\top,\;\;\;\; (K_{22})_{jj'}~=~k(Y_j,Y_{j'}),
\end{align*}
where $i,i'=1,\ldots,n,\ j,j'=1,\ldots,m$. 
Let $\1_m=(1,\ldots,1)^\top\in\Real^m$ for positive integer $m$. 
Then the estimated parameters $\alpha_i$ and $\beta_j$ are given as follows. 
\begin{theorem}[Analytic Solution of KuLSIF]
 \label{thm:beta-const}
 Suppose that 
 the regularization parameter $\lambda$ is strictly positive. 
 Then the estimated parameters in KuLSIF are given as 
 \begin{align}
  \alpha~=~(\alpha_1,\ldots,\alpha_n)^\top
  ~=~& -\frac{1}{m\lambda}\left(K_{11}+n\lambda I_n\right)^{-1}K_{12}\1_m,
  \label{alpah-b}\\
  \beta~=~(\beta_1,\ldots,\beta_m)^\top  
  ~=~&\frac{1}{m\lambda}\1_m,
  \label{beta}
 \end{align}
where $I_n$ is the $n$ by $n$ identity matrix. 
\end{theorem}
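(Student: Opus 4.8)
The plan is to avoid the finite-dimensional coefficient algebra and instead differentiate the convex functional in \eqref{eqn:kernel-uLSIF-minprob} directly on $\mathcal{H}$, exploiting the reproducing property $w(x)=\langle w,k(\cdot,x)\rangle_{\mathcal{H}}$. Writing the empirical means explicitly, the objective becomes
\begin{equation*}
L(w)=\frac{1}{2n}\sum_{i=1}^{n}\langle w,k(\cdot,X_i)\rangle_{\mathcal{H}}^{2}-\frac{1}{m}\sum_{j=1}^{m}\langle w,k(\cdot,Y_j)\rangle_{\mathcal{H}}+\frac{\lambda}{2}\langle w,w\rangle_{\mathcal{H}}.
\end{equation*}
This functional is strictly convex whenever $\lambda>0$ (the regularization term is strictly convex and the remaining terms are convex), so it admits a unique minimizer characterized by the vanishing of its first variation.

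First I would compute the G\^ateaux derivative $\frac{d}{dt}L(\widehat{w}+th)\big|_{t=0}$ in an arbitrary direction $h\in\mathcal{H}$, which equals
\begin{equation*}
\Big\langle\,\frac{1}{n}\sum_{i=1}^{n}\widehat{w}(X_i)\,k(\cdot,X_i)-\frac{1}{m}\sum_{j=1}^{m}k(\cdot,Y_j)+\lambda\widehat{w},\;h\,\Big\rangle_{\mathcal{H}}.
\end{equation*}
Requiring this to vanish for every $h\in\mathcal{H}$ forces the bracketed element itself to be zero, giving the key stationarity identity
\begin{equation*}
\widehat{w}=-\frac{1}{n\lambda}\sum_{i=1}^{n}\widehat{w}(X_i)\,k(\cdot,X_i)+\frac{1}{m\lambda}\sum_{j=1}^{m}k(\cdot,Y_j).
\end{equation*}
Comparing this against the representer expansion \eqref{eqn:representer-theorem} reads off the coefficients at once: the $k(\cdot,Y_j)$ terms give $\beta_j=1/(m\lambda)$ for every $j$, which is precisely \eqref{beta}, while the $k(\cdot,X_i)$ terms give $\alpha_i=-\widehat{w}(X_i)/(n\lambda)$, i.e.\ $\alpha=-\frac{1}{n\lambda}(\widehat{w}(X_1),\ldots,\widehat{w}(X_n))^{\top}$.

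To finish, I would close the loop on $\alpha$. Evaluating \eqref{eqn:representer-theorem} at $X_1,\ldots,X_n$ gives $(\widehat{w}(X_1),\ldots,\widehat{w}(X_n))^{\top}=K_{11}\alpha+K_{12}\beta$; substituting $\beta=\frac{1}{m\lambda}\1_m$ and the relation $\alpha=-\frac{1}{n\lambda}(\widehat{w}(X_1),\ldots,\widehat{w}(X_n))^{\top}$ turns this into the linear system $(K_{11}+n\lambda I_n)\alpha=-\frac{1}{m\lambda}K_{12}\1_m$. Since $K_{11}$ is a Gram matrix it is positive semidefinite, so $K_{11}+n\lambda I_n$ is positive definite and hence invertible for $\lambda>0$, and inverting yields \eqref{alpah-b}. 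The one point requiring care is that the basis $\{k(\cdot,X_i),k(\cdot,Y_j)\}$ need not be linearly independent, so ``matching coefficients'' does not by itself pin down $\alpha$ and $\beta$ uniquely; the argument instead establishes that the stated values furnish one admissible representation of the (unique) minimizer $\widehat{w}$, which is exactly what the theorem asserts. I expect this linear-dependence bookkeeping, rather than any of the analytic steps, to be the only genuine subtlety.
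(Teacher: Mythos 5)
Your proof is correct, and it takes a genuinely different route from the paper's. The paper never differentiates in function space: it first restricts to the representer parametrization of Theorem~\ref{thm:representer-theoerm}, writes the finite-dimensional extremal conditions in $(\alpha,\beta)$ as matrix equations pre-multiplied by Gram blocks, and does this for an \emph{arbitrary} differentiable convex $\psi$, specializing to $\psi(z)=z^2/2$ only at the last step, where the generic condition \eqref{eqn:beta_sol_proof} becomes the linear system \eqref{eqn:uLSIF-linear-eq}. That generality is the point of the paper's proof: it yields Remark~\ref{rem:beta-const} (the identity $\beta=\1_m/(m\lambda)$ for every $f$-divergence M-estimator), which is reused downstream in the condition-number and integrability analysis. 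You instead compute the G\^ateaux derivative of the regularized quadratic functional directly on $\mathcal{H}$ and read off the stationarity identity $\widehat{w}=-\frac{1}{n\lambda}\sum_{i}\widehat{w}(X_i)k(\cdot,X_i)+\frac{1}{m\lambda}\sum_{j}k(\cdot,Y_j)$; this is more self-contained for KuLSIF proper---it re-derives the representer form without invoking Theorem~\ref{thm:representer-theoerm}, hands you $\beta_j=1/(m\lambda)$ immediately, and evaluation at the $X_i$'s closes to the same linear system $(K_{11}+n\lambda I_n)\alpha=-\frac{1}{m\lambda}K_{12}\1_m$---but it does not deliver the general-$\psi$ statement the paper wants later. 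Two smaller observations. First, your linear-dependence caveat is a genuine point the paper leaves implicit: the finite-dimensional objective need not be strictly convex in $(\alpha,\beta)$ even though the functional is strictly convex in $w$, so only the minimizer $\widehat{w}$, not the coefficient vector, is canonically unique, and the theorem is rightly read as exhibiting one valid representation. Second, existence of the minimizer should not be attributed to strict convexity alone (insufficient in infinite dimensions); the clean fix is already inside your argument: since $K_{11}+n\lambda I_n$ is positive definite, the constructed $(\alpha,\beta)$ defines a function whose first variation vanishes, and by convexity such a stationary point is the global minimizer---the same sufficiency direction the paper invokes when it says that solutions of the extremal conditions are optimal.
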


\begin{proof}
We start to prove the theorem for general M-estimator based on $f$-divergences. 
We consider the minimization problem of the loss function 
 \begin{align*}
  \int \psi(w) dP_n-\int w dQ_m +\frac{\lambda}{2}\|w\|_{\mathcal{H}}^2
 \end{align*}
subject to 
\[
 w=\sum_{j=1}^{n}\alpha_j k(\cdot,X_j)+\sum_{\ell=1}^{m}\beta_\ell k(\cdot,Y_\ell).
\]
Suppose $\psi$ is a differentiable convex function. 
Let $v(\alpha,\beta)\in\Real^n$ be a vector-valued function defined as 
\[
 v(\alpha,\beta)_i =
 \psi' \left( \sum_{j=1}^{n}\alpha_j k(X_i,X_j)
 +\sum_{\ell=1}^{m}\beta_\ell k(X_i,Y_\ell) \right) ,\quad i=1,\ldots,n,
\]
where $\psi'$ denotes the derivative of $\psi$.
Then, the extremal condition of the loss function is given as
\begin{align*}
 &\frac{1}{n}K_{11}v(\alpha,\beta)-\frac{1}{m}K_{12}\1_m+\lambda K_{11}\alpha +\lambda
 K_{12}\beta~=~0,\quad \text{and}\\
 &\frac{1}{n}K_{21}v(\alpha,\beta)-\frac{1}{m}K_{22}\1_m+\lambda K_{22}\beta +\lambda
 K_{21}\alpha~=~0. 
\end{align*}
 If $\alpha$ and $\beta$ satisfy the above conditions,
 they are the optimal solution because the loss function is convex 
 in $\alpha$ and $\beta$. Substituting $\beta=\frac{1}{m\lambda}\1_m$, we obtain 
\begin{align*}
 &\frac{1}{n}K_{11}v(\alpha,\1_m/m\lambda)+\lambda K_{11}\alpha=0,\quad\text{and}\\ 
 &\frac{1}{n}K_{21}v(\alpha,\1_m/m\lambda)+\lambda K_{21}\alpha=0. 
\end{align*}
Hence, if the equation 
\begin{align}
 \frac{1}{n}v(\alpha,\1_m/m\lambda)+\lambda \alpha=0
 \label{eqn:beta_sol_proof}
\end{align}
has a solution, it is revealed that $\beta=\frac{1}{m\lambda}\1_m$ is a part of
the optimal solution. For $\psi(z)=z^2/2$, we have
\[
 v(\alpha,\beta) = K_{11}\alpha+K_{12}\beta, 
\]
thus, \eqref{eqn:beta_sol_proof} is reduced to
\begin{align}
 \left(K_{11}+n\lambda I_n\right)\alpha = -\frac{1}{m\lambda}K_{12}\1_m.
 \label{eqn:uLSIF-linear-eq}
\end{align}
The coefficient matrix is non-singular. Therefore, the estimator is represented by
 \eqref{alpah-b} and \eqref{beta}.  
\end{proof}

\begin{remark}
\label{rem:beta-const} 
 As shown in the proof of Theorem~\ref{thm:beta-const}, 
 the estimate $\beta$ for any $f$-divergence
 \eqref{eqn:f-div-RKHS-estimator}
 is given as \eqref{beta} (but not \eqref{alpah-b}) 
 under the condition that Equation \eqref{eqn:beta_sol_proof} has a solution with respect
 to $\alpha$. 
\end{remark}
Eventually, the estimator based on the $f$-divergence is given by solving the following
optimization problem, 
\begin{align}
 \begin{array}{l}
  \displaystyle
  \inf_w\ \int \psi(w) dP_n-\int w dQ_m+\frac{\lambda}{2}\|w\|_{\mathcal{H}}^2,  \\
  \displaystyle	
  \quad \st\ \
  w(\cdot)=\sum_{i=1}^{n}\alpha_ik(\cdot,X_i)+\frac{1}{m\lambda}\sum_{j=1}^{m}k(\cdot,Y_j),\
  \ \ \alpha_1,\ldots,\alpha_n\in\Real. 
 \end{array}
 \label{eqn:f-div-RKHS-est-representer}
\end{align} 
When $\psi(z)=z^2/2$, the problem \eqref{eqn:f-div-RKHS-est-representer} is reduced to 
\begin{align}
 \min_\alpha\ \frac{1}{2}\alpha^\top\left(\frac{1}{n}K_{11}^2+\lambda K_{11}\right)\alpha
 +\frac{1}{nm\lambda}\1_m^\top K_{21}K_{11}\alpha,\quad \alpha\in \Real^n
 \label{eqn:KuLSIF-opt-prob}
\end{align}
by ignoring the term independent of the parameter $\alpha$. 
On the other hand, Theorem~\ref{thm:beta-const} guarantees that the parameter $\alpha$ in
KuLSIF is obtained by the optimal solution of the following optimization problem 
\footnote{
We used the fact that the solution of $Ax=b$ is given as the minimizer
of $\frac{1}{2}x^\top Ax-b^\top x$, when $A$ is positive-semidefinite. }: 
\begin{align}
 \min_\alpha\ \frac{1}{2}\alpha^\top \left(\frac{1}{n}K_{11}+\lambda I_n\right)
 \alpha+\frac{1}{nm\lambda} \1_m^\top K_{21}\alpha,\quad \alpha\in \Real^n. 
 \label{eqn:reduced-KuLSIF}
\end{align}
The estimator given by solving the optimization problem
\eqref{eqn:reduced-KuLSIF} is denoted as \emph{Reduced-KuLSIF} (R-KuLSIF). 
Although KuLSIF and R-KuLSIF share the same optimal
solution, the loss function is different. In a later section, we make clear that R-KuLSIF
is more preferable than the other estimators including KuLSIF from the viewpoint of
numerical computation, especially when the sample size is large.

\subsection{Leave-one-out Cross-validation}\label{sec:Leave-One-Out_Cross_Validation}
In addition to the solutions $\alpha_i$ 
and $\beta_j$, 
the leave-one-out cross-validation (LOOCV) score can also be obtained
analytically in KuLSIF.
The accuracy of the KuLSIF estimator $w_+=\max\{w,0\}$
is measured by $\frac{1}{2}\int w_+^2 dP-\int w_+ dQ$,
which is equal to the square error of $w_+$ up to a constant term.
Then the LOOCV score of $w_+$ under
the square error is defined as 
\begin{align}
 {\rm LOOCV}~=~\frac{1}{n\wedge m}\sum_{\ell=1}^{n \wedge m}
 \left\{
 \frac{1}{2}(\widehat{w}_{+}^{(\ell)}(x_\ell))^2-\widehat{w}_+^{(\ell)}(y_\ell)\right\}, 
 \label{eqn:LOOCV}
\end{align}
where 
$\widehat{w}_+^{(\ell)}=\max\{\widehat{w}^{(\ell)},0\}$ is the estimator based on the
samples except $x_\ell$ and $y_\ell$. The index of removed samples
could be different, for example $x_{\ell_1}$ and $y_{\ell_2}$, but 
for the sake of simplicity, we suppose that the samples $x_\ell$ and $y_\ell$ are removed
in the computation of LOOCV. Hyper-parameters achieving the minimum value of LOOCV will be
a good choice. 

Thanks to the analytic solutions \eqref{alpah-b} and \eqref{beta},
the leave-one-out solution $\widehat{w}^{(\ell)}$ can be computed
efficiently from $\widehat{w}$ 
by the use of the Sherman-Woodbury-Morrison formula \cite{book:Golub+vanLoan:1996}.
The detail of the analytic LOOCV expression is deferred to Appendix~\ref{appendix:LOOCV}---the
derivation follows 
a similar line to 
\cite{kanamori09:_least_squar_approac_to_direc_impor_estim}
which deals with a linear model \eqref{eqn:representer-theorem};
a minor difference is that
removing the sample $(x_\ell,y_\ell)$ in KuLSIF changes the
basis functions due to the kernel expression.

\section{Relation between KuLSIF and KMM}
\label{sec:Relation_KMM}
We show the relation between KuLSIF and KMM. 

We assume that the true density ratio $w_0=q/p$ is included in ${\cal H}$. 
As shown in Section~\ref{sec:Estimation_of_Density_Ratio}, the loss function of KMM on
${\cal H}$ is defined as 
\begin{align*}
  L_{\rm KMM}(w)~=~&\frac{1}{2}\|\Phi(w)\|^2_{\cal H},\\ 
 \Phi(w) ~=~& \int\! k(\cdot,x) w(x) P(dx)-\int\! k(\cdot,y) Q(dy). 
\end{align*}
In the estimation phase, an empirical approximation of $L_{\rm KMM}$ is optimized in the KMM
algorithm. 
On the other hand, the (unregularized) loss function of KuLSIF is given by
\begin{align*}
 L_{\rm KuLSIF}(w)~=~\frac{1}{2}\int w^2 dP-\int w dQ. 
\end{align*}
Both $L_{\rm KMM}$ and $L_{\rm KuLSIF}$ are
minimized at the true density ratio $w_0\in{\cal H}$. 
Although some linear constraints may be introduced in the optimization phase, we study 
the optimization problems of $L_{\rm KMM}$ and $L_{\rm KuLSIF}$ without constraints. 
This is because when the sample size tends to infinity, the optimal solutions of $L_{\rm KMM}$
and $L_{\rm KuLSIF}$ without constraints automatically satisfy the required constraints
such as $\int w dP=1$ and $w\geq 0$. 

We consider the extremal condition of $L_{\rm KuLSIF}(w)$ at $w_0$. Substituting
$w=w_0+\delta\cdot v\,(\delta\in\Real,\, v\in\HC)$ into $L_{\rm KuLSIF}(w)$,
we have 
\begin{align*}
L_{\rm KuLSIF}(w_0+\delta v)-L_{\rm KuLSIF}(w_0)
~=~
 \delta\left\{\int w_0 v dP-\int vdQ\right\}
 +\frac{\delta^2}{2}\int v^2dP. 
\end{align*}
Since $L_{\rm KuLSIF}(w_0+\delta v)$ is minimized at $\delta=0$, 
the derivative of $L_{\rm KuLSIF}(w_0+\delta v)$ at $\delta=0$ vanishes, i.e.,
\begin{align}
\int w_0 v dP-\int vdQ=0. 
 \label{eqn:extremal-cond-kernel-uLSIF}
\end{align}
The equality \eqref{eqn:extremal-cond-kernel-uLSIF} holds for arbitrary $v\in{\cal H}$. 
Using the reproducing property of the kernel function $k$, we can express
\eqref{eqn:extremal-cond-kernel-uLSIF} in terms of $\Phi(w_0)$ as follows, 
\begin{align}
\int w_0 v dP-\int vdQ
&~=~ \int w_0(x) \<k(\cdot,x),v\>_{\mathcal{H}} P(dx)
 -\int \<k(\cdot,y),v\>_{\mathcal{H}} Q(dy)\nonumber\\ 
&~=~
 \big\< \int k(\cdot,x) w_0(x)  P(dx)
 -\int k(\cdot,y) Q(dy),\ v \big\>_{\mathcal{H}}\nonumber\\
&~=~
 \big\< \Phi(w_0),\ v \big\>_{\mathcal{H}}=0,\quad ^\forall v\in{\cal H}. 
 \label{eqn:extremal-cond-written-Phi}
\end{align}
Therefore, we obtain $\Phi(w_0)~=~0$
and we find that $\Phi(w)$ is the G\^{a}teaux derivative
\cite{zeidler86:_nonlin_funct_analy_and_its_applic_i} of $L_{\rm KuLSIF}$ at 
$w\in{\cal H}$. In summary, let $DL_{\rm KuLSIF}$ be the G\^{a}teaux derivative of 
$L_{\rm KuLSIF}$ over the RKHS $\mathcal{H}$, then, the equality
\begin{align}
 \label{eqn:relatoin-KMM-KuLSIF-derivative}
 L_{\rm KMM}(w)~=~\frac{1}{2}\|DL_{\rm KuLSIF}(w)\|_{\mathcal{H}}^2 
\end{align}
holds. \emcite{tsuboi08:_direc_densit_ratio_estim_for} have pointed out a similar relation
for M-estimator based on Kullback-Leibler divergence. 

Now we illustrate the relation between KuLSIF and KMM by showing
an analogous optimization example in the Euclidean space.
Let $f:\Real^d\rightarrow\Real$ be a
differentiable function, and consider the optimization problem 
$\min_{x} f(x)$. At the optimal solution $x_0$, the extremal condition $\nabla f(x_0)=0$ 
should hold, where $\nabla f$ is the gradient of $f$. Thus, instead of minimizing $f$,
minimization of $\|\nabla f(x)\|^2$  also provides the minimizer of $f$. 
This corresponds to the relation between KuLSIF and KMM:
\begin{align*}
        \text{KuLSIF} &\  \Longleftrightarrow\   \min_x f(x),\\
 \text{KMM}&\  \Longleftrightarrow\  \min_x\ \frac{1}{2}\|\nabla f(x)\|^2. 
\end{align*}

In other words, in order to find the solution of the equation
\begin{align}
 \Phi(w) = 0, \label{eqn:equation-to-be-solved}
\end{align}
KMM tries to minimize the norm of $\Phi(w)$. 
The ``dual'' expression of \eqref{eqn:equation-to-be-solved} is given as
\begin{align}
 \<\Phi(w),v\>_{\mathcal{H}}=0, \ \ ^\forall v\in{\cal H}. 
 \label{eqn:dual-equation-to-be-solved}
\end{align}
By ``integrating'' $\<\Phi(w),v\>_{\mathcal{H}}$, we obtain the loss function 
$L_{\rm KuLSIF}$. 

\begin{remark}
\emcite{gretton06:_kernel_method_for_two_sampl_probl} have proposed the maximum mean 
discrepancy (MMD) to measure the discrepancy between two probabilities $P$ and $Q$. 
When the constant function $1$ is included in the RKHS $\mathcal{H}$, 
the MMD between $P$ and $Q$ is equal to $2\times L_{\rm KMM}(1)$. 
Due to the equality \eqref{eqn:relatoin-KMM-KuLSIF-derivative}, we find that the MMD is
also expressed as $\|DL_{\rm KuLSIF}(1)\|_{\mathcal{H}}^2$, that is, the norm of the
derivative of $L_{\rm KuLSIF}$ at $1\in\mathcal{H}$. 
This quantity will be related to the discrepancy between the constant function $1$ and the
true density ratio $w_0=q/p$. 
\end{remark}

\begin{remark}
\label{remark:f-KMM}
It is straightforward to extend the above relation to the general $f$-divergence approach.
The loss function of the M-estimator \cite{NIPS:Nguyen+etal:2008} is given as
\begin{align*}
 L_\psi(w)~=~\int \psi(w)dP-\int w dQ. 
\end{align*}
Then, the loss function of the KMM-type may be defined as 
\begin{align*}
 L_{\text{$\psi$-KMM}}(w)~=~\frac{1}{2}\|DL_\psi(w)\|_{\cal H}^2, 
\end{align*}
where 
\begin{align*}
 DL_{\text{$\psi$-KMM}}(w)~=~
 \int k(\cdot,x) \psi'(w(x)) P(dx)-\int k(\cdot,y)Q(dy). 
\end{align*}
We can confirm that $L_\psi(w)$ and $L_{\text{$\psi$-KMM}}(w)$ share the minimizer. If
there exists $w_\psi\in{\cal H}$ such that $w_0=\psi'(w_\psi)$, the optimal solution is
given by $w_\psi$. 
\end{remark}

\section{Condition Number Analysis for Density Ratio Estimation}
\label{sec:condition-number}
We have elucidated basic properties of the KuLSIF algorithm. In this section, we study the
condition number of KuLSIF and other density ratio estimators in order to investigate
computational properties. This is the main contribution of this paper. 


\subsection{Condition Number in Numerical Analysis and Optimization}
\label{sec:Condition_Number_in_Optimization}
Condition numbers play crucial roles in numerical analysis and optimization 
\cite{demmel97:_applied_numer_linear_algeb,luenberger08:_linear_and_nonlin_progr,sankar06:_smoot_analy_of_condit_number},
which is explained in this section. 

Let $A$ be a symmetric positive definite matrix, and the condition number of $A$ 
is defined as $\lambda_{\max}/\lambda_{\min}\,(\geq 1)$, 
where $\lambda_{\max}$ and $\lambda_{\min}$ are the maximal and minimal eigenvalues of $A$,
respectively. The condition number of $A$ is denoted by $\kappa(A)$. 
In general, the condition number for a matrix which may not be symmetric is defined
through the singular values. The above definition is, however, enough for our purpose. 

In numerical analysis, the condition number governs the round-off error of the solution of
a linear equation $Ax=b$. The matrix $A$ with a large condition number will lead to a
large upper bound on the relative error of the solution $x$. 
More precisely, in the perturbed linear equation $(A+\delta A)(x+\delta x)=b+\delta b$,
the relative error of the solution is given as follows 
\cite{demmel97:_applied_numer_linear_algeb}: 
\begin{align*}
 \frac{\|\delta x\|}{\|x\|}~\leq~
 \frac{\kappa(A)}{1-\kappa(A)\|\delta A\|/\|A\|}
 \left(
 \frac{\|\delta A\|}{\|A\|}+\frac{\|\delta b\|}{\|b\|}
 \right). 
\end{align*}
Hence, smaller condition number is preferable in numerical computation. 

In optimization problems, the condition number determines the convergence rate of
optimization algorithms. 
Let us consider a minimization problem $\min_x f(x),\  x\in\Real^n$, where
$f:\Real^n\rightarrow\Real$ is a differentiable function
and let $x_0$ be a local optimal solution. 
We consider an iterative algorithm which generates a sequence
$\{x_i\}_{i=1}^\infty$. 
In various iterative algorithms, the sequence is generated as
\begin{equation}
  x_{i+1}=x_i-S_i^{-1}\nabla f(x_i),\quad i=1,2,\ldots, 
 \label{eqn:iterative_method}
\end{equation}
where $S_i$ is an approximation of the Hessian matrix of $f$ at $x_0$, i.e.,
$\nabla^2 f(x_0)$. 
Then under a mild assumption, the sequence $\{x_i\}_{i=1}^\infty$ converges to $x_0$. 
Numerical techniques such as scaling and pre-conditioning are also
incorporated in the above form with a certain choice of $S_i$.
According to Section 10.1 in \emcite{luenberger08:_linear_and_nonlin_progr}, the
convergence rate of such iterative algorithms is given as 
\[
\|x_k-x_0\|=O\bigg(\prod_{i=1}^k \frac{\kappa_i-1}{\kappa_i+1}\bigg), 
\]
where $\kappa_i$ is the 
condition number of $S_i^{-1/2}(\nabla^2 f(x_0)) S_i^{-1/2}$. 
Thus, the convergence rate of the sequence $x_k$ is slow if $\kappa_i$ is large. 
More critically, when $\{\kappa_i\}_{i=1}^\infty$ does not converge to one, the sequence
$\{x_i\}_{i=1}^\infty$ does not converge to $x_0$ at a super-linear rate. 

When the condition number of the Hessian matrix $\nabla^2 f(x_0)$ is large, there is a
trade-off between the numerical accuracy and the convergence rate in optimization 
problems. 
Let us illustrate the trade-off using a few examples. 
When the Newton method is employed, $S_k$ is given as $\nabla^2 f(x_k)$. 
Because of the continuity of $\nabla^2 f$, 
the condition number of $S_k=\nabla^2 f(x_k)$ would be large if $\kappa(\nabla^2 f(x_0))$
is large. 
Then the numerical computation of $S_k^{-1}\nabla f(x_k)$ becomes unstable. 
When the quasi-Newton methods such as the BFGS method or the DFP method
\cite{luenberger08:_linear_and_nonlin_progr} are employed, $S_k$ or $S_k^{-1}$ is 
successively estimated based on the information of the gradient. 
If $\kappa(\nabla^2 f(x_0))$ is large, $\kappa(S_k)$ is also likely to be large, and thus, 
the numerical computation of $S_k^{-1}\nabla f(x_k)$ is not reliable, even when $S_k^{-1}$
is successively updated in the quasi-Newton methods. 
The round-off error caused by nearly singular Hessian matrices 
significantly affects the accuracy of the quasi-Newton methods.
As a result, it may not be guaranteed that $S_k^{-1}\nabla f(x_k)$ is a preferable descent
direction of the objective function $f$. 

In optimization problems with large condition numbers, the numerical 
computation tends to be unreliable. To avoid numerical instability, 
the Hessian matrix is often modified so that $S_k$ has a moderate condition number. 
For example, the optimization toolbox in MATLAB$^\text{\textregistered}$
implements a gradient descent method in its function \texttt{fminunc}.
The default method in \texttt{fminunc} is the BFGS method
with update through the Cholesky factorization of $S_k$ (not $S_k^{-1}$).
Even if the positive definiteness of $S_k$ is violated by the round-off error,
the Cholesky factorization immediately detects the negativity of eigenvalues
and the positive definiteness of $S_k$ is
recovered by adding a correction term. When the modified Cholesky factorization 
is used, the condition number of $S_k$ is guaranteed to be bounded above by some constant,
$C$. See \cite{more84:_newton_method} in details. 

The trade-off between numerical accuracy and convergence rate is summarized by the
following equality: 
\begin{align}
 \min_{S:\kappa(S)\leq C} \kappa(S^{-1/2} (\nabla^2 f(x_0)) S^{-1/2}) 
 ~=~
 \max\big\{\,\frac{\kappa(\nabla^2 f(x_0))}{C},\,1\,\big\}. 
 \label{eqn:trade-off_accuracy_speed}
\end{align}
The proof of \eqref{eqn:trade-off_accuracy_speed} may be found in Appendix 
\ref{appendix:condition-number-equality}. 
We suppose that the symmetric positive definite matrix $S_k$ satisfying 
$\kappa(S_k)\leq C$ is used in the iterative algorithm \eqref{eqn:iterative_method}. 
If $\kappa(\nabla^2 f(x_0))$ is large, 
the right-hand side of \eqref{eqn:trade-off_accuracy_speed} will be greater than
one. Hence, the convergence rate will be slow. 
That is, the quasi-Newton method with a modified Hessian $S_k$ such that 
$\kappa(S_k)\leq C$ may not achieve a super-linear convergence rate. 
Even though some scaling or pre-conditioning technique is available,
it is preferable that the condition number of the original problem is 
kept as small as possible.

\subsection{Condition Number Analysis of KuLSIF and KMM}

Let us consider the optimization problems in KuLSIF and KMM on an RKHS $\mathcal{H}$
endowed with a kernel function $k$ over a set $\mathcal{Z}$. 
Given samples \eqref{eqn:samples}, the optimization problems of KuLSIF and KMM are 
defined as 
\begin{align*}
 \text{(KuLSIF)}&\ \ \min_w\ \frac{1}{2}\int\! w^2 dP_n\!-\!\!\int\! w dQ_m 
 + \frac{\lambda}{2}\|w\|_\mathcal{H}^2,\quad  w\in\mathcal{H},\\
 \text{(KMM)}&\ \ \min_w\ \frac{1}{2}
 \big\|\widehat{\Phi}(w) +\lambda w\big\|_\mathcal{H}^2,\quad  w\in \mathcal{H}, 
\end{align*}
where
\begin{align*}
\widehat{\Phi}(w)=\int k(\cdot,x)w(x)P_n(dx)-\int k(\cdot,y)Q_m(dy). 
\end{align*}
Here, $\widehat{\Phi}(w) +\lambda w$ is the G\^{a}teaux derivative of the loss function
for KuLSIF including the regularization term. 
In the original KMM method, the density ratio on samples $X_1,\ldots,X_n$
are optimized \cite{NIPS2006_915}, i.e., transduction. 
Here, we consider its inductive variant, i.e., estimating the function
$w_0$ on $\mathcal{Z}$ using the loss function of KMM. 
According to Theorem~\ref{thm:beta-const}, the optimal solution of (KuLSIF) is given as
the form of
$w=\sum_{i=1}^{n}\alpha_ik(\cdot,X_i)+\frac{1}{m\lambda}\sum_{j=1}^{m}k(\cdot,Y_j)$; 
note that the optimal solution of (KMM) is also given by the same form.
Thus, the variables to be optimized in (KuLSIF) and (KMM) are $\alpha_1,\ldots,\alpha_n$. 

We investigate the numerical efficiency of (KuLSIF) and (KMM). 
When we solve the minimization problem $\min_x f(x)$, it is not recommended to minimize 
the norm of the gradient $\min_x \|\nabla f(x)\|^2$, 
since the problem $\min_x \|\nabla f(x)\|^2$ generally has a larger condition number than
$\min_x f(x)$ \cite{luenberger08:_linear_and_nonlin_progr}. 
For example, let $f$ be the convex quadratic function defined as 
$f(x)=\frac{1}{2}x^\top A x -b^\top x$ with a positive-definite matrix $A$. 
Then the condition number of the Hessian matrix equals to $\kappa(A)$. 
On the other hand, the Hessian matrix of the function $\|\nabla f(x)\|^2=\|Ax-b\|^2$ is
equal to $\kappa(A^2)=\kappa(A)^2$, that is, the condition number is squared and thus
becomes larger. Below, we show that the same is true of KuLSIF and KMM. 

The Hessian matrices of the objective functions of KuLSIF and KMM are given as 
\begin{align}
 H_\mathrm{KuLSIF}&~=~\frac{1}{n}K_{11}^2+\lambda K_{11}, 
 \label{eqn:Hessian-KuLSIF}\\
 H_\mathrm{KMM}&~=~\frac{1}{n^2}K_{11}^3+\frac{2\lambda}{n}K_{11}^2+\lambda^2K_{11}. 
 \label{eqn:Hessian-KMM} 
\end{align}
$H_\mathrm{KuLSIF}$ is derived from \eqref{eqn:KuLSIF-opt-prob}, and 
$H_\mathrm{KMM}$ is given by direct computation based on (KMM). 
Then, we obtain 
\begin{align*}
 \kappa(H_\mathrm{KuLSIF}) ~=~&
 \kappa(K_{11})\kappa\big(\frac{1}{n}K_{11}+\lambda I_n\big),\\
 \kappa(H_\mathrm{KMM}) ~=~&\kappa(K_{11})\kappa\big(\frac{1}{n}K_{11}
 +\lambda I_n\big)^2. 
\end{align*}
Since the condition number is larger than or equal to one, 
the inequality 
\[
\kappa(H_\mathrm{KuLSIF})~\leq~\kappa(H_\mathrm{KMM})
\]
holds. This implies that the convergence rate of KuLSIF well be faster than that of KMM,
when an iterative optimization algorithm is used to minimize each loss function. 

According to Remark~\ref{remark:f-KMM}, we expect that 
the condition number of M-estimator based on $L_\psi$ is smaller than that of KMM based on
$L_{\psi\text{-KMM}}$. 
Let each Hessian matrix at optimal solution $\what$ be $H_{\psi\text{-div}}$ for $L_\psi$
and $H_{{\psi\text{-KMM}}}$ for $L_{\psi\text{-KMM}}$, then some calculation provides 
\begin{align*}
 H_{\psi\text{-div}} =~&
 K_{11}^{1/2}\left(\frac{1}{n}K_{11}^{1/2}D_{\psi,\what}K_{11}^{1/2}
 +\lambda I_n\right)K_{11}^{1/2},\\
 H_{{\psi\text{-KMM}}} ~=~&
 K_{11}^{1/2}\left(\frac{1}{n}K_{11}^{1/2}D_{\psi,\what}K_{11}^{1/2}
 +\lambda I_n\right)^2K_{11}^{1/2}, 
\end{align*}
where $D_{\psi,w}$ is the $n$ by $n$ diagonal matrix defined as 
\begin{align}
 D_{\psi,w} ~=~
 \begin{pmatrix}
  \psi''(w(X_1))& & \\	&\ddots & \\	& & \psi''(w(X_n))
 \end{pmatrix},
 \label{eqn:def-of-D_psi_w}
\end{align}
and $\psi''$ denotes the second-order derivative of $\psi$.
Hence, using the inequality $\kappa(AB)\leq \kappa(A)\kappa(B)$
\cite{horn85:_matrix_analy}, we have 
\begin{align*}
\kappa(H_{\psi\text{-div}}) 
 ~\leq~& 
 \kappa(K_{11})  \kappa\big(\frac{1}{n}K_{11}^{1/2}D_{\psi,\what}K_{11}^{1/2} 
 +\lambda I_n\big),\\
 \kappa(H_{{\psi\text{-KMM}}})
 ~\leq~& 
 \kappa(K_{11})  \kappa\big(\frac{1}{n}K_{11}^{1/2}D_{\psi,\what}K_{11}^{1/2} 
 +\lambda I_n\big)^2. 
\end{align*}
From the viewpoint of the naive upper bound of condition numbers,
the M-estimator based on
$L_\psi$ will be preferable to KMM with $L_{\psi\text{-KMM}}$.

\subsection{Condition Number Analysis of $M$-Estimators}\label{sec:Optimization_in_KuLSIF_and_KMM}
(K)uLSIF is an example of the M-estimators with the squared loss. 
Here, we study the condition number of the Hessian matrix 
associated with the minimization problem in the $f$-divergence approach, and show that
KuLSIF is optimal among all M-estimators based on $f$-divergences.  
More specifically, we will give a min-max evaluation (Section~\ref{sec:Deterministic_Evaluation})
and a probabilistic evaluation (Section~\ref{sec:Probabilistic_Evaluation}) 
of the condition number.




\subsubsection{Min-max Evaluation}\label{sec:Deterministic_Evaluation}

We assume that a universal RKHS ${\cal H}$ 
\cite{JMLR:Steinwart:2001}
endowed with a kernel function $k$ on a compact set $\mathcal{Z}$ is used for estimation
of $w_0$. 
The M-estimator based on the $f$-divergence is obtained by solving the problem
\eqref{eqn:f-div-RKHS-est-representer}. 
The Hessian matrix of the loss function at the optimal solution $w$ is equal to 
\begin{align}
 \frac{1}{n}K_{11}D_{\psi,w} K_{11}+\lambda K_{11}, 
 \label{eqn:Hessian_matrix_f-div}
\end{align}
where $D_{\psi,w}$ is the diagonal matrix defined as Eq.\,\eqref{eqn:def-of-D_psi_w}. 
The condition number of the Hessian matrix is denoted by
\begin{align*}
 \kappa_0(D_{\psi,w})~=~\kappa\bigg(\frac{1}{n}K_{11}D_{\psi,w}K_{11}+\lambda K_{11}\bigg). 
\end{align*}
In KuLSIF, we find $\psi''=1$, and thus, the condition number is equal to
$\kappa_0(I_n)$. 
We analyze the relation
between $\kappa_0(I_n)$ and $\kappa_0(D_{\psi,w})$. 

\begin{theorem}[Min-max Evaluation]
 \label{thm:min-max-evaluation}
 Suppose that $\mathcal{H}$ is a universal RKHS, and that $K_{11}$ is non-singular. 
 Then, 
\begin{align}
 \inf_{\psi:\psi''(1)=1}\,\sup_{w\in{\cal H}}
 \kappa_0(D_{\psi,w})~=~\kappa_0(I_n)
 \label{eqn:min-max-evaluation}
\end{align}
 holds. Here the infimum is taken over all convex second-order continuously differentiable
 functions $\psi$ such that $\psi''(1)=1$. 
\end{theorem}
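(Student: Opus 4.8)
\section*{Proof proposal}

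The plan is to prove the claimed identity \eqref{eqn:min-max-evaluation} by establishing the two matching inequalities separately, treating the achievability of $\kappa_0(I_n)$ as an easy upper bound on the infimum and the optimality of KuLSIF as the substantive lower bound. First I would dispose of the upper bound: for the squared loss $\psi(z)=z^2/2$ we have $\psi''\equiv 1$, so $D_{\psi,w}=I_n$ for \emph{every} $w\in\mathcal{H}$, and hence $\sup_{w}\kappa_0(D_{\psi,w})=\kappa_0(I_n)$. Since this $\psi$ satisfies the constraint $\psi''(1)=1$, it immediately follows that $\inf_{\psi}\sup_{w}\kappa_0(D_{\psi,w})\leq\kappa_0(I_n)$.

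The real content is the reverse inequality, namely that for \emph{every} admissible $\psi$ the worst-case condition number over $w\in\mathcal{H}$ is at least $\kappa_0(I_n)$. Here I would exploit the universality of $\mathcal{H}$ (Definition~\ref{definition:universal-kernel}). Because the RKHS is dense in the continuous functions on the compact set $\mathcal{Z}$, for each $\varepsilon>0$ there exists $w_\varepsilon\in\mathcal{H}$ with $\|w_\varepsilon-1\|_\infty<\varepsilon$; in particular $|w_\varepsilon(X_i)-1|<\varepsilon$ at all $n$ sample points. Since $\psi$ is second-order continuously differentiable and $\psi''(1)=1$, continuity of $\psi''$ forces $\psi''(w_\varepsilon(X_i))\to 1$ for each $i$ as $\varepsilon\to 0$, so the diagonal matrices satisfy $D_{\psi,w_\varepsilon}\to I_n$.

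It remains to transfer this convergence to the condition number of the Hessian \eqref{eqn:Hessian_matrix_f-div}. The map $D\mapsto \frac{1}{n}K_{11}DK_{11}+\lambda K_{11}$ is linear in $D$; moreover convexity of $\psi$ gives $D_{\psi,w}\succeq 0$, and since the non-singular Gram matrix $K_{11}$ is positive definite with $\lambda>0$, the Hessian is positive definite with smallest eigenvalue bounded below by $\lambda\,\lambda_{\min}(K_{11})>0$ uniformly over the positive-semidefinite cone. Consequently the eigenvalues, and hence $\kappa_0$, depend continuously on $D$ there, so $\kappa_0(D_{\psi,w_\varepsilon})\to\kappa_0(I_n)$. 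Because each $w_\varepsilon\in\mathcal{H}$, this yields $\sup_{w\in\mathcal{H}}\kappa_0(D_{\psi,w})\geq\lim_{\varepsilon\to 0}\kappa_0(D_{\psi,w_\varepsilon})=\kappa_0(I_n)$, and taking the infimum over $\psi$ gives $\inf_{\psi}\sup_{w}\kappa_0(D_{\psi,w})\geq\kappa_0(I_n)$. Combining the two bounds proves the equality.

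The main obstacle I anticipate is the lower-bound step, specifically the justification that the constant function $1$ can be approached from within $\mathcal{H}$ uniformly on $\mathcal{Z}$---this is exactly what universality supplies and is the reason the theorem is stated for universal kernels---together with the verification that the Hessian remains uniformly positive definite along the approximating sequence, so that $\kappa_0$ stays finite and continuous rather than degenerating. Everything else, including the continuity of $\psi''$ and the linearity of the Hessian in $D_{\psi,w}$, is routine once these two points are secured.
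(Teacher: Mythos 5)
Your proposal is correct and follows essentially the same route as the paper's proof: both establish the upper bound via the quadratic loss $\psi(z)=z^2/2$ (for which $D_{\psi,w}=I_n$ identically), and the lower bound by using universality to approximate the constant function $1$ uniformly, so that continuity of $\psi''$ and of the condition number forces $\kappa_0(D_{\psi,w})$ arbitrarily close to $\kappa_0(I_n)$. Your additional justification that the Hessian stays uniformly positive definite (smallest eigenvalue at least $\lambda\,\lambda_{\min}(K_{11})>0$ on the positive-semidefinite cone) is a slightly more explicit version of the paper's appeal to continuity of eigenvalues ``as long as the condition number is well-defined,'' but it is not a different argument.
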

The proof is deferred to Appendix \ref{appendix:Proof_of_min-max-theorem}. 
When the constraint $\psi''(1)=c$ is imposed with some $c>0$, the optimal function is
given as $\psi(z)=cz^2/2$ in the min-max sense. Practically, the value of $\psi''(1)$
determines the balance between the fitting to training samples and the regularization term. 
Theorem~\ref{thm:min-max-evaluation} guarantees that KuLSIF minimizes the worst-case
condition number, which is brought by the fact that 
the condition number of KuLSIF does not depend on the optimal solution. 
Since both sides of \eqref{eqn:min-max-evaluation} depend on the samples $X_1,\ldots,X_n$, 
KuLSIF achieves the min-max solution in
terms of the condition number for \emph{each} observation. 

\subsubsection{Probabilistic Evaluation}\label{sec:Probabilistic_Evaluation}
Next, we study 
probabilistic evaluation of the condition number. 
 As shown in min-max evaluation, the Hessian matrix is given as 
 \begin{align*}
  H=\frac{1}{n}K_{11}D_{\psi,\what} K_{11} + \lambda K_{11}, 
 \end{align*}
where the diagonal elements of $D_{\psi,\what}$ are equal to 
$\psi''(\what(X_1)),\ldots,\psi''(\what(X_n))$. 
The estimator $\what$ is given as the minimum solution of
\eqref{eqn:f-div-RKHS-est-representer}. 
Let us define the random variable $T_n$ as 
\begin{align*}
 T_n ~=~ \max_{1\leq i\leq n}\, \psi''(\what(X_i)), 
\end{align*}
and $F_n$ be the distribution function of $T_n$, then $T_n$ is a non-negative random
variable. 

Below, we first compute the distribution of the condition number $\kappa(H)$. Then we
investigate the relation between the function $\psi$ and the distribution of condition
number $\kappa(H)$. 
We need to study the eigenvalues and the condition numbers of random matrices. 
For the Wishart distribution, the probability distribution of condition numbers has been 
investigated by 
\emcite{edelman88:_eigen_and_condit_number_of_random_matric,edelman05:_tails_of_condit_number_distr}. 
Recently, the condition number of matrices perturbed by additive Gaussian noise have been 
investigated by the name of \emph{smoothed analysis} 
\cite{sankar06:_smoot_analy_of_condit_number,spielman04:_smoot_analy_of_algor,vu07:_condit_number_of_random_pertur_matrix}. 
Randomness involved in the matrix $H$ defined above is, however, different from that in
existing works. 

\begin{theorem}[Probabilistic Evaluation]
\label{cor:cond-num-prob-convergence} 
 Let $\mathcal{H}$ be a RKHS endowed with a kernel function $k$ on $\mathcal{Z}$ 
 satisfying the following condition: there exists $\varepsilon>0$ such that 
 \begin{align*}
  \sqrt{\varepsilon} \leq k(x,x') \leq 1,\quad \forall x,x'\in\mathcal{Z}. 
 \end{align*}
 Assume that the Gram matrix $K_{11}$ is almost surely positive definite in terms of the
 probability measure $P$. 
 Suppose that there exists sequences $s_n$ and $t_n$ such that
 \begin{align}
  \lim_{n\rightarrow\infty} s_n = \infty, \quad
  \lim_{n\rightarrow\infty} F_n(s_n)=0, \quad
  \lim_{n\rightarrow\infty} F_n(t_n) = 1,
  \label{eqn:sequences_for_convergence}
 \end{align}
 and that 
 there exists $M>0$ such that $E[\psi''(\what(X_1))]\leq M$ holds for large sample size,
 $n$ and $m$. 
 Suppose that $\lambda=\lambda_{n,m}$ satisfies
 $\lim_{n\rightarrow\infty}\lambda_{n,m}<\infty$. 
 Then, for any small $\nu>0$, we have
 \begin{align}
 \lim_{n\rightarrow\infty}
 \Pr\left(
  s_n^{1-\nu}
  \leq \kappa(H) \leq 
  \kappa(K_{11})\big(1+\frac{t_n}{\lambda}\big)
 \right)=1. 
  \label{eqn:probabilisitic-bound}
 \end{align}
\end{theorem}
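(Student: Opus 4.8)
The plan is to establish the two-sided bound~\eqref{eqn:probabilisitic-bound} by proving the upper and lower inequalities separately, each on an event whose probability tends to one, and then intersecting the two events. I work conditionally on the samples, so that every matrix inequality below is deterministic for a fixed realization, and I invoke the probabilistic hypotheses ($F_n(s_n)\to0$, $F_n(t_n)\to1$, $E[\psi''(\widehat w(X_1))]\le M$, and $\lambda_{n,m}$ bounded) only at the very end. The two structural ingredients are the factorization of the Hessian~\eqref{eqn:Hessian_matrix_f-div}, namely $H=K_{11}^{1/2}MK_{11}^{1/2}$ with $M=\tfrac1n K_{11}^{1/2}D_{\psi,\widehat w}K_{11}^{1/2}+\lambda I_n$ (so that $H$ is similar to $K_{11}M$ and $\kappa(H)=\kappa(K_{11}M)$), and the kernel sandwich $\sqrt\varepsilon\le k\le1$, which controls the diagonals and traces of the Gram products.

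For the upper bound I use submultiplicativity, $\kappa(H)=\kappa(K_{11}M)\le\kappa(K_{11})\,\kappa(M)$. Convexity of $\psi$ gives $D_{\psi,\widehat w}\succeq0$, hence $\lambda_{\min}(M)\ge\lambda$; and $D_{\psi,\widehat w}\preceq T_n I_n$ gives $K_{11}^{1/2}D_{\psi,\widehat w}K_{11}^{1/2}\preceq T_n K_{11}$, so $\lambda_{\max}(M)\le\lambda+\tfrac{T_n}{n}\lambda_{\max}(K_{11})$. Since $k(x,x)\le1$ forces $\lambda_{\max}(K_{11})\le\operatorname{trace}(K_{11})\le n$, this reduces to $\lambda_{\max}(M)\le\lambda+T_n$ and therefore $\kappa(H)\le\kappa(K_{11})(1+T_n/\lambda)$. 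On the event $\{T_n\le t_n\}$, which has probability $F_n(t_n)\to1$, this is exactly the claimed upper bound.

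The lower bound is the substantive part, and I bound $\lambda_{\max}(H)$ from below and $\lambda_{\min}(H)$ from above directly. Dropping the positive term $\lambda K_{11}$ gives $\lambda_{\max}(H)\ge\tfrac1n\lambda_{\max}(K_{11}D_{\psi,\widehat w}K_{11})$; putting $A=D_{\psi,\widehat w}^{1/2}K_{11}$, so that $A^\top A=K_{11}D_{\psi,\widehat w}K_{11}$ and $AA^\top=D_{\psi,\widehat w}^{1/2}K_{11}^2D_{\psi,\widehat w}^{1/2}$ share their nonzero eigenvalues, and testing $AA^\top$ against the coordinate $i^\star$ that attains $T_n$, I obtain $\lambda_{\max}(K_{11}D_{\psi,\widehat w}K_{11})\ge T_n(K_{11}^2)_{i^\star i^\star}=T_n\sum_j k(X_{i^\star},X_j)^2\ge n\varepsilon T_n$, whence $\lambda_{\max}(H)\ge\varepsilon T_n$. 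For the smallest eigenvalue I use $\lambda_{\min}(H)\le\tfrac1n\operatorname{trace}(H)$; because $\sum_j k(X_i,X_j)^2\le n$ and $k(x,x)\le1$, the trace computation gives $\lambda_{\min}(H)\le\overline D_n+\lambda$ with $\overline D_n=\tfrac1n\sum_i\psi''(\widehat w(X_i))$. Combining the two yields the deterministic inequality $\kappa(H)\ge\varepsilon T_n/(\overline D_n+\lambda)$.

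To turn this into $\kappa(H)\ge s_n^{1-\nu}$ with probability tending to one, I fix $R>0$ and restrict to $\{T_n\ge s_n\}\cap\{\overline D_n\le R\}$, on which $\kappa(H)\ge\varepsilon s_n/(R+\lambda_0)$ using $\lambda\le\lambda_0<\infty$. Exchangeability of the i.i.d. samples gives $E[\overline D_n]=E[\psi''(\widehat w(X_1))]\le M$, so Markov yields $\Pr(\overline D_n>R)\le M/R$, while $\Pr(T_n<s_n)=F_n(s_n)\to0$; a union bound makes this event have probability at least $1-F_n(s_n)-M/R$. Since $s_n\to\infty$ we have $\varepsilon s_n/(R+\lambda_0)\ge s_n^{1-\nu}$ for all large $n$, so $\liminf_n\Pr(\kappa(H)\ge s_n^{1-\nu})\ge1-M/R$; letting $R\to\infty$ gives probability one, and intersecting with the upper-bound event completes the proof. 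The main obstacle is precisely the upper control of $\lambda_{\min}(H)$: the regularization term $\lambda K_{11}$ cannot be discarded there, and $\overline D_n$ is bounded only in probability, so forcing the probability to one by sending $R\to\infty$ drives the multiplicative constant to zero---this is exactly why no clean $c\,s_n$ bound survives and the arbitrarily small exponent loss $\nu$ (with $s_n^{\nu}\to\infty$ absorbing the vanishing constant) is unavoidable.
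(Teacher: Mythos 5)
Your proposal is correct and follows essentially the same route as the paper's proof (Lemma~\ref{thm:probabilistic_evaluation} and its application): you establish the identical deterministic bounds $\lambda_{\max}(H)\ge\varepsilon T_n$ (the paper tests the quadratic form against $k_j/\|k_j\|$ where you pass through $AA^\top$ with $A=D_{\psi,\widehat{w}}^{1/2}K_{11}$), $\lambda_{\min}(H)\le\frac{1}{n}\sum_{i}\psi''(\widehat{w}(X_i))+\lambda$ via the trace, and $\kappa(H)\le\kappa(K_{11})\left(1+T_n/\lambda\right)$, and you then convert them into probability statements using Markov's inequality on the exchangeable average together with the conditions $F_n(s_n)\to0$ and $F_n(t_n)\to1$, exactly as the paper does. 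The only cosmetic difference is in the Markov step: you fix a threshold $R$ and send $R\to\infty$ after taking the liminf, whereas the paper effectively lets the threshold grow (choosing $c_n=\varepsilon s_n$, $\delta_n=s_n^{1-\nu}$ so that $\delta_n/c_n\to0$), making both error terms vanish at once.
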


The proof is deferred to Appendix \ref{appendix:proof-of-probability-eval-theorem}. 

\begin{remark}
 The Gaussian kernel on a compact set meets the condition of Theorem 
 \ref{cor:cond-num-prob-convergence} under a mild assumption on the probability $P$. 
 If the distribution $P$ of samples $X_1,\ldots,X_n$ is absolutely continuous with respect
 to the Lebesgue measure, the Gram matrix of the Gaussian kernel is almost surely positive
 definite. Because, $K_{11}$ is positive definite if $X_i\neq X_j$ for $i\neq j$. 
\end{remark}

When $\psi$ is the quadratic function, $\psi(z)=z^2/2$, the distribution function $F_n$ is
given $F_n(t)=\1[t\geq 1]$, where $\1[\,\cdot\,]$ is the indicator function. Hence, there
does not exist a sequence $s_n$ defined in Theorem
\ref{cor:cond-num-prob-convergence}. The upper bound is, however, still valid. That is, by 
choosing $t_n=1$, the upper bound of $\kappa(H)$ with $\psi(z)=z^2/2$ is asymptotically
given as $\kappa(K_{11})(1+\lambda_{n,m}^{-1})$. On the other hand, in the M-estimator
with Kullback-Leibler divergence \cite{NIPS:Nguyen+etal:2008}, 
the function $\psi$ is defined as $\psi(z)=-1-\log(-z),\ z<0$, and thus, $\psi''(z)=1/z^2$
holds. 
Hence, $T_n=\max_{1\leq i\leq n}(\widehat{w}(X_i))^{-2}$ is expected to be of the order
larger than constant order, and thus, $t_n$ would diverge to infinity. This simple
analysis indicates that the KuLSIF will be more preferable than the M-estimator with
Kullback-Leibler divergence in the sense of computational efficiency and stability. 

We derive an approximation of the inequality in \eqref{eqn:probabilisitic-bound}. 
The target of the estimator $\what$ is given as $w$ such that 
$q(x)/p(x) ~=~ \psi'(w(x))$ holds. 
Thus, we expect that the condition number of 
$\frac{1}{n}K_{11}D_{\psi,\widehat{w}}K_{11}+\lambda K_{11}$
is approximated by that of $\frac{1}{n}K_{11}D_{\psi,w}K_{11}+\lambda K_{11}$. 
The proof of Theorem \ref{cor:cond-num-prob-convergence} is valid even in the case that
the random variable $T_n$ is defined by a fixed function $w\in\mathcal{H}$. 
The condition number of Hessian matrix at a fixed function $w\in\mathcal{H}$ is considered 
in the proposition below. 
\begin{proposition}[Approximated Bound]
 \label{proposition:Approximated-Bound}
 The kernel function $k$ and the regularization parameter $\lambda$ satisfy the same
 condition as Theorem \ref{cor:cond-num-prob-convergence}. 
 For a function $w\in\mathcal{H}$, let $F$ be the distribution function of $\psi''(w(X))$, 
 and suppose that the expectation of $\psi''(w(X))$ is finite. 
 Let $G$ be $1-F$, and suppose that there exists a real number $U>0$ such that $G(t)$ has
 the inverse function $G^{-1}$ for $t\geq U$. 
 Let the random matrix $H_w$ be
 \[
  H_w=\frac{1}{n}K_{11}D_{\psi,w}K_{11}+\lambda K_{11}. 
 \]
 Then, for any small $\eta>0$ and any small $\nu>0$, we have
 \[
 \lim_{n\rightarrow\infty}
 \Pr\left(
 \{G^{-1}(1/n^{1-\eta})\}^{1-\nu}
 \leq \kappa(H_w) \leq 
 \kappa(K_{11})\left(1+\lambda^{-1}G^{-1}(1/n^{1+\eta})\right)
 \right)=1. 
 \]
\end{proposition}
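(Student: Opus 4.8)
The plan is to obtain Proposition~\ref{proposition:Approximated-Bound} as a direct consequence of Theorem~\ref{cor:cond-num-prob-convergence}, exploiting the remark that the proof of that theorem remains valid when the random variable $T_n$ is built from a fixed function $w\in\mathcal{H}$ in place of the estimator $\what$. The starting observation is that the diagonal entries of $D_{\psi,w}$ are $\psi''(w(X_1)),\ldots,\psi''(w(X_n))$; since $X_1,\ldots,X_n$ are i.i.d.\ draws from $P$, these are i.i.d.\ random variables, each with distribution function $F$. Consequently the maximum $T_n=\max_{1\le i\le n}\psi''(w(X_i))$ has distribution function $F_n(t)=F(t)^n$, and the finiteness of $E[\psi''(w(X))]$ supplies the moment bound $E[\psi''(w(X_1))]\le M$ required by the theorem; the kernel and $\lambda$ conditions are inherited verbatim.

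First I would feed into Theorem~\ref{cor:cond-num-prob-convergence} the two sequences
\begin{align*}
 s_n=G^{-1}(1/n^{1-\eta}),\qquad t_n=G^{-1}(1/n^{1+\eta}),
\end{align*}
which are well defined for large $n$ because $G=1-F$ is invertible on $[U,\infty)$. With these choices $G(s_n)=n^{-(1-\eta)}$ and $G(t_n)=n^{-(1+\eta)}$, so that $F(s_n)=1-n^{-(1-\eta)}$ and $F(t_n)=1-n^{-(1+\eta)}$. The remaining task is to verify the three limit conditions in \eqref{eqn:sequences_for_convergence}. Taking logarithms and using $\log(1-x)=-x+O(x^2)$ gives
\begin{align*}
 \log F_n(s_n)=n\log\bigl(1-n^{-(1-\eta)}\bigr)=-n^{\eta}+o(1)\longrightarrow-\infty,
\end{align*}
hence $F_n(s_n)\to0$, while
\begin{align*}
 \log F_n(t_n)=n\log\bigl(1-n^{-(1+\eta)}\bigr)=-n^{-\eta}+o(1)\longrightarrow0,
\end{align*}
hence $F_n(t_n)\to1$. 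The divergence $s_n\to\infty$ follows because $G(s_n)=n^{-(1-\eta)}\to0$ forces $s_n$ toward the upper end of the support of $\psi''(w(X))$, which in the case of interest (e.g.\ the Kullback--Leibler choice $\psi''(z)=1/z^2$ discussed after Theorem~\ref{cor:cond-num-prob-convergence}) is unbounded.

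Having checked that $s_n$ and $t_n$ satisfy every hypothesis of Theorem~\ref{cor:cond-num-prob-convergence}, I would substitute them into the conclusion \eqref{eqn:probabilisitic-bound}, obtaining
\begin{align*}
 \lim_{n\to\infty}\Pr\Bigl(s_n^{1-\nu}\le\kappa(H_w)\le\kappa(K_{11})\bigl(1+t_n/\lambda\bigr)\Bigr)=1,
\end{align*}
and replacing $s_n$ and $t_n$ by their definitions reproduces exactly the stated bound.

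The main obstacle I anticipate is the rigorous control of the two boundary limits $F_n(s_n)\to0$ and $F_n(t_n)\to1$: the heuristic $(1-x)^n\approx e^{-nx}$ must be made precise, and one must confirm that the quadratic remainder in $\log(1-x)$ is negligible, i.e.\ that $n\cdot O\bigl(n^{-2(1\mp\eta)}\bigr)\to0$, which holds exactly when $\eta$ is taken small enough (namely $\eta<1/2$ for the $s_n$ side). A secondary delicate point is the justification of $s_n\to\infty$: were $\psi''(w(X))$ essentially bounded, the lower sequence would saturate and the hypothesis $s_n\to\infty$ of Theorem~\ref{cor:cond-num-prob-convergence} would fail, so the argument genuinely requires that $G$ be invertible on an unbounded tail, which is the regime the proposition targets.
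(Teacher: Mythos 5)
Your proposal is correct and follows essentially the same route as the paper's own proof: both reduce the statement to Theorem~\ref{cor:cond-num-prob-convergence} via the observation that $F_n=F^n$ for i.i.d.\ diagonal entries, choose $s_n=G^{-1}(1/n^{1-\eta})$ and $t_n=G^{-1}(1/n^{1+\eta})$, verify the limits in \eqref{eqn:sequences_for_convergence}, and substitute into \eqref{eqn:probabilisitic-bound}. Your explicit logarithmic verification of $F_n(s_n)\to 0$ and $F_n(t_n)\to 1$, and your remark that invertibility of $G$ on an unbounded tail is what guarantees $s_n\to\infty$, merely make precise steps the paper asserts without detail.
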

\begin{proof}
 Note that $F_n(t)$ in Theorem \ref{cor:cond-num-prob-convergence} is equal to $(F(t))^n$, 
 since $\psi''(w(X_i)),\,i=1,\ldots,n$ are identically and independently distributed form
 $F$. The condition number $\kappa(H_w)$ satisfies Eq.\eqref{eqn:probabilisitic-bound}
 with $F_n=F^n$. 

 As shown in Figure~\ref{fig:cond-shift}, the function $G^{-1}$ is decreasing. 
 Let $s_n$ be $s_n=G^{-1}(1/n^{1-\eta})$, then
 $s_n\rightarrow\infty$ holds when $n$ tends to infinity. 
 Thus, we have 
 \[
 F_n(s_n)=F(s_n)^n=(1-G(s_n))^n=
 \left(1-\frac{1}{n^{1-\eta}}\right)^n\ \longrightarrow\ 0,\ \ n\rightarrow\infty.
 \]
 On the other hand, 
 let $t_n$ be $t_n=G^{-1}(1/n^{1+\eta})$, then
 we have
 \[
  F_n(t_n)=(1-G(t_n))^n
 =\left(1-\frac{1}{n^{1+\eta}}\right)^n\ \longrightarrow\ 1,\ \ n\rightarrow\infty.
 \]
 Substituting $s_n$ and $t_n$ into the inequality in \eqref{eqn:probabilisitic-bound}, we
 obtain the result. 
\end{proof}

\begin{remark}
 \label{remark:KuLSIF-smallest-cond-num}
 Proposition \ref{proposition:Approximated-Bound} implies that for large $n$, the
 inequality 
 \begin{align}
  \{G^{-1}(1/n^{1-\eta})\}^{1-\nu} ~\leq~ \kappa(H_w) ~\leq~
 \kappa(K_{11})\left(1+\lambda^{-1}G^{-1}(1/n^{1+\eta})\right)
  \label{remark:prob-bound}
 \end{align}
 holds in high probability. 
 In KuLSIF, the function 
 $\psi$ is given as $\psi(z)=z^2/2$, and the
 corresponding distribution function of each diagonal element in $D_{\psi,w}$ 
 is given by $F_{\mathrm{KuLSIF}}(d)=\1[d\geq 1]$, and thus, 
 $G_{\mathrm{KuLSIF}}(d)=1-F_{\mathrm{KuLSIF}}(d)=\1[d<1]$. 
 In all M-estimators except KuLSIF, diagonal elements
 of $D_{\psi,w}$ can take various positive values. 
 We regard the diagonal elements of $D_{\psi,w}$ 
 as a typical realization of random variables with the distribution function $F(d)$. 
 When the distribution function $F$ is close to $F_{\mathrm{KuLSIF}}$, 
 the function $G=1-F$ is also close to $G_{\mathrm{KuLSIF}}$. 
 Then, $G^{-1}$ will take small values as illustrated in Figure~\ref{fig:cond-shift}. 
 As a result, we can expect that the condition number of KuLSIF is smaller than that of
 the other M-estimators. In a later section, we further investigate this issue through
 numerical experiments. 
\end{remark}

\begin{figure}
 \begin{center}
 \scalebox{0.5}{\includegraphics{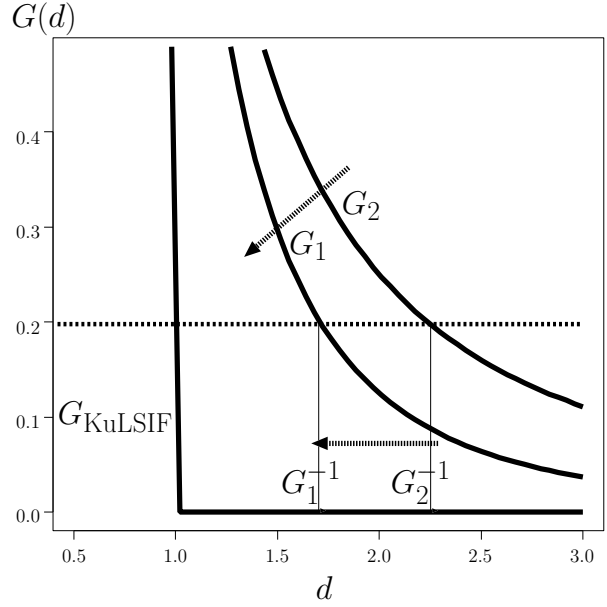}}\vspace*{-5mm}
 \end{center}
 \caption{If the function $G_1(d)$ is closer to $G_{\mathrm{KuLSIF}}(d)\;(=0)$ 
 than $G_2(d)$ for large $d$, then 
 $G_1^{-1}(z)$ takes smaller value than $G_2^{-1}(z)$ for small $z$. }
 \label{fig:cond-shift}
\end{figure}

\begin{example}
 \label{example:power-decay}
 Let $F_\gamma(d)$ be
 \begin{align*}
  F_\gamma(d) ~=~
  \begin{cases}
   0& 0\leq d <1,\\
   1-\frac{1}{d^\gamma}& 1\leq d.
  \end{cases}
 \end{align*}
 Suppose that $F_\gamma$ is the distribution function of
 $\psi''(w(X))=\psi''(\psi'^{-1}(q(X)/p(X)))$. 
 Note that the distribution function $F_{\mathrm{KuLSIF}}(d)=\1[d\geq 1]$ is represented as 
 $\1[d\geq 1]=\lim_{\gamma\rightarrow\infty}F_\gamma(d)$ except at $d=1$. 
 Then, $G_\gamma(d)=1-F_\gamma(d)$ is equal to
  \begin{align*}
  G_\gamma(d) ~=~
   \begin{cases}
    1& 0\leq d <1,\\
    \frac{1}{d^\gamma}& 1\leq d.
   \end{cases} 
 \end{align*}
 For small $z>0$, the inverse function $G_\gamma^{-1}(z)$ is given as
 \begin{align*}
  G_\gamma^{-1}(z)~=~z^{-1/\gamma}. 
 \end{align*}
 Hence for sufficiently small $\eta$, the inequality \eqref{remark:prob-bound} is reduced
 to 
 \[
 n^{\frac{(1-\eta)(1-\nu)}{\gamma}}~\leq~\kappa(H_w)~\leq~
 \kappa(K_{11})\big(1+\lambda^{-1}n^{\frac{1+\eta}{\gamma}}\big). 
 \]
 Both upper and lower bounds in the above inequality are monotone decreasing with respect
 to $\gamma$. 
\end{example}

\begin{example}
 \label{example:exponential-decay}
 Let $F_\gamma(d)$ be
 \begin{align*}
 F_\gamma(d) ~=~ \frac{1}{1+e^{-\gamma(d-1)}},\quad d\geq 0.
 \end{align*}
 The distribution function $F_{\mathrm{KuLSIF}}(d)=\1[d\geq 1]$ is represented as
 $\1[d\geq 1]=\lim_{\gamma\rightarrow\infty}F_\gamma(d)$ except at $d=1$.
 Then, $G_\gamma(d)=1-F_\gamma(d)$ is equal to
 \begin{align*}
 G_\gamma(d) ~=~ \frac{1}{1+e^{\gamma(d-1)}},\quad d\geq 0.
 \end{align*}
 For small $z$, the inverse function $G_\gamma^{-1}(z)$ is given as
 \begin{align*}
 G_\gamma^{-1}(z)~=~1+\frac{1}{\gamma}\log\frac{1-z}{z}.
 \end{align*}
 Hence for small $\eta$, the inequality \eqref{remark:prob-bound} will lead the following:
 \[
 \bigg(\frac{1-\eta}{\gamma}\log\frac{n}{2}\bigg)^{1-\nu}
 ~\leq~\kappa(H_w)~\leq~
 \kappa(K_{11})\cdot \frac{1+\eta}{\lambda \gamma}\log n. 
 \]
 The upper and lower bounds in the above inequality are monotone decreasing with respect
 to $\gamma$. 
\end{example}

\section{Reduction of Condition Numbers in KuLSIF} \label{sec:reduction_cond_num}
The condition number in the optimization problem of KuLSIF is given as
$\kappa(H_{\mathrm{KuLSIF}})=\kappa(\frac{1}{n}K_{11}^2+\lambda K_{11})$, and  
that of the original KMM method is equal to $\kappa(K_{11})$ which is approximately
derived from \eqref{eqn:kmm-opt-prob}. 
On the other hand, the Hessian matrix of R-KuLSIF is equal to 
\begin{align}
\label{eqn:Hessian_RKuLSIF}
 H_{\mathrm{R-KuLSIF}}~=~\frac{1}{n}K_{11}+\lambda I_n. 
\end{align}
See \eqref{eqn:reduced-KuLSIF} for the loss function of R-KuLSIF. 
Due to the equality
\begin{align*}
 \kappa(H_{\mathrm{KuLSIF}}) ~=~ \kappa(K_{11}) \kappa(H_{\mathrm{R-KuLSIF}}),
\end{align*}
we have 
\begin{align*}
\kappa(H_{\mathrm{R-KuLSIF}}) ~\leq~ \kappa(H_{\mathrm{KuLSIF}}). 
\end{align*}
Moreover, it is easy to see 
\begin{align*}
\kappa(H_{\mathrm{R-KuLSIF}}) ~\leq~ \kappa(K_{11}). 
\end{align*}
These inequalities imply that R-KuLSIF is more preferable than KuLSIF and KMM in the sense 
of the convergent speed and numerical stability as explained in Section
\ref{sec:Condition_Number_in_Optimization}. 

In this section, we study 
whether reduction of condition numbers is possible in the general 
$f$-divergence approach. We do not consider scaling of the parameter
\cite{luenberger08:_linear_and_nonlin_progr},
but other types of transformation of loss functions 
in order to reduce the condition number. 
Our conclusion is that among all $f$-divergence approaches,
the condition number is reducible only in KuLSIF.
Thus the reduction of condition numbers by R-KuLSIF is a special property,
which makes R-KuLSIF particularly attractive in practical use.

We elucidate the reason why the condition number of KuLSIF can be reduced from 
$\kappa(H_{\mathrm{KuLSIF}})$ to $\kappa(H_{\mathrm{R-KuLSIF}})$. 
As explained in Remark~\ref{rem:beta-const}, 
in the $f$-divergence approach, the optimal solution of $\beta$ is equal to
$\1_m/m\lambda$. 
Then, as shown in the proof of Theorem~\ref{thm:beta-const}, the
gradient of the loss function with respect to $\alpha$ is equal to 
\[
 g_\psi(\alpha)~=~\frac{1}{n}K_{11}v (\alpha,\1_m/m\lambda)+\lambda K_{11}\alpha, 
\]
where the function $v$ depends on $\psi$. On the other hand, the gradient of the loss
function in \eqref{eqn:reduced-KuLSIF} is equal to $K_{11}^{-1}g_\psi(\alpha)$ with 
$\psi(z)=z^2/2$. This fact implies that in KuLSIF, there exists a non-singular matrix 
$C\in \Real^{n\times n}$, which is independent of $\alpha$, such that $Cg_\psi(\alpha)$ is 
identical to the gradient of a function $F(\alpha)$. 
If the condition number of the Hessian matrix of $F(\alpha)$ does not exceed 
$\kappa(H_{\mathrm{KuLSIF}})$, it will be numerically more advantageous to use $F(\alpha)$
as the loss function than KuLSIF. 

Suppose that the $\Real^n$-valued function $C g_\psi(\alpha)$ can be represented as the
gradient of a function $F$, that is, $\nabla F=C g_\psi$. Then, the function $C g_\psi$ is
called {\em integrable} \cite{m03:_geomet_topol_and_physic_secon_edition}. 
What we study in this section is to find $\psi$ such that there exists a non-identity matrix
$C$ such that $C g_\psi(\alpha)$ is integrable. According to 
\emcite{m03:_geomet_topol_and_physic_secon_edition}, the necessary and sufficient condition
of integrability is that the Jacobian matrix of $C g_\psi(\alpha)$ is symmetric. 

The Jacobian matrix of $Cg_\psi(\alpha)$ is equal to 
\[
 \frac{1}{n}CK_{11} D_{\psi,\alpha} K_{11}+\lambda CK_{11}, 
\]
where $D_{\psi,\alpha}$ is the diagonal matrix in which the diagonal elements are given as  
\[
(D_{\psi,\alpha})_{ii}=
\psi''\bigg(\sum_{j=1}^n\alpha_j k(X_i,X_j)+\frac{1}{m\lambda}\sum_{\ell=1}^m
k(X_i,Y_\ell)\bigg),\quad i=1,\ldots,n. 
\]
Let $R$ be the $n$ by $n$ matrix $CK_{11}$, then, the Jacobian matrix is represented as
\[
 M_{\psi,R}(\alpha)~=~\frac{1}{n}R D_{\psi,\alpha} K_{11}+\lambda R. 
\]
\begin{theorem}
 \label{eqn:integrability}
 Let $c$ be a constant value in $\Real$, and the function $\psi$ be second-order
 continuously differentiable. 
 Suppose that the Gram matrix $K_{11}$ is non-singular, and that $K_{11}$ does not have
 zero element. 
 If there exists a non-singular matrix $R\neq c K_{11}$ such that $M_{\psi,R}(\alpha)$
 is symmetric for any $\alpha\in \Real^n$, then, $\psi''$ is a constant function. 
\end{theorem}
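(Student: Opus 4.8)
The plan is to exploit that symmetry of $M_{\psi,R}(\alpha)$ for \emph{every} $\alpha$ is a very rigid condition, and to extract its content in two stages: first the part carried by the $\alpha$-dependence through $D_{\psi,\alpha}$, and then the residual part that survives when that dependence is removed. Throughout I write $K:=K_{11}$, which is symmetric (a Gram matrix), and I note that $D_{\psi,\alpha}$ is diagonal. Using these two facts, the requirement $M_{\psi,R}(\alpha)=M_{\psi,R}(\alpha)^\top$ becomes
\[
 \tfrac{1}{n}R\,D_{\psi,\alpha}\,K+\lambda R~=~\tfrac{1}{n}K\,D_{\psi,\alpha}\,R^\top+\lambda R^\top .
\]
The key preliminary observation is that the $i$-th diagonal entry of $D_{\psi,\alpha}$ is $\psi''(u_i)$ with $u=K\alpha+c_0$ for a fixed vector $c_0$; since $K$ is non-singular, $u$ sweeps out all of $\Real^n$ as $\alpha$ does, so I may treat the diagonal of $D_{\psi,\alpha}$ as $(\psi''(u_1),\dots,\psi''(u_n))$ for an arbitrary $u\in\Real^n$.

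Next I would argue by contradiction, assuming $\psi''$ is \emph{not} constant. Then $\psi''$ takes two distinct values, so by choosing $u$ and a second argument $v$ that differ in a single coordinate I can realize the difference $\Delta=D^{(1)}-D^{(2)}$ as a nonzero multiple of any single diagonal elementary matrix $E_{ii}$. Subtracting the two corresponding instances of the displayed identity cancels the $\lambda$-terms and leaves $R\,\Delta\,K=K\,\Delta\,R^\top$; by linearity in $\Delta$ this then holds for \emph{all} diagonal $\Delta$. Equating the coefficient of each diagonal entry gives $R_{km}K_{ml}=K_{km}R_{lm}$ for all $k,l,m$. Setting $l=m$ and using that no entry of $K$ vanishes yields $R_{km}=\rho_m K_{km}$ with $\rho_m:=R_{mm}/K_{mm}$; that is, $R=K D_\rho$ where $D_\rho=\mathrm{diag}(\rho_1,\dots,\rho_n)$.

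Finally I would feed $R=K D_\rho$ back into the \emph{un}subtracted identity. Because $D_\rho$ and $D_{\psi,\alpha}$ are both diagonal they commute, so the two $\tfrac{1}{n}$-terms coincide and cancel, leaving only $\lambda K D_\rho=\lambda D_\rho K$, i.e. $K D_\rho=D_\rho K$. Comparing $(i,j)$ entries gives $K_{ij}\rho_j=\rho_i K_{ij}$, and since $K$ has no zero entry this forces $\rho_i=\rho_j$ for all $i,j$. Hence $D_\rho=cI$ and $R=cK_{11}$, contradicting $R\neq cK_{11}$. Therefore $\psi''$ must be constant, which is the claim.

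I expect the main obstacle to be the justification underlying the two reductions rather than the algebra itself: one must argue carefully that the diagonal $D_{\psi,\alpha}$ can be varied \emph{freely and coordinatewise}, which is exactly where non-singularity of $K_{11}$ (so that $u=K_{11}\alpha+c_0$ is surjective) and non-constancy plus continuity of $\psi''$ (so that a single coordinate can be perturbed to change one diagonal entry) both enter. The hypothesis that $K_{11}$ has no zero element is then used twice—once to define $\rho_m$ and pass from the coefficient relations to the column structure $R=K_{11}D_\rho$, and once more in the commutation step to conclude all $\rho_i$ coincide—so I would make sure each use is flagged explicitly.
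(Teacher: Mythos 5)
Your proof is correct, and it reaches the paper's conclusion along a recognizably parallel strategy: both arguments use nonsingularity of $K_{11}$ to vary the diagonal of $D_{\psi,\alpha}$ freely, both deduce that $R$ must equal $K_{11}$ times a diagonal matrix, and both then use the hypothesis that $K_{11}$ has no zero entry to force that diagonal matrix to be a scalar multiple of the identity, contradicting $R\neq cK_{11}$. The mechanics differ, though, in a way that buys you something. The paper first conjugates, studying $R^{-1}M_{\psi,R}(\alpha)(R^{\top})^{-1}=\frac{1}{n}\,\mathrm{diag}(s_1,\dots,s_n)K_{11}(R^{\top})^{-1}+\lambda (R^{\top})^{-1}$, and lets $s_i,s_j$ vary independently over an open set in the range of $\psi''$ to conclude that $K_{11}(R^{\top})^{-1}$ is diagonal and $(R^{\top})^{-1}$ is symmetric; this route genuinely uses the nonsingularity of $R$ (to conjugate) and the continuity of $\psi''$ (so that its range contains an open interval). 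You never invert anything: subtracting two instances of the symmetry identity that differ in a single coordinate isolates the part linear in the diagonal, $R\Delta K_{11}=K_{11}\Delta R^{\top}$ for all diagonal $\Delta$, which yields $R=K_{11}D_\rho$ by entrywise comparison, and substituting this back recovers the $\lambda$-part as the commutation relation $K_{11}D_\rho=D_\rho K_{11}$. Consequently your argument needs neither the nonsingularity of $R$ nor the continuity of $\psi''$ (two distinct values of $\psi''$ suffice), so it is marginally more elementary and slightly more general; the paper's conjugation, in exchange, makes the coefficient-matching step a one-line computation. One fine point shared by both endgames (your cancellation to $\lambda K_{11}D_\rho=\lambda D_\rho K_{11}$, the paper's passage from $\lambda t_{ij}=\lambda t_{ji}$ to $t_{ij}=t_{ji}$): each silently divides by $\lambda$, which is legitimate here because $\lambda$ is a strictly positive regularization parameter, but is worth flagging since the theorem statement itself does not mention $\lambda$.
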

The proof may be found in Appendix \ref{appendix:Proof-of-reduction-theorem}. 
 Theorem~\ref{eqn:integrability} guarantees that the condition number of the loss function is
 reducible only when $\psi$ is a quadratic function. Here, multiplying the gradient by a 
 matrix $C$, which is independent of $\alpha$, is allowed as transformation of
 the loss function. For
 other functions $\psi$, the gradient $Cg_{\psi,\alpha}$ cannot be integrable unless 
 $C= cI_n,\ c\in\Real$. 

\begin{remark}
 We summarize the theoretical results on condition numbers. 
 Let $H_{\text{$\psi$-div}}$ be 
 the Hessian matrix \eqref{eqn:Hessian_matrix_f-div} of the M-estimator. 
 Then, the following inequalities hold, 
 \begin{align*}
  \kappa(H_\mathrm{R-KuLSIF})   ~\leq~  &  
  \kappa(K_{11})  ~\leq~  \kappa(H_\mathrm{KuLSIF})  \leq~\kappa(H_\mathrm{KMM}),\\
  \kappa(H_\mathrm{KuLSIF})~=~&
  \sup_{w\in\mathcal{H}}\kappa(H_\mathrm{KuLSIF})  
  ~\leq~
  \sup_{w\in\mathcal{H}}\kappa(H_{\text{$\psi$-div}}). 
 \end{align*}
 Remember that $K_{11}$ is the Hessian matrix of the original (transductive) KMM method,
 and $H_\mathrm{KMM}$
 is its inductive variant. Based on probabilistic evaluation, the inequality 
 \begin{align*}
  \kappa(H_\mathrm{KuLSIF})  ~\leq~ \kappa(H_{\text{$\psi$-div}})
\end{align*}
 will also hold with high probability. 
 Let $H_{\text{$\psi$-KMM}}$ be the Hessian matrix of the loss function 
 $L_{\text{$\psi$-KMM}}$ in Remark~\ref{remark:f-KMM}. 
 Then, we conjecture that 
\begin{align*}
 \kappa(H_{\text{$\psi$-div}})~\leq~\kappa(H_{\text{$\psi$-KMM}})
\end{align*}
 holds in some sense as an extension of the relation between KuLSIF and 
 the inductive variant of KMM. 
 Consequently, R-KuLSIF will be advantageous in numerical computation. 
\end{remark}

\section{Simulation Results}\label{sec:Simulation_Results}
In this section, we experimentally investigate the behavior of the condition numbers.
In the inductive variant of KMM estimator, the Hessian matrix is given by $H_\mathrm{KMM}$ 
defined in \eqref{eqn:Hessian-KMM}. 
In the M-estimator based on $f$-divergence, the Hessian matrix involved in the
optimization problem is given as 
\[
 H=\frac{1}{n}K_{11}D_{\psi,w} K_{11} + \lambda K_{11}\in\Real^{n\times n}. 
\]
For the Kullback-Leibler divergence, we have $\varphi(z)=-\log z$ and
$\psi(z)=-1-\log(-z),\ z<0$, and thus, $\psi'(z)=-1/z$ and $\psi''(z)=1/z^2$ hold for
$z<0$. If the optimal solution provides the true density ratio $w_0$, we obtain
$\psi''(w(x))=\psi''((\psi')^{-1}(w_0(x)))=w_0(x)^2$. Thus, the Hessian matrix is given as 
\begin{align*}
 H_\mathrm{KL}
 ~=~&
 \frac{1}{n}K_{11}{\rm diag}(w_0(X_1)^2,\ldots,w_0(X_n)^2)K_{11}
 +\lambda K_{11}
 \in\Real^{n\times n}. 
\end{align*}
On the other hand, in KuLSIF, 
the Hessian matrix is given by 
$H_\mathrm{KuLSIF}$ defined in
\eqref{eqn:Hessian-KuLSIF}, and the Hessian matrix of R-KuLSIF,
$H_\mathrm{R-KuLSIF}$, is shown in \eqref{eqn:Hessian_RKuLSIF}. 
In examples of Section~\ref{sec:Probabilistic_Evaluation}, we considered the condition
number of a random matrix
\[
 H_\mathrm{RND}~=~
 \frac{1}{n}K_{11}\mathrm{diag}(d_1,\ldots,d_n)K_{11} + \lambda K_{11}
 \in\Real^{n\times n}. 
\]
We use $F_\gamma(d)$ defined in Example~\ref{example:power-decay} 
with various $\gamma$ as the distribution function of $d_1,\ldots, d_n$. 
The condition numbers of Hessian matrices, 
$H_\mathrm{KMM}, H_\mathrm{KL}, H_\mathrm{KuLSIF}, H_\mathrm{R-KuLSIF}$, and $H_\mathrm{RND}$
are numerically compared. In addition, the condition number of $K_{11}$ is also computed. 
In the original transductive KMM estimator defined by \eqref{eqn:kmm-opt-prob}, the condition
number of the loss function is equal to $\kappa(K_{11})$. 
Thus, the convergence rate of numerical optimization in KMM would be approximately governed
by $\kappa(K_{11})$---we need to take the constraints in \eqref{eqn:kmm-opt-prob} into
account to derive more accurate convergence rate of the original KMM. 

The probability densities of $P$ and $Q$ are set to be both the normal distribution on the
$10$-dimensional Euclidean space with the unit variance-covariance matrix $I_{10}$. 
The mean vectors of $P$ and $Q$ are set to $0\times \1_{10}$ and $\mu\times \1_{10}$ with 
$\mu=0.2$ or $\mu=0.5$, respectively. Note that the mean value $\mu$ affects only
$\kappa(H_{\rm KL})$. 
The true density ratio $w_0$ is determined by $P$ and $Q$. 
In the kernel-based estimators, we use the Gaussian kernel with width $\sigma=2$ or
$\sigma=4$. Note that $\sigma=4$ is close to the median of the distance between samples
$\|X_i-X_j\|$; using the median distance as the kernel width is a popular heuristics 
\cite{book:Schoelkopf+Smola:2002}. 
The sample size from $P$ is equal to that from $Q$, that is, $n=m$. 
The regularization parameter $\lambda$ is set to $\lambda_{n,m}=1/(n\wedge m)^{0.9}$ 
which meets the assumption in Theorem \ref{theorem:non-para-bound}. 

Table~\ref{tbl:condition-number-cmp}  
shows the experimental results. 
In each setup, samples $X_1,\ldots,X_n$ and diagonal elements $d_1,\ldots,d_n$ are
randomly generated and the condition number is computed. The table shows the average of the 
condition numbers over 1000 runs. 
As shown in Table \ref{tbl:condition-number-cmp}, the condition number of R-KuLSIF is much
smaller than the other methods for all cases. 
Thus, it is expected that in optimization, the convergence speed of R-KuLSIF is faster 
than the other methods and that R-KuLSIF is robust against numerical degeneracy. 
It will be worthwhile to point out that 
$\kappa(H_{\mathrm{R-KuLSIF}})$ is smaller than $\kappa(K_{11})$. This is because the
identity matrix in $H_{\mathrm{R-KuLSIF}}$ prevents the smallest eigenvalue
from becoming extremely small. The number of $\kappa(H_{\mathrm{RND}})$ is decreasing as 
$\gamma$ tends to large values, and seems to converge to
$\kappa(H_{\mathrm{KuLSIF}})$. This result meets the considerations in 
Remark~\ref{remark:KuLSIF-smallest-cond-num} and 
Example~\ref{example:power-decay}. 


Table~\ref{tbl:optimization-time} shows the average number of iterations and the
average computation time for solving the optimization problems over $50$ runs. 
The probability densities of $P$ and $Q$ are the same as above ones, and 
the mean vector of $Q$ is given as $0.5\times \1_{10}$. 
The numbers of samples are set to 
$(n,m)=(1000,1000),\,(4000,4000)$ or $(6000,6000)$, 
and the regularization parameter is $\lambda=1/(n\wedge m)^{0.9}$. 
The number of $n$ is equal to the number of parameters to be optimized. 
R-KuLSIF, KuLSIF, inductive variant of KMM (KMM), and M-estimator with Kullback-Leibler 
divergence (KL) are compared. In addition, the computation time of solving the linear
equation \eqref{eqn:uLSIF-linear-eq} is also shown as R-KuLSIF(direct). 
The kernel parameter $\sigma$ is determined based on the median of $\|X_i -X_j\|$. 
To solve the optimization problems in the M-estimators and KMM, we used the 
BFGS method implemented in the \texttt{optim} function in R \cite{R}, and 
for R-KuLSIF(direct) we use the \texttt{solve} function. 
The results show that the number of iterations in optimization is highly correlated 
with the condition number of the Hessian matrices in Table~\ref{tbl:condition-number-cmp}. 
Although the practical computational time would depend
on various issues such as stopping rules,
our theoretical results were shown to be in good agreement with
the empirical results.
Thus, the R-KuLSIF would be a stable and computationally efficient density-ratio
estimator. 
We observe that numerical optimization methods such as the quasi-Newton method are 
competitive with numerical algorithms for solving linear equations (such as the LU or
Cholesky methods), especially when the sample size or the number of parameters is large. 
Thus, our results obtained in this paper would be useful in large sample cases---common
situations in practical applications. 


\begin{table*}[tb]
\small 
\caption{Condition numbers of each Hessian matrix.}
\label{tbl:condition-number-cmp}
\centering
 \vspace*{2mm}
 \begin{tabular}{|c|r|r|r|r|r|r|r|r|r|}
  \hline
  & \multicolumn{9}{c|}{kernel width: $\sigma=2$} \\ \cline{2-10}
  & & & & & \multicolumn{2}{c|}{$H_\mathrm{KL}$} &
  \multicolumn{3}{c|}{$H_\mathrm{RND}$}\\ 
  \cline{6-10}
  $n$ & \multicolumn{1}{c|}{$K_{11}$} & 
  $H_\mathrm{R-KuLSIF}$ & $H_\mathrm{KuLSIF}$ & $H_\mathrm{KMM}$ & 
  \multicolumn{1}{c|}{$\mu=0.2$}  &
  \multicolumn{1}{c|}{$\mu=0.5$}  &   
  \multicolumn{1}{c|}{$\gamma=2$} &  
  \multicolumn{1}{c|}{$\gamma=5$} &
  \multicolumn{1}{c|}{$\gamma=10$} \\ \hline
  20  & 1.6e+01&3.8e+00&6.4e+01&2.7e+02&9.0e+01&1.4e+03&1.1e+02&7.4e+01&6.9e+01\\
  50  &	7.1e+01&8.1e+00&5.9e+02&5.1e+03&7.6e+02&4.8e+03&1.1e+03&7.1e+02&6.5e+02\\
 100  &	2.6e+02&1.5e+01&4.1e+03&6.5e+04&5.0e+03&2.7e+04&7.7e+03&5.0e+03&4.5e+03\\
 200  &	1.1e+03&3.0e+01&3.4e+04&1.0e+06&4.2e+04&1.6e+05&6.7e+04&4.2e+04&3.8e+04\\
 300  &	2.9e+03&4.4e+01&1.3e+05&5.7e+06&1.6e+05&5.8e+05&2.5e+05&1.6e+05&1.4e+05\\
 400  &	5.9e+03&5.8e+01&3.4e+05&2.0e+07&4.2e+05&1.5e+06&6.8e+05&4.3e+05&3.8e+05\\
 500  &	1.0e+04&7.3e+01&7.5e+05&5.5e+07&9.2e+05&3.1e+06&1.5e+06&9.4e+05&8.3e+05\\
 \hline
\multicolumn{7}{c}{}\\
  \hline
  & \multicolumn{9}{c|}{kernel width: $\sigma=4$} \\ \cline{2-10}
  & & & & & \multicolumn{2}{c|}{$H_\mathrm{KL}$} &
  \multicolumn{3}{c|}{$H_\mathrm{RND}$}\\
  \cline{6-10}
  $n$ & \multicolumn{1}{c|}{$K_{11}$} & 
  $H_\mathrm{R-KuLSIF}$ & $H_\mathrm{KuLSIF}$ & $H_\mathrm{KMM}$ & 
  \multicolumn{1}{c|}{$\mu=0.2$}  &
  \multicolumn{1}{c|}{$\mu=0.5$}  &   
  \multicolumn{1}{c|}{$\gamma=2$} &  
  \multicolumn{1}{c|}{$\gamma=5$} &
  \multicolumn{1}{c|}{$\gamma=10$} \\ \hline
  20  & 4.3e+02&1.2e+01&5.2e+03&6.3e+04&6.9e+03&2.8e+04&9.9e+03& 6.4e+03&5.7e+03\\
  50  &	4.2e+03&2.8e+01&1.2e+05&3.4e+06&1.6e+05&7.7e+05&2.3e+05& 1.5e+05&1.3e+05\\
 100  &	3.1e+04&5.5e+01&1.7e+06&9.6e+07&2.4e+06&1.2e+07&3.4e+06& 2.2e+06&1.9e+06\\
 200  &	2.6e+05&1.1e+02&2.8e+07&3.1e+09&3.9e+07&2.1e+08&5.6e+07& 3.5e+07&3.2e+07\\
 300  &	1.0e+06&1.6e+02&1.7e+08&2.7e+10&2.3e+08&1.2e+09&3.3e+08& 2.1e+08&1.9e+08\\
 400  &	3.0e+06&2.1e+02&6.3e+08&1.4e+11&8.7e+08&5.0e+09&1.3e+09& 7.9e+08&7.0e+08\\
 500  &	6.5e+06&2.7e+02&1.7e+09&4.6e+11&2.4e+09&1.3e+10&3.4e+09& 2.2e+09&1.9e+09\\\hline
 \end{tabular}
\end{table*}


\begin{table*}
\centering
\small
 \caption{Averages of the computation time and the number of iterations in
 the BFGS method over 50 runs.}
\label{tbl:optimization-time}
\centering
 \vspace*{2mm}
 \begin{tabular}{|l||r|r||r|r||r|r|}
  \hline
  & \multicolumn{2}{|c||}{$n=1000,\ m=1000$}
  & \multicolumn{2}{|c||}{$n=4000,\ m=4000$}
  & \multicolumn{2}{|c|}{$n=6000,\ m=6000$} \\\hline
  \multicolumn{1}{|c||}{Estimator}  & 
  \begin{tabular}{@{}c@{}}
    Comput.\\ time (sec.)
  \end{tabular}&
  \begin{tabular}{@{}c@{}}
    Number of\\iterations
  \end{tabular}&
  \begin{tabular}{@{}c@{}}
    Comput.\\ time (sec.)
  \end{tabular}&
  \begin{tabular}{@{}c@{}}
    Number of\\iterations
  \end{tabular}&
    \begin{tabular}{@{}c@{}}
    Comput.\\ time (sec.)
  \end{tabular}&
  \begin{tabular}{@{}c@{}}
    Number of\\iterations
  \end{tabular}\\  \hline
  R-KuLSIF         & 1.44 & 23.02 &   34.94& 29.98&  71.69& 30.74\\	
  KuLSIF           & 2.25 & 38.36 &   53.93& 48.76& 107.79& 47.32\\	
  KMM              &51.83 &453.68 &  591.44&400.74&1091.69&373.08\\	
  KL               &27.63 &329.06 & 1180.72&634.32&2718.89&669.20\\	
  R-KuLSIF(direct) & 0.46 &    -- &   28.85&    --&  87.06&    --\\ \hline 
  \multicolumn{7}{c}{(CPU: Xeon X5482, 3.20GHz, Memory: 32GB, OS: Linux 2.6.18)}
  \end{tabular}
\end{table*}

\section{Conclusions}\label{sec:Conclusion}
We considered the problem of estimating the ratio of two probability densities 
and investigated theoretical properties of the kernel least-squares estimator called
KuLSIF. 
We studied the condition number of Hessian matrices, and showed that KuLSIF has a smaller
condition number than the other methods. 
Since the condition number determines the convergence rate of optimization and the
numerical stability, KuLSIF will have a preferable numerical properties to the other
methods. 
We further showed that R-KuLSIF, which is an alternative formulation of KuLSIF, possesses
an even smaller condition number. 


Density ratio estimation could provide
new approaches to various machine learning problems
including covariate shift adaptation 
\cite{NIPS2006_915,NIPS:Sugiyama+etal:2008,kanamori09:_least_squar_approac_to_direc_impor_estim,bickel09:_discr_learn_under_covar_shift},
outlier detection 
\cite{ICDM:Hido+etal:2008},
and feature selection 
\cite{FSDM:Suzuki+etal:2008}. 
Based on the theoretical guidance given in this paper, we will develop practical
algorithms for a wide-range of applications in the future work.

\appendix
\section{Proof of Theorem~\ref{theorem:non-para-bound}}
\label{appendix:Proof_Convergence_Theorem}
Let us define the bracketing entropy of the set of functions. 
For distribution function $P$, define the $L_2$ metric
\[
 \|g\|_P~=~\left(\int |g|^2 dP\right)^{1/2}, 
\]
and let $L_2(P)$ be the metric space defined by this distance. For any fixed $\delta>0$,
a covering for function class $\mathcal{S}$ using the metric $L_2(P)$ is a collection of
functions which allow $\mathcal{S}$ to be covered using $L_2(P)$ balls of radius $\delta$
centered at these functions. Let $N_B(\delta,\mathcal{S},P)$ be the smallest value of $N$
for which there exist pairs of functions 
$\{(g_j^L,g_j^U)\in L_2(P)\times L_2(P)~|~j=1,\ldots,N\}$ such that 
$\|g_j^L-g_j^U\|_P \leq \delta$, and such that for each $s\in\mathcal{S}$, there exists $j$
such that $g_j^L\leq s\leq g_j^U$. Then, $H_B(\delta,\mathcal{S},P)=\log
N_B(\delta,\mathcal{S},P)$ is called the {\it bracketing entropy} of $\mathcal{S}$
\cite{Book:VanDeGeer:EmpiricalProcess}.

Let $\mathcal{H}$ be the RKHS endowed with the Gaussian kernels,
$k(x,y)=e^{-\|x-y\|^2/2\sigma^2}$. The norm and inner product on $\mathcal{H}$ are denoted
by $\|\cdot\|_{\mathcal{H}}$ and $\<\cdot,\cdot\>_{\mathcal{H}}$, respectively. 
 Let $\|\cdot\|_\infty$ be the infinity norm. 
 For $w\in\mathcal{H}$, we have $\|w\|_P\leq \|w\|_\infty\leq \|w\|_{\mathcal{H}}$, 
 because for any $x\in \ZC$, the inequalities 
 \[
 |w(x)|=|\<w,k(\cdot,x)\>_{\mathcal{H}}|
 \leq\|w\|_{\mathcal{H}}\,\sup_x k(x,x)
 =\|w\|_{\mathcal{H}}
 \]
 holds.
The set $\ZC$, which is the domain of functions in $\HC$, is assumed to be compact. Let
$\mathcal{G}=\{v^2~|~v\in\mathcal{H}\}$. Let $\HC_M$ and $\GC_M$ be  
\begin{align}
 \HC_M ~=~& \{v\in \HC~|~\|v\|_\mathcal{H}< M\},\nonumber\\
 \GC_M ~=~& \{v^2~|~v\in \HC_{\sqrt{M}}\}
 ~=~ \{g\in \GC~|~J(g)< M\}, \label{eqn:def-G_M}
\end{align}
where $J(g)$ is a measure of complexity defined as
\begin{align*}
 J(g)=\inf\,\{\|v\|_{\mathcal{H}}^2~|~v\in\HC,\, v^2=g\}.
\end{align*}
It is straightforward to verify the second equality of \eqref{eqn:def-G_M}. 
According to \emcite{zhou02:_cover_number_in_learn_theor}, the bracketing entropy 
of $\mathcal{H}_M$ satisfies, for infinitesimally small $\gamma>0$, the condition 
\begin{align}
 H_B(\delta,\HC_M,P) ~=~O\left(\frac{M}{\delta}\right)^\gamma. 
 \label{eqn:bracketing_entropy_Gaussian}
\end{align}
More precisely, \emcite{zhou02:_cover_number_in_learn_theor} have proved that 
the entropy number with the supremum norm is bounded above by $O((M/\delta)^\gamma)$.  
In addition, the bracketing entropy  
$H_B(\delta,\HC_M,P)$ is bounded above by the entropy number with the supremum norm due to
Lemma 2.1 in \emcite{Book:VanDeGeer:EmpiricalProcess}. 

The following proposition is crucial to prove the convergence property of KuLSIF.  
\begin{proposition}
[Lemma 5.14 in \emcite{Book:VanDeGeer:EmpiricalProcess}]
\label{theorem:Geer's-book}
Let a map $I(g)$ be a measure of complexity of $g\in\GC$, where $I$ is a non-negative
functional on $\GC$ and $I(g_0)<\infty$. Then, we define $\GC_M = \{g\in \GC~|~I(g)< M\}$
 satisfying $\GC=\cup_{M\geq 1}\GC_M$. 
Suppose that there exist $c_0>0$ and $0<\gamma<2$ such that 
\begin{align*}
 \sup_{g\in \mathcal{G}_M}\|g-g_0\|_P &\leq c_0 M,\quad
 \sup_{\substack{g\in \mathcal{G}_M\\ \|g-g_0\|_P\leq\delta}}
 \|g-g_0\|_\infty \leq c_0 M,\quad\ \text{for all $\delta>0$,}
\end{align*}
and that $H_B(\delta,\GC_M,P)=O\left(M/\delta\right)^\gamma$. 
 Then, we have 
 \begin{align*}
  \sup_{g\in \mathcal{G}}
  \frac{\bigg|\displaystyle\int (g-g_0)d(P-P_n)\bigg|}{D(g)} = O_p(1), 
 \end{align*}
where $D(g)$ is defined as 
\begin{align*}
 D(g)=\frac{\|g-g_0\|_P^{1-\gamma/2}I(g)^{\gamma/2}}{\sqrt{n}}
 \vee \frac{I(g)}{n^{2/(2+\gamma)}}
\end{align*}
and $a \vee b$ denotes $\max\{a, b\}$. 
\end{proposition}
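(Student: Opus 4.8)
The plan is to recognize this as a \emph{weighted} (ratio-type) uniform law for the centered empirical process $\Gamma_n(g):=\int(g-g_0)\,d(P-P_n)$, and to prove it by combining the \emph{peeling device} with a bracketing maximal inequality. The denominator $D(g)$ is precisely the order one expects for $\sup|\Gamma_n|$ over a slice $\{\|g-g_0\|_P\approx R,\ I(g)\approx M\}$: the first term $M^{\gamma/2}R^{1-\gamma/2}/\sqrt{n}$ is the sub-Gaussian modulus produced by the bracketing entropy, and the second term $M/n^{2/(2+\gamma)}=M\,\delta_n^{2}$ (with critical radius $\delta_n=n^{-1/(2+\gamma)}$) is the floor that takes over once $R$ is smaller than $\delta_n$. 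So it suffices to control $\sup|\Gamma_n|$ slice by slice with exponential tails and then sum the tails.

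First I would set up the peeling. Decompose $\GC=\bigcup_{s,j}\GC_{s,j}$ into dyadic shells
\[
 \GC_{s,j}=\{g\in\GC:\ 2^{s-1}<\|g-g_0\|_P\le 2^{s},\ 2^{j-1}<I(g)\le 2^{j}\}
\]
(plus boundary shells where $\|g-g_0\|_P$ or $I(g)$ falls below $1$). On $\GC_{s,j}$ the denominator is constant up to a factor of two, $D(g)\asymp D_{s,j}$, so
\begin{align*}
 \Pr\Big(\sup_{g\in\GC}\tfrac{|\Gamma_n(g)|}{D(g)}>T\Big)
 ~\le~ \sum_{s,j}\Pr\Big(\sup_{g\in\GC_{s,j}}|\Gamma_n(g)|>\tfrac{T}{2}\,D_{s,j}\Big).
\end{align*}
This reduces the ratio bound to a family of ordinary supremum bounds, one per shell.

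The core step is a bracketing maximal inequality on a single shell. Fixing $M=2^{j}$ and $R=2^{s}$, the two hypotheses $\sup_{g\in\GC_M}\|g-g_0\|_P\le c_0M$ and $\sup_{g\in\GC_M,\,\|g-g_0\|_P\le\delta}\|g-g_0\|_\infty\le c_0M$ supply, respectively, a variance bound and a uniform sup-norm envelope on the shell, which is exactly what a Bernstein-type bracketing inequality needs. Chaining against the entropy, the controlling quantity is
\begin{align*}
 \int_0^{R}\sqrt{H_B(u,\GC_M,P)}\,du
 ~=~ O\Big(\int_0^{R}(M/u)^{\gamma/2}\,du\Big)
 ~=~ O\big(M^{\gamma/2}R^{\,1-\gamma/2}\big),
\end{align*}
which converges at $u=0$ precisely because $\gamma<2$ and reproduces the first term of $D$; the sub-exponential part of Bernstein's inequality, after balancing the truncation level against the entropy, yields the floor $M/n^{2/(2+\gamma)}$. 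This gives a per-shell tail of the form $\Pr(\sup_{\GC_{s,j}}|\Gamma_n|>t\,D_{s,j})\le C\exp(-c\,t^2)$ once $t$ exceeds an absolute constant.

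Finally I would sum the shell-wise bounds: the two-term form of $D(g)$ (together with the linear-growth conditions $\|g-g_0\|_P\le c_0M$, $\|g-g_0\|_\infty\le c_0M$, which keep variance and envelope comparable inside each shell) is exactly what makes the double sum over the dyadic indices converge and tend to $0$ as $T\to\infty$, delivering $\sup_g|\Gamma_n(g)|/D(g)=O_p(1)$. The main obstacle is the single-shell maximal inequality: one must obtain \emph{both} terms of $D$ simultaneously, tracking the sub-Gaussian regime (the entropy integral) and the sub-exponential regime of Bernstein's inequality, and balancing the truncation so that the floor comes out with the exact exponent $2/(2+\gamma)$ dictated by the critical radius $\delta_n=n^{-1/(2+\gamma)}$. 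A secondary technical point is arranging the peeling so that the union bound converges uniformly in $(s,j)$; this is routine bookkeeping once the maximal inequality is in hand and is where the hypotheses $\gamma<2$ and the linear-growth bounds are consumed.
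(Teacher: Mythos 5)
The paper offers no proof of this proposition at all: it is imported as Lemma 5.14 of van de Geer's \emph{Empirical Processes in M-Estimation} and used as a black box in the proof of Theorem 1, so the only meaningful comparison is against the argument in that reference. Your sketch---peeling over dyadic shells in $\|g-g_0\|_P$ and $I(g)$, a Bernstein-type bracketing maximal inequality on each shell whose entropy integral $\int_0^R (M/u)^{\gamma/2}\,du = O\bigl(M^{\gamma/2}R^{1-\gamma/2}\bigr)$ converges precisely because $\gamma<2$, the floor $M/n^{2/(2+\gamma)}$ from the critical radius, and summation of exponential tails over shells---is exactly the standard route taken there, so your approach matches the source; the step you defer (a per-shell maximal inequality capturing both the sub-Gaussian and sub-exponential regimes with tails that remain summable over the double array of shells) is indeed where essentially all of the technical work resides.
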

 
We use Proposition~\ref{theorem:Geer's-book} to derive an upper bound of 
$\int(\what-w_0)d(Q-Q_m)$ and $\int(\what^2-w_0^2)d(P-P_n)$. 

\begin{lemma}
 The bracketing entropy of $\mathcal{G}_M$ is bounded above as 
\[
 H_B(\delta,\GC_M,P) ~=~O\left(\frac{M}{\delta}\right)^\gamma 
\]
 for any small $\gamma>0$. 
\end{lemma}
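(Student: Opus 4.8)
The plan is to transfer the bracketing-entropy bound \eqref{eqn:bracketing_entropy_Gaussian} for $\HC_M$ to the squared class $\GC_M=\{v^2\mid v\in\HC_{\sqrt M}\}$ by converting each bracket for $v$ into a bracket for $v^2$. The two facts I would rely on are: (i) for the Gaussian kernel $\sup_x k(x,x)=1$, so every $v\in\HC_{\sqrt M}$ satisfies $\|v\|_\infty\le\|v\|_{\HC}<\sqrt M$; and (ii) on $[-\sqrt M,\sqrt M]$ the map $t\mapsto t^2$ is Lipschitz with constant $2\sqrt M$, since $|t_1^2-t_2^2|=|t_1-t_2|\,|t_1+t_2|\le 2\sqrt M\,|t_1-t_2|$.

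First I would fix $\delta'>0$ and take, from \eqref{eqn:bracketing_entropy_Gaussian}, a minimal collection of brackets $\{(\ell_j,u_j)\}_{j=1}^{N}$ for $\HC_{\sqrt M}$ with $\|\ell_j-u_j\|_P\le\delta'$ and $N=N_B(\delta',\HC_{\sqrt M},P)$. Because every $v\in\HC_{\sqrt M}$ is bounded by $\sqrt M$ in sup norm, I may clip each endpoint to $[-\sqrt M,\sqrt M]$; this preserves the covering property and does not increase $\|\ell_j-u_j\|_P$. For each clipped bracket I would then define
\[
g_j^U(x)=\max\{\ell_j(x)^2,u_j(x)^2\},\qquad
g_j^L(x)=
\begin{cases}
0,& \ell_j(x)\le 0\le u_j(x),\\
\min\{\ell_j(x)^2,u_j(x)^2\},&\text{otherwise.}
\end{cases}
\]
By construction, whenever $\ell_j\le v\le u_j$ pointwise one has $g_j^L\le v^2\le g_j^U$, so $\{(g_j^L,g_j^U)\}$ covers $\GC_M$.

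Next I would bound the width of these brackets. At each $x$, both $g_j^L(x)$ and $g_j^U(x)$ are squares of points lying in $[\ell_j(x),u_j(x)]\subseteq[-\sqrt M,\sqrt M]$, so the Lipschitz estimate gives $|g_j^U(x)-g_j^L(x)|\le 2\sqrt M\,|u_j(x)-\ell_j(x)|$. Integrating yields $\|g_j^U-g_j^L\|_P\le 2\sqrt M\,\|u_j-\ell_j\|_P\le 2\sqrt M\,\delta'$. Choosing $\delta'=\delta/(2\sqrt M)$ therefore gives $N_B(\delta,\GC_M,P)\le N_B(\delta/(2\sqrt M),\HC_{\sqrt M},P)$, and substituting into \eqref{eqn:bracketing_entropy_Gaussian} (with radius $\sqrt M$ in place of $M$) produces
\[
H_B(\delta,\GC_M,P)\le H_B\!\Big(\tfrac{\delta}{2\sqrt M},\HC_{\sqrt M},P\Big)
=O\!\left(\frac{\sqrt M}{\delta/(2\sqrt M)}\right)^{\!\gamma}
=O\!\left(\frac{2M}{\delta}\right)^{\!\gamma}
=O\!\left(\frac{M}{\delta}\right)^{\!\gamma}.
\]

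The only delicate point is that squaring is not monotone, so the lower bracket $g_j^L$ must account for the sign of the endpoints (the case $\ell_j\le 0\le u_j$, where the infimum of $t^2$ is attained at $0$ rather than at an endpoint). The clipping step together with the bounded-domain Lipschitz bound is what makes the width estimate go through; beyond this sign bookkeeping the argument is routine.
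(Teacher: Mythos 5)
Your proof is correct and follows essentially the same route as the paper's: clip the brackets for $\HC_{\sqrt{M}}$ to $[-\sqrt{M},\sqrt{M}]$, square them with the same sign-aware lower bracket (your $\min\{\ell_j^2,u_j^2\}$ case coincides with the paper's endpoint cases), and bound the bracket width by $2\sqrt{M}$ times the original width, yielding $H_B(\delta,\GC_M,P)\leq H_B(\delta/(2\sqrt{M}),\HC_{\sqrt{M}},P)=O(M/\delta)^\gamma$. The only cosmetic difference is that you phrase the width bound as a Lipschitz estimate for $t\mapsto t^2$ on the clipped interval, while the paper writes out the equivalent inequality $|u^2-\ell^2|\leq(|u|+|\ell|)\,|u-\ell|\leq 2\sqrt{M}\,|u-\ell|$ directly.
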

\begin{proof}
 Let $v_1^L,v_1^U,v_2^L,v_2^U,\ldots,v_N^L,v_N^U\in L_2(P)$ be coverings of
 $\HC_{\sqrt{M}}$ in the sense of bracketing, such that $\|v_i^L-v_i^U\|_P\leq \delta$
 holds for $i=1,\ldots,N$. 
 We can choose these functions such that  
 $\|v_i^{L(U)}\|_\infty \leq \sqrt{M}$ 
 is satisfied for
 all $i=1,\ldots,N$, since for any $v\in \mathcal{H}_{\sqrt{M}}$, the inequality 
 $\|v\|_\infty \leq \|v\|_{\mathcal{H}}<\sqrt{M}$ holds. 
 For example, replace $v_i^{L(U)}$ with 
 $\min\{\sqrt{M}\,,\max\{-\sqrt{M}, v_i^{L(U)}\}\}\in L_2(P)$. 
 Let $\bar{v}_i^L$ and $\bar{v}_i^U$ be 
 \begin{align*}
  \bar{v}_i^L(x)
  &=
  \begin{cases}
   (v_i^L(x))^2 & v_i^L(x)\geq 0,\\
   (v_i^U(x))^2 & v_i^U(x)\leq 0,\\
    0           & v_i^L(x)<0<v_i^L(x),
  \end{cases}\\
  \bar{v}_i^U&=\max\{(v_i^L)^2,\,(v_i^U)^2\}, 
 \end{align*}
 for $i=1,\ldots,N$. 
 Then, $\bar{v}_i^L\leq \bar{v}_i^U$ holds. 
 Moreover, for any $v\in \mathcal{H}_{\sqrt{M}}$ satisfying $v_i^L\leq v\leq v_i^U$, we have 
 $\bar{v}_i^L\leq v^2\leq \bar{v}_i^U$. 
 By definition, we also have 
 \begin{align*}
  0
 ~\leq~ &
  \bar{v}_i^U(x)-\bar{v}_i^L(x)
 ~\leq~
  \max\{|v_i^U(x)^2-v_i^L(x)^2|,\ |v_i^U(x)-v_i^L(x)|^2\}\\
  ~\leq~ &
  (|v_i^U(x)|+|v_i^L(x)|)\cdot|v_i^U(x)-v_i^L(x)| 
  ~\leq~ 2\sqrt{M}|v_i^U(x)-v_i^L(x)|, 
 \end{align*}
 and thus, $\|\bar{v}_i^U-\bar{v}_i^L\|_P\leq 2\sqrt{M}\|v_i^U-v_i^L\|_P$ holds.  
 Due to \eqref{eqn:bracketing_entropy_Gaussian}, we obtain
\[
 H_B(2\sqrt{M}\delta,\GC_M,P) \leq H_B(\delta,\HC_{\sqrt{M}},P)
 =O\left(\frac{\sqrt{M}}{\delta}\right)^\gamma. 
\]
Hence, $H_B(\delta,\GC_M,P)=O\left(M/\delta\right)^\gamma$ holds. 
\end{proof} 

\begin{lemma}
\label{lemma:ULLN}
 Assume the condition of Theorem~\ref{theorem:non-para-bound}. 
 Then, for the KuLSIF estimator $\what$, we have 
 \begin{align*}
\bigg|\int (\what-w_0)d(Q-Q_m)\bigg|
 ~=~& O_p\left(
 \frac{\|w_0-\what\|_P^{1-\gamma/2}\|\what\|_{\mathcal{H}}^{\gamma/2}}{\sqrt{m}}
  \vee \frac{\|\what\|_{\mathcal{H}}}{m^{2/(2+\gamma)}}\right),  \\
  \bigg|\int (\what^2-w_0^2)d(P-P_n)\bigg|
 ~=~& 
  O_p\left(\frac{\|\what-w_0\|_P^{1-\gamma/2}
  (1+\|\what\|_{\mathcal{H}})^{1+\gamma/2}}{\sqrt{n}} 
  \vee 
  \frac{\|\what\|_{\mathcal{H}}^2}{n^{2/(2+\gamma)}}\right), 
 \end{align*}
 where $\gamma>0$ is an infinitesimally small value. 
\end{lemma}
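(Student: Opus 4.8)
The plan is to obtain both estimates as two instances of Proposition~\ref{theorem:Geer's-book} (van de Geer's Lemma 5.14), applied to two different function classes. For the first inequality I would take the class to be $\HC$ itself with $g_0=w_0$ and complexity measure $I(g)=\|g\|_{\mathcal{H}}$; for the second I would take the class $\GC=\{v^2\mid v\in\HC\}$ with $g_0=w_0^2$ and complexity measure $I(g)=J(g)$. The requirement $I(g_0)<\infty$ holds in both cases by the hypothesis $w_0\in\HC$, since $I(w_0)=\|w_0\|_{\mathcal{H}}<\infty$ and $I(w_0^2)=J(w_0^2)\leq\|w_0\|_{\mathcal{H}}^2<\infty$. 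The entropy hypotheses are already at hand: \eqref{eqn:bracketing_entropy_Gaussian} supplies $H_B(\delta,\HC_M,P)=O(M/\delta)^\gamma$ and the preceding lemma supplies $H_B(\delta,\GC_M,P)=O(M/\delta)^\gamma$; moreover, because these coverings are constructed in the supremum norm, the same bounds hold for any base measure.

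For the first bound I would instantiate Proposition~\ref{theorem:Geer's-book} with the roles of $P,P_n,n$ replaced by $Q,Q_m,m$. The two envelope conditions follow from the RKHS estimate $\|v\|_\infty\leq\|v\|_{\mathcal{H}}$ on the compact domain $\ZC$: for $g\in\HC_M$ one has $\|g-w_0\|_Q\leq\|g-w_0\|_\infty\leq M+\|w_0\|_{\mathcal{H}}=O(M)$, and the sup-norm condition is identical. The proposition then yields $\bigl|\int(\what-w_0)d(Q-Q_m)\bigr|=O_p(D(\what))$ with $D(\what)=\|\what-w_0\|_Q^{1-\gamma/2}\|\what\|_{\mathcal{H}}^{\gamma/2}/\sqrt{m}\vee\|\what\|_{\mathcal{H}}/m^{2/(2+\gamma)}$. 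Finally, since $w_0=q/p$ is bounded on the compact $\ZC$, the elementary relation $\|g\|_Q^2=\int|g|^2 w_0\,dP\leq\|w_0\|_\infty\,\|g\|_P^2$ lets me replace $\|\what-w_0\|_Q$ by a constant multiple of $\|\what-w_0\|_P$, which is absorbed into $O_p$ and gives exactly the stated expression.

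For the second bound I would apply Proposition~\ref{theorem:Geer's-book} directly (with $P,P_n,n$) to $\GC$. For $g=v^2\in\GC_M$ one has $\|v\|_\infty\leq\|v\|_{\mathcal{H}}<\sqrt{M}$, so $\|v^2-w_0^2\|_\infty\leq\|v+w_0\|_\infty\|v-w_0\|_\infty\leq(\sqrt{M}+\|w_0\|_\infty)^2=O(M)$, and the $L_2$ envelope is dominated by the same quantity; thus both envelope conditions hold. This produces $\bigl|\int(\what^2-w_0^2)d(P-P_n)\bigr|=O_p(D(\what^2))$ with $D(\what^2)=\|\what^2-w_0^2\|_P^{1-\gamma/2}J(\what^2)^{\gamma/2}/\sqrt{n}\vee J(\what^2)/n^{2/(2+\gamma)}$. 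I would then rewrite $\|\what^2-w_0^2\|_P=\|(\what-w_0)(\what+w_0)\|_P\leq(\|\what\|_{\mathcal{H}}+\|w_0\|_{\mathcal{H}})\,\|\what-w_0\|_P$ and use $J(\what^2)\leq\|\what\|_{\mathcal{H}}^2$; combining these with $\|\what\|_{\mathcal{H}}^{\gamma}\leq(1+\|\what\|_{\mathcal{H}})^{\gamma}$ collapses the first-term numerator to $\|\what-w_0\|_P^{1-\gamma/2}(1+\|\what\|_{\mathcal{H}})^{1+\gamma/2}$ (the exponents add as $1-\gamma/2+\gamma=1+\gamma/2$) and the second term to $\|\what\|_{\mathcal{H}}^2/n^{2/(2+\gamma)}$, matching the claim.

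The two applications of the proposition and the concluding algebra are routine. The delicate points I anticipate are (i) verifying the envelope conditions \emph{uniformly} over the unbounded sets $\HC_M$ and $\GC_M$ rather than over a fixed bounded set, which is exactly where the boundedness of RKHS functions on the compact $\ZC$ is indispensable, and (ii) transferring the entropy and envelope hypotheses from $P$ to $Q$ in the first bound, for which the mutual absolute continuity of $P$ and $Q$ with bounded ratio $w_0$ (together with the supremum-norm nature of the coverings) is the key fact.
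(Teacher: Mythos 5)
Your proposal is correct and follows essentially the same route as the paper's own proof: verify the two envelope conditions for $\HC_M$ and $\GC_M$ via the RKHS bound $\|v\|_\infty\le\|v\|_{\mathcal{H}}$, invoke Proposition~\ref{theorem:Geer's-book} once for each empirical process, and finish with the same algebra using $\|\what^2-w_0^2\|_P = O\big((1+\|\what\|_{\mathcal{H}})\|\what-w_0\|_P\big)$ and $J(\what^2)\le\|\what\|_{\mathcal{H}}^2$. The only difference is that you make explicit the transfer from $\|\cdot\|_Q$ to $\|\cdot\|_P$ (and the measure-independence of the sup-norm-based entropy bound) in the first estimate, a step the paper leaves implicit by stating its conditions and conclusion directly in terms of $\|\cdot\|_P$; this is a clarification of the same argument, not a different one.
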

\begin{proof}
 There exists $c_0>0$ such that 
 \begin{align}
  &\sup_{w\in \mathcal{H}_M} \|w-w_0\|_P ~\leq~ c_0 M,\quad
  \sup_{\substack{w\in \mathcal{H}_M\\ \|w-w_0\|_P\leq \delta}}
  \|w-w_0\|_\infty  ~\leq~ c_0 M, \label{eqn:condition-conv-v}\\
  &\sup_{g\in \mathcal{G}_M} \|g-w_0^2\|_P ~\leq~ c_0 M,\quad
  \sup_{\substack{g\in \mathcal{G}_M\\ \|g-w_0^2\|_P\leq \delta}} 
  \|g-w_0^2\|_\infty  ~\leq~ c_0 M. \label{eqn:condition-conv-f(v)}
 \end{align}
The inequalities in \eqref{eqn:condition-conv-f(v)} are derived as follows. 
 For $g\in \mathcal{G}_M$, there exists $v\in\mathcal{H}$ such that $v^2=g$ and
 $\|v\|_{\mathcal{H}}^2<M$, and then, we have
 \begin{align*}
  \|g-w_0^2\|_P
&  ~\leq~ \|g-w_0^2\|_\infty 
  ~\leq~ \|v\|_\infty^2 + \|w_0\|_\infty^2\\
&  ~\leq~ \|v\|_{\mathcal{H}}^2 + \|w_0\|_\infty^2  
  ~\leq~ M + \|w_0\|_\infty^2
  ~\leq~ c_0 M,\ \  (M\geq 1). 
 \end{align*}
 In the same way,
 \eqref{eqn:condition-conv-v} also holds. 
 Therefore, due to 
 Proposition~\ref{theorem:Geer's-book} 
 and \eqref{eqn:condition-conv-v}, we have 
 \begin{align*}
  \sup_{w\in\mathcal{H}}\ \frac{\displaystyle\left|\int(w_0-w)d(Q-Q_m)\right|}{D(w)} ~=~O_p(1),
 \end{align*}
 where $D(w)$ is defined as
 \begin{align*}
  D(w) ~=~
  \frac{\|w_0-w\|_P^{1-\gamma/2}\|w\|_{\mathcal{H}}^{\gamma/2}}{\sqrt{m}}
  \vee \frac{\|w\|_{\mathcal{H}}}{m^{2/(2+\gamma)}}. 
  \end{align*}
 In the same way, we have
 \begin{align*}
  \sup_{w\in\mathcal{H}}\ 
  \frac{\displaystyle\bigg|\int (w^2-w_0^2)d(P-P_n)\bigg|}{E(w)}
  ~=~O_p(1), 
 \end{align*}
 where $E(w)$ is defined as
 \begin{align*}
  E(w) ~=~
  \frac{\|w^2-w_0^2\|_P^{1-\gamma/2}J(w^2)^{\gamma/2}}{\sqrt{n}}
  \vee  \frac{J(w^2)}{n^{2/(2+\gamma)}}. 
  \end{align*}
 Note that 
 $\|w^2-w_0^2\|_P ~\leq~ 
 (\|w_0\|_\infty+\|w\|_\mathcal{H})\|w-w_0\|_P
 ~=~ O((1+\|w\|_\mathcal{H})\|w-w_0\|_P)$
and 
 $J(w^2)\leq \|w\|_{\mathcal{H}}^2$. Then, we obtain
 \begin{align*}
  E(w) 
  ~\leq~&
  \frac{\|w-w_0\|_P^{1-\gamma/2}
  (1+\|w\|_{\mathcal{H}})^{1+\gamma/2}}{\sqrt{n}}
  \vee 
  \frac{\|w\|_{\mathcal{H}}^2}{n^{2/(2+\gamma)}}. 
  \end{align*}
\end{proof}

Now we show the proof of Theorem~\ref{theorem:non-para-bound}. 
\begin{proof}
The estimator $\what$ satisfies the inequality
\begin{align*}
 \frac{1}{2}\int \what^2 dP_n-\int\what dQ_m+\frac{\lambda}{2}\|\what\|_{\mathcal{H}}^2
 ~\leq~  
 \frac{1}{2}\int w_0^2 dP_n-\int w_0 dQ_m+\frac{\lambda}{2}\|w_0\|_{\mathcal{H}}^2.
\end{align*}
Then, we have
\begin{align*}
 \frac{1}{2}\|\what-w_0\|^2_{P}
 ~=~ &
 \int (w_0-\what)dQ+\frac{1}{2}\int(\what^2-w_0^2)dP\\
 ~\leq~ &
 \int (w_0-\what)dQ+\frac{1}{2}\int(\what^2-w_0^2)dP\\
 +&\int (\what -w_0)dQ_m + 
 \frac{1}{2}\int (w_0^2-\what^2)dP_n
 +\frac{\lambda}{2}\|w_0\|_{\mathcal{H}}^2-\frac{\lambda}{2}\|\what\|_{\mathcal{H}}^2 .
 \end{align*}
 As a result, we have
 \begin{align*}
  &\frac{1}{2}\|\what-w_0\|^2_{P}+\frac{\lambda}{2}\|\what\|_{\mathcal{H}}^2 \\
 ~\leq~ &    
  \bigg|\int (\what-w_0)d(Q-Q_m)\bigg|+
  \frac{1}{2}\bigg|\int (\what^2-w_0^2)d(P-P_n)\bigg|
  +\frac{\lambda}{2}\|w_0\|_{\mathcal{H}}^2\\
 ~\leq~ &      
  \frac{\lambda}{2}\|w_0\|_{\mathcal{H}}^2
  +O_p\left(
  \frac{\|w_0-\what\|_P^{1-\gamma/2}(1+\|\hatw\|_{\mathcal{H}})^{1+\gamma/2}}{\sqrt{n\wedge m}}
  \vee \frac{(1+\|\what\|_{\mathcal{H}})^2}{(n\wedge m)^{2/(2+\gamma)}}
  \right), 
 \end{align*}
 where Lemma~\ref{lemma:ULLN} is used. 

We need to study three possibilities: 
\begin{align}
 &
 \frac{1}{2}\|w_0-\what\|_{P}^2+\frac{\lambda}{2}\|\what\|^2_{\cal H}  
 ~\leq~ O_p(\lambda),\label{eqn:case-1}\\
&
 \frac{1}{2}\|w_0-\what\|_{P}^2+\frac{\lambda}{2}\|\what\|^2_{\cal H}  
 ~\leq~ O_p\left(
 \frac{\|w_0-\what\|_{P}^{1-\gamma/2} 
 (1+\|\what\|_{\mathcal{H}})^{1+\gamma/2}}{\sqrt{n\wedge m}}
 \right),
\label{eqn:case-2}\\
&
 \frac{1}{2}\|w_0-\what\|_{P}^2+\frac{\lambda}{2}\|\what\|^2_{\cal H}
 ~\leq~ O_p\left( \frac{(1+\|\what\|_\mathcal{H})^2} 
 {(n\wedge m)^{2/(2+\gamma)}}\right).\label{eqn:case-3}
\end{align}
One of the above inequalities should be satisfied. We study each inequality below. 

\noindent
{\bf Case \eqref{eqn:case-1}}: we have
\begin{align*}
 \frac{1}{2}\|w_0-\what\|_{P}^2 \leq O_p(\lambda),\ \ \ 
 \frac{\lambda}{2}\|\what\|^2_{\cal H}  ~\leq~ O_p(\lambda), 
\end{align*} 
 and hence the inequalities 
 $\|w_0-\what\|_{P} \leq O_p(\lambda^{1/2})$ and $\|\what\|_{\cal H} \leq O_p(1)$ 
 hold. 

\noindent 
{\bf Case \eqref{eqn:case-2}}: we have
\begin{align*}
 \|w_0-\what\|_{P}^2 
 &~\leq~ 
 O_p\left(
 \frac{\|w_0-\what\|_{P}^{1-\gamma/2}(1+\|\what\|_{\mathcal{H}})^{1+\gamma/2}}
 {(n\wedge m)^{1/2}} \right),\\
 \lambda\|\what\|_{\cal H}^2 
  &~\leq~ 
 O_p\left(
 \frac{\|w_0-\what\|_{P}^{1-\gamma/2}(1+\|\what\|_{\mathcal{H}})^{1+\gamma/2}}
 {(n\wedge m)^{1/2}} \right). 
\end{align*}
The first inequality provides
 \begin{align*}
 \|w_0-\what\|_{P} ~\leq~ 
  O_p\left(
  \frac{1+\|\widehat{w}\|_{\mathcal{H}}}{(n\wedge m)^{1/(2+\gamma)}}
  \right). 
 \end{align*}
 Thus, the second inequality leads to
\begin{align*}
 \lambda\|\what\|_{\mathcal{H}}^2
 ~\leq~&
  O_p\left(
 \frac{\|w_0-\what\|_{P}^{1-\gamma/2}(1+\|\what\|_{\mathcal{H}})^{1+\gamma/2}}
 {(n\wedge m)^{1/2}} 
 \right)\\
 ~\leq~&
  O_p\left(
 \left(\frac{1+\|\widehat{w}\|_{\mathcal{H}}}{(n\wedge m)^{1/(2+\gamma)}}\right)^{1-\gamma/2}
 \frac{(1+\|\what\|_{\mathcal{H}})^{1+\gamma/2}}
 {(n\wedge m)^{1/2}} 
 \right)\\
  ~=~&
 O_p\left(
 \frac{(1+\|\what\|_{\mathcal{H}})^2}{(n\wedge m)^{2/(2+\gamma)}}
 \right). 
\end{align*}
 Hence, we have
 \begin{align*}
  \|\what\|_{\mathcal{H}}~\leq~
  O_p\left(\frac{1}{\lambda^{1/2}(n\wedge m)^{1/(2+\gamma)}}\right)~=~o_p(1) 
 \end{align*}
 for infinitesimally small $\gamma>0$. Then, we obtain
 \begin{align*}
  \|w_0-\what\|_{P} ~\leq~
  O_p\left(\frac{1}{(n\wedge m)^{1/(2+\gamma)}}\right)~\leq~O_p(\lambda^{1/2}). 
 \end{align*}

\noindent
{\bf Case \eqref{eqn:case-3}}: we have
\begin{align*}
 \|w_0-\what\|_{P}^2 ~\leq~ 
 O_p\left( \frac{(1+\|\what\|_\mathcal{H})^2} 
 {(n\wedge m)^{2/(2+\gamma)}}\right),
 \qquad 
 \lambda\|\what\|_{\cal H}^2 ~\leq~ 
 O_p\left( \frac{(1+\|\what\|_\mathcal{H})^2} 
 {(n\wedge m)^{2/(2+\gamma)}}\right).
\end{align*}
 Then, as shown in the case \eqref{eqn:case-2}, we have
 $\|\what\|_{\mathcal{H}}=o_p(1)$. Hence, we obtain
 \begin{align*}
  \|w_0-\what\|_{P} 
  ~\leq~ 
  O_p\left(\frac{1}{(n\wedge m)^{1/(2+\gamma)}}\right)
  ~\leq~ O_p(\lambda^{1/2}). 
 \end{align*}
\end{proof}

\section{Leave-one-out Cross-validation of KuLSIF}
\label{appendix:LOOCV}
The procedure to compute 
the leave-one-out cross-validation score of KuLSIF is presented here. 
Let $K_{11}^{(\ell)}\in \Real^{(n-1)\times(n-1)}$ and
$K_{12}^{(\ell)}=K_{21}^{(\ell)\top}\in\Real^{(n-1)\times (m-1)}$ be the Gram matrices of
samples except $x_\ell$ and $y_\ell$, respectively.
According to Theorem \ref{thm:beta-const}, the
estimated parameters $\widetilde{\alpha}^{(\ell)}$ and $\widetilde{\beta}^{(\ell)}$ of 
\[
\what^{(\ell)}(z)=\sum_{i\neq \ell}\alpha_i k(z,X_i)+\sum_{j\neq \ell}\beta_j k(z,Y_j)
\]
is equal to 
\begin{align*}
 \widetilde{\alpha}^{(\ell)}~=~-\frac{1}{(m-1)\lambda}(K_{11}^{(\ell)}+(n-1)\lambda I_{n-1})^{-1}
 K_{12}^{(\ell)}\1_{m-1},\quad
 \widetilde{\beta}^{(\ell)}~=~\frac{1}{(m-1)\lambda}\1_{m-1}, 
\end{align*}
where $I_{n-1}$ denotes the $(n-1)$ by $(n-1)$ identity matrix. Hence, the parameter 
$\widetilde{\alpha}^{(\ell)}$ is the solution of the following convex  quadratic problem, 
\begin{align}
 \min_\alpha\ \frac{1}{2}\alpha^\top (K_{11}^{(\ell)}+(n-1)\lambda I_{n-1})\alpha
 +\frac{1}{(m-1)\lambda}\1_{m-1}^\top K_{21}^{(\ell)}\alpha,\ \ \ \alpha\in\Real^{n-1}. 
 \label{eqn:loocv-alpha-est}
\end{align}
The same solution can be obtained by solving 
\begin{align}
 \begin{array}{l}
  \displaystyle
 \min_\alpha\ \frac{1}{2}\alpha^\top (K_{11}+(n-1)\lambda I_n)\alpha
 +\frac{1}{(m-1)\lambda}(\1_{m}-\e_{m,\ell})^\top K_{21}\alpha,\\
  \displaystyle
  \qquad \st\ \alpha\in\Real^{n},\  \alpha_\ell=0, 
 \end{array}
 \label{eqn:loocv-alpha-est}
\end{align}
where $\e_{m,\ell}\in\Real^m$ is the standard unit vector with only the $\ell$-th
component being 1. The optimal solution of \eqref{eqn:loocv-alpha-est} denoted by
$\alpha^{(\ell)}$ is equal to 
\begin{align*}
 \alpha^{(\ell)}=
 (K_{11}+(n-1)\lambda I_n)^{-1}\left(-\frac{1}{(m-1)\lambda}
 K_{12}(\1_m-\e_{m,\ell})-c_\ell \e_{n,\ell}\right), 
\end{align*}
where $c_\ell$ is determined so that $\alpha^{(\ell)}_\ell=0$. 
The estimator $\widetilde{\alpha}^{(\ell)}\in\Real^{n-1}$ is equal to the $(n-1)$-dimensional
vector consisting of $\alpha^{(\ell)}$ except the $\ell$-th component,
i.e.,
$\widetilde{\alpha}^{(\ell)}=(\alpha^{(\ell)}_1,\ldots,\alpha^{(\ell)}_{\ell-1},
\alpha^{(\ell)}_{\ell+1},\ldots,\alpha^{(\ell)}_{n})^\top$. 

The parameters of the leave-one-out estimator, 
\[
A=(\alpha^{(1)},\ldots,\alpha^{(n\wedge m)})\in\Real^{n\times (n\wedge m)},\quad
B=(\beta^{(1)},\ldots,\beta^{(n\wedge m)})\in\Real^{m\times (n\wedge m)}
\]
also have analytic expressions. 
Let $G\in\Real^{n\times n}$ be $G=(K_{11}+(n-1)\lambda I_n)^{-1}$, and 
$E\in\Real^{m\times (n\wedge m)}$ be the matrix defined as 
\[
 E_{ij}=\begin{cases}
	 1& i\neq j,\\
	 0& i=j.
	\end{cases} 
\]
Let $S\in \Real^{n\times (n\wedge m)}$ be 
\[
 S=-\frac{1}{(m-1)\lambda}K_{12}E, 
\]
and $T\in \Real^{n\times (n\wedge m)}$ be 
\[
 T_{ij}=\begin{cases}
	 \displaystyle	 \frac{(GS)_{ii}}{G_{ii}}& i=j,\\
	 \displaystyle	 0        & i\neq j.
	\end{cases}
\]
Then, we obtain 
\begin{align*}
 A=G(S-T),\quad 
 B=\frac{1}{(m-1)\lambda}E. 
\end{align*}
Let $K_X\in \Real^{(n\wedge m)\times (n+m)}$ be the sub-matrix of $(K_{11} K_{12})$ formed
by the first $n\wedge m$ rows and all columns. Similarly, let 
$K_Y\in \Real^{(n\wedge m)\times (n+m)}$ be the sub-matrix of $(K_{21} K_{22})$ formed by
the first $n\wedge m$ rows  and all columns. 
Let the product $U*U'$ be the element-wise 
multiplication of matrices $U$ and $U'$ of the same size, i.e., the $(i,j)$ element is
given by $U_{ij}U'_{ij}$. Then, we have
\begin{align*}
 \what_X&=(\what^{(1)}(X_1),\ldots,\what^{(n\wedge m)}(X_{n\wedge m}))^\top
 =(K_X * (A^\top\ B^\top))\1_{n+m},\\ 
 \what_Y&=(\what^{(1)}(Y_1),\ldots,\what^{(n\wedge m)}(Y_{n\wedge m}))^\top
 =(K_Y * (A^\top\ B^\top))\1_{n+m}, \\
 \what_{X+}&=
 (\what^{(1)}_+(X_1),\ldots,\what^{(n\wedge m)}_+(X_{n\wedge m}))^\top =\max\{\what_X,0\},\\
 \what_{Y+}&=
 (\what^{(1)}_+(Y_1),\ldots,\what^{(n\wedge m)}_+(Y_{n\wedge m}))^\top
 =\max\{\what_Y,0\},
\end{align*}
where the max operation for a vector is applied in the element-wise manner. As a result,
LOOCV 
\eqref{eqn:LOOCV} is equal to
\[
 {\rm LOOCV}~=~\frac{1}{n\wedge m}
 \left\{\frac{1}{2}\what_{X+}^\top\what_{X+}-\1_{n\wedge m}^\top\what_{Y+}\right\}. 
\]

\section{Proof of Eq.\,\eqref{eqn:trade-off_accuracy_speed}}
\label{appendix:condition-number-equality}
Let $\kappa(A)$ be the condition number of the symmetric positive definite matrix $A$,
then 
we prove that the following equality
\begin{align*}
 \min_{S:\kappa(S)\leq C}\kappa(SAS) ~=~\max\left\{\frac{\kappa(A)}{C^2},\ 1\right\}
\end{align*}
holds. 
The same equality holds for the condition number defined through singular values for 
non-symmetric matrices. We prove the case that $S$ is a symmetric matrix for simplicity. 
Note that $\kappa(S^2)=\kappa(S)^2$ and $\kappa(S)=\kappa(S^{-1})$, thus we obtain 
Eq. \eqref{eqn:trade-off_accuracy_speed}, i.e.
\begin{align*}
 \min_{S:\kappa(S)\leq C}\kappa(S^{-1/2}AS^{-1/2}) ~=~\max\left\{\frac{\kappa(A)}{C},\
 1\right\}. 
\end{align*}
\begin{proof}
 First, we prove 
 $\min_{S:\kappa(S)\leq C}\kappa(SAS) ~\geq ~\max\{\frac{\kappa(A)}{C^2},\ 1\}$. 

 The matrix $A$ is symmetric positive definite, thus, there exists an orthogonal matrix
 $Q$ and a diagonal matrix $\Lambda=\mathrm{diag}(\lambda_1,\ldots,\lambda_n)$ such that 
 $A=Q\Lambda Q^\top$. 
 The eigenvalues are arranged in the decreasing order, i.e.,
 $\lambda_1\geq \lambda_2\geq\cdots\geq \lambda_n>0$. 
 In the similar way, let $S$ be $PDP^\top$, where $P$ is an orthogonal matrix and 
 $D=\mathrm{diag}(d_1,\ldots,d_n)$ is a diagonal matrix such that 
 $d_1\geq d_2\geq \cdots \geq d_n>0$ and $d_1/d_n\leq C$. Hence, 
 \begin{align*}
  \kappa(SAS)  = \kappa(PDP^\top Q\Lambda Q^\top PDP^\top)
  = \kappa(DP^\top Q\Lambda Q^\top PD). 
 \end{align*}
 Let $Q^\top P$ be $R^\top$ which is also an orthogonal matrix. 
 The maximum eigenvalue of $DR \Lambda R^\top D$ is given as 
 \begin{align*}
  \max_{\|\x\|=1}\x^\top DR\Lambda R^\top D\x. 
 \end{align*}
 Let $R=(\r_1,\ldots,\r_n)$, where $\r_i\in\Real^n$, and we choose $\x_1$ such that 
 $\r_i^\top D\x_1=0$ for $i=2,\ldots,n$ and $\|\x_1\|=1$. Then, 
  \begin{align*}
  \max_{\|\x\|=1}\x^\top D R\Lambda R^\top D\x
   ~\geq~
   \x_1^\top D R\Lambda R^\top D\x_1
   ~=~ 
   \lambda_1(\x_1^\top D \r_1)^2. 
 \end{align*}
 From the assumption on $\x_1$, $D\x_1$ is represented as $c\r_1$ for some $c$, and we have
 $(\x_1^\top D \r_1)^2 = c^2=\x_1^\top D^2\x_1\geq d_n^2$. 
 Hence, we have
 \begin{align*}
  \max_{\|\x\|=1}\x^\top SAS\x \geq \lambda_1d_n^2. 
 \end{align*}
 
 On the other hand, the minimum eigenvalue of $DR \Lambda R^\top D$ is given as 
 \begin{align*}
  \min_{\|\x\|=1}\x^\top DR\Lambda R^\top D\x. 
 \end{align*}
 We choose $\x_n$ such that 
 $\r_i^\top D\x_n=0$ for $i=1,\ldots,n-1$. Then, 
  \begin{align*}
  \min_{\|\x\|=1} \x^\top D R\Lambda R^\top D\x
   ~\leq~&
   \x_n^\top D R\Lambda R^\top D\x_n\\
   ~=~ &
   \lambda_n(\x_n^\top D \r_n)^2 \\
   ~\leq~ &
   \lambda_n \x_n^\top D^2\x_n\quad \text{(Schwarz inequality)}\\
   ~\leq~ &
   \lambda_n d_1^2. 
  \end{align*}
As a result, the condition number of $SAS$ is bounded below as
\begin{align*}
 \kappa(SAS) 
 \geq \frac{\lambda_1d_n^2}{\lambda_nd_1^2}
 =  \frac{\kappa(A)}{(d_1/d_n)^2}
 \geq  \frac{\kappa(A)}{C^2}. 
\end{align*}

 Next, we prove
 $\min_{S:\kappa(S)\leq C}\kappa(SAS)\leq \max\{\frac{\kappa(A)}{C^2},\ 1\}$. 
 If $\kappa(A)\leq C^2$, the inequality $\min_{S:\kappa(S)\leq C}\kappa(SAS)=1$ holds, 
 because we can choose $S=A^{-1/2}$. Then, we prove 
 $\min_{S:\kappa(S)\leq C}\kappa(SAS)\leq \frac{\kappa(A)}{C^2}$, if 
 $1\leq C^2\leq \kappa(A)$ is satisfied. 

 Let $S=Q\Gamma Q^\top$ with $\Gamma$ be a diagonal matrix
 $\mathrm{diag}(\gamma_1,\ldots,\gamma_n)$,
 then $\kappa(SAS)=\kappa(\mathrm{diag}(\gamma_1^2\lambda_1,\ldots,\gamma_n^2\lambda_n))$
 holds. 
 Let $\gamma_1=1$ and $\gamma_n=C$. 
 Since $1\leq C^2\leq \kappa(A)=\lambda_1/\lambda_n$ holds, 
 for $k=2,\ldots,n-1$ we have 
 \begin{align*}  
  1\leq \min\bigg\{C,\,\sqrt{\frac{\lambda_1}{\lambda_k}}\bigg\}, \qquad
  C\sqrt{\frac{\lambda_n}{\lambda_k}}\leq
  \min\bigg\{C,\,\sqrt{\frac{\lambda_1}{\lambda_k}}\bigg\}
 \end{align*}
 and thus, we obtain
 \begin{align*}  
  \max\bigg\{1, C\sqrt{\frac{\lambda_n}{\lambda_k}}\bigg\}
  \leq 
  \min\bigg\{C,\,\sqrt{\frac{\lambda_1}{\lambda_k}}\bigg\},\qquad
  k=2,\ldots,n-1. 
 \end{align*}
 Hence, there exists $\gamma_k,\ k=2,\ldots,n-1$ such that
 \begin{align*}
  \max\bigg\{1,\, C\sqrt{\frac{\lambda_n}{\lambda_k}}\bigg\}
  ~\leq~ \gamma_k
  ~\leq~ \min\bigg\{C,\sqrt{\frac{\lambda_1}{\lambda_k}} \bigg\}. 
 \end{align*}
 Thus, $1\leq \gamma_k\leq C$ holds for all $k=2,\ldots,n-1$. 
 Moreover, $C^2\lambda_n\leq \gamma_k^2\lambda_k\leq \lambda_1$ also holds. 
 These inequalities imply $\kappa(S)=C$ and 
 $\kappa(SAS)=\lambda_1/(C^2\lambda_n)=\kappa(A)/C^2$. 
 Therefore 
 $\min_{S:\kappa(S)\leq C}\kappa(SAS)\leq \frac{\kappa(A)}{C^2}$ holds if $1\leq C^2\leq
 \kappa(A)$. 
\end{proof} 

\section{Proof of Theorem \ref{thm:min-max-evaluation}}
\label{appendix:Proof_of_min-max-theorem}
We show the proof of Theorem \ref{thm:min-max-evaluation}. 
\begin{proof}
 Let $w_1$ be the constant function taking $1$ over $\ZC$. 
 In a universal RKHS, for any $\delta>0$, there exists 
 $w\in{\cal H}$ such that $\|w_1-w\|_\infty\leq\delta$. 
 According to Appendix~D in \emcite{horn85:_matrix_analy}, 
 eigenvalues of a matrix are 
 continuous on its entries, and thus so do the minimal and maximal eigenvalues and the
 condition number as long as the condition number is well-defined. 
 Then, for any $\varepsilon>0$ and for any $\psi$ satisfying $\psi''(1)=1$, 
 there exists  $w\in{\cal H}$ such that 
 \begin{align*}
 |\kappa_0(D_{\psi,w})-\kappa_0(I_n)|\leq \varepsilon.
 \end{align*}
 Then, for fixed samples $X_1,\ldots,X_n$, we find that
\[
 \sup \{\kappa_0(D_{\psi,w})~|~w\in{\cal H}\}  ~\geq~\kappa_0(I_n). 
\]
On the other hand, for $\psi(z)=z^2/2$, we obtain
\[
\sup \{\kappa_0(D_{\psi,w})~|~w\in{\cal H}\}  ~=~\kappa_0(I_n). 
\]
Thus, \eqref{eqn:min-max-evaluation} holds. 
\end{proof}

\section{Proof of Theorem \ref{cor:cond-num-prob-convergence} }
\label{appendix:proof-of-probability-eval-theorem}
The following lemma is the key to prove Theorem \ref{cor:cond-num-prob-convergence}. 
\begin{lemma}
 \label{thm:probabilistic_evaluation}
 Suppose that the kernel function $k$ satisfies the condition in Theorem 
 \ref{cor:cond-num-prob-convergence}, and that 
 the expectation of $\psi''(\what(X_1))$ exists. 
 The probability $\Pr(\cdots)$ is defined from the distribution of samples 
 $X_1,\ldots,X_n,\,Y_1,\ldots,Y_m$. 
 Then, there exists a positive constant $\varepsilon>0$ such that 
 the probability distribution of $\kappa(H)$ is bounded above by
 \begin{align}
  \Pr\left(\kappa(H)< \delta\right)  
  ~\leq~  
  F_n\left(\frac{c}{\varepsilon}\right) + \frac{\delta}{c}
  (E[\psi''(\what(X_1))]+\lambda), 
  \label{pr-d|X-upperbound}
 \end{align}
 where $c$ is an arbitrary positive value. 
 On the other hand, 
 for any positive number $c>0$, we have 
\begin{align}
  \Pr\left(\kappa(H)>\kappa(K_{11})\big(1+\frac{c}{\lambda}\big)\right)
  ~\leq~  1-F_n(c)
  \label{pr-d|X-upperbound2}
\end{align}
 if the Gram matrix $K_{11}$ is almost surely positive definite. 
 \end{lemma}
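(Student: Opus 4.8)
The plan is to reduce both tail bounds to deterministic spectral inequalities that trap the extreme eigenvalues of $H=\frac{1}{n}K_{11}D_{\psi,\what}K_{11}+\lambda K_{11}$ between quantities built from $T_n=\max_i\psi''(\what(X_i))$ and the empirical average $\bar d_n=\frac1n\sum_{i=1}^n\psi''(\what(X_i))$, and then to convert these into probabilities using the definition of $F_n$ and Markov's inequality. Throughout I write $D=D_{\psi,\what}$ and use that $\psi$ is convex, so $D$ is a nonnegative diagonal matrix, and that $\sqrt{\varepsilon}\le k\le 1$ controls both $\mathrm{trace}(K_{11})\le n$ and the diagonal entries of $K_{11}^2$.

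I would establish the upper bound \eqref{pr-d|X-upperbound2} first, since it is deterministic up to the final line. From $D\preceq T_n I_n$ and Weyl's inequality, $\lambda_{\max}(H)\le\frac{T_n}{n}\lambda_{\max}(K_{11})^2+\lambda\lambda_{\max}(K_{11})$, and the bound $\lambda_{\max}(K_{11})\le\mathrm{trace}(K_{11})\le n$ (which uses $k\le1$) collapses this to $\lambda_{\max}(H)\le\lambda_{\max}(K_{11})(T_n+\lambda)$. Since $\frac1n K_{11}DK_{11}\succeq0$, positive definiteness of $K_{11}$ gives $\lambda_{\min}(H)\ge\lambda\lambda_{\min}(K_{11})>0$, and dividing yields the deterministic inequality $\kappa(H)\le\kappa(K_{11})(1+T_n/\lambda)$. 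Hence $\{\kappa(H)>\kappa(K_{11})(1+c/\lambda)\}\subseteq\{T_n>c\}$, and passing to probabilities gives \eqref{pr-d|X-upperbound2}.

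The bound \eqref{pr-d|X-upperbound} is the harder direction, and the crux is a lower bound on $\lambda_{\max}(H)$ proportional to $T_n$ with no spurious $1/n$ factor; a plain Rayleigh quotient at a coordinate vector only delivers $\tfrac{\varepsilon}{n}T_n$, which is too weak. The device I would use is that $K_{11}DK_{11}=AA^\top$ and $D^{1/2}K_{11}^2D^{1/2}=A^\top A$ for $A=K_{11}D^{1/2}$ share the same nonzero spectrum, so with $i^*$ attaining the maximal diagonal entry of $D$, $\lambda_{\max}(\tfrac1n K_{11}DK_{11})=\tfrac1n\lambda_{\max}(D^{1/2}K_{11}^2D^{1/2})\ge\tfrac1n T_n(K_{11}^2)_{i^*i^*}$. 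The lower kernel bound then gives $(K_{11}^2)_{i^*i^*}=\sum_j k(X_{i^*},X_j)^2\ge n\varepsilon$, so $\lambda_{\max}(H)\ge\varepsilon T_n$. For the smallest eigenvalue I would use $\lambda_{\min}(H)\le\mathrm{trace}(H)/n$ together with $(K_{11}^2)_{ii}\le n$ and $\mathrm{trace}(K_{11})\le n$ to obtain $\lambda_{\min}(H)\le\bar d_n+\lambda$. This spectral identity is the main obstacle; everything else is bookkeeping.

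These two estimates show that $\kappa(H)<\delta$ forces $\varepsilon T_n\le\lambda_{\max}(H)<\delta\lambda_{\min}(H)\le\delta(\bar d_n+\lambda)$. I would then split $\{\kappa(H)<\delta\}$ according to $T_n\le c/\varepsilon$ or $T_n>c/\varepsilon$: the first piece has probability at most $F_n(c/\varepsilon)$, while on the second piece $\varepsilon T_n>c$ combined with $\varepsilon T_n<\delta(\bar d_n+\lambda)$ forces $\bar d_n+\lambda>c/\delta$, an event whose probability Markov's inequality bounds by $\frac{\delta}{c}E[\bar d_n+\lambda]=\frac{\delta}{c}(E[\psi''(\what(X_1))]+\lambda)$. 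Summing the two contributions gives \eqref{pr-d|X-upperbound}. The only point requiring care beyond the spectral identity is the evaluation $E[\bar d_n]=E[\psi''(\what(X_1))]$, which holds because $\what$ is a symmetric function of the i.i.d.\ samples, so the summands are identically distributed.
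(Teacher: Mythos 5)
Your proposal is correct and takes essentially the same approach as the paper: both arguments rest on the deterministic eigenvalue bounds $\lambda_{\max}(H)\ge\varepsilon T_n$ and $\lambda_{\min}(H)\le\frac{1}{n}\sum_{i=1}^n\psi''(\widehat{w}(X_i))+\lambda$ for the lower-tail inequality, and $\lambda_{\max}(H)\le\lambda_{\max}(K_{11})(T_n+\lambda)$, $\lambda_{\min}(H)\ge\lambda\lambda_{\min}(K_{11})$ for the upper-tail one, followed by exactly the same conversion to probabilities ($F_n$ for the max term, Markov's inequality for the average term, and a union bound). The only local difference is how the key bound $\lambda_{\max}(H)\ge\varepsilon T_n$ is obtained: the paper evaluates the Rayleigh quotient at the normalized Gram column $k_{i^*}/\|k_{i^*}\|$, whereas you use the equality of the nonzero spectra of $K_{11}DK_{11}$ and $D^{1/2}K_{11}^2D^{1/2}$ together with a coordinate test vector---a cosmetic variation that yields the identical inequality.
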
 
 
\begin{proof}
 Let $k_i$ be the $i$-th column vector of the Gram matrix $K_{11}$. 
 Due to the condition on the kernel function, 
 there exists a constant $\varepsilon>0$ such that 
 \[
 \Pr\big(\sqrt{\varepsilon} \leq (K_{11})_{ij}\leq 1,\ i,j=1,\ldots,n\big)=1, 
 \]
 where the probability is induced from the joint probability of
 $X_1,\ldots,X_n$. Hence, 
\begin{align}
 \Pr(\varepsilon n\leq \|k_i\|^2\leq n,\ i=1,\ldots,n)=1
 \label{eqn:kernel-element-bounded}
\end{align}
also holds. 
 
 Let $d_i$ be $\psi''(\what(X_i))$, then the matrix $H$ is represented as
 \[
 H=\frac{1}{n}\sum_{i=1}^{n}d_i k_i k_i^\top +\lambda K_{11} \in \Real^{n\times n}.
 \]
 Let us define 
\begin{align*}
 Y_n~=~\min_{\|a\|=1} a^\top H a,\quad 
 Z_n~=~\max_{\|a\|=1} a^\top H a. 
\end{align*}
$Y_n$ and $Z_n$ are the minimal and maximal eigenvalues of $H$. 
Thus, the condition number of $H$ is given as $\kappa(H)=Z_n/Y_n$. 

We derive an upper bound of $Y_n$ and a lower bound of $Z_n$ to prove the first
inequality \eqref{pr-d|X-upperbound}. The minimal eigenvalue is 
less than or equal to the average of all eigenvalues, and the sum of eigenvalues is equal
to the trace of the matrix. Thus, we have 
\begin{align*}
 Y_n
 ~\leq~&
 \frac{1}{n}
 \mathrm{Tr}
 \left( \frac{1}{n}\sum_{i=1}^{n}d_i k_ik_i^\top
 + \lambda K_{11}\right)
 ~\leq~
 \frac{1}{n}\sum_{i=1}^{n}d_i + \lambda, 
\end{align*}
where \eqref{eqn:kernel-element-bounded} was used. 
On the other hand, for any $j=1,\ldots,n$, the inequality 
\begin{align*}
 Z_n 
 &= \max_{\|a\|=1}\frac{1}{n}\sum_{i=1}^{n}d_i(k_i^\top a)^2+\lambda a^\top K_{11}a \\
 &\geq \max_{\|a\|=1}\frac{1}{n}\sum_{i=1}^{n}d_i(k_i^\top a)^2\\
 &\geq \frac{1}{n}\sum_{i=1}^{n}d_i(k_i^\top k_j/\|k_j\|)^2 
 \qquad\text{($k_j/\|k_j\|$ is substituted into $a$)}\\ 
 &\geq \frac{1}{n}d_i \|k_j\|^2\\ 
 &\geq\varepsilon d_j 
\end{align*} 
holds. The last inequality follows \eqref{eqn:kernel-element-bounded}. 
Hence, we have
\begin{align*}
 Z_n~\geq~ \varepsilon \max_j d_j. 
\end{align*}
Therefore, for any $\delta>0$, we have
\begin{align}
\Pr\!\left(\kappa(H)< \delta\right)
 ~\leq~
 \Pr\!\left(
 \frac{\varepsilon \max_i d_i} 
  {\frac{1}{n}\sum_{i=1}^{n}d_i+\lambda}
  <\delta \right). 
\label{upper-bound-p(k(H)<=delta)}
\end{align}
The probability of the numerator
in \eqref{upper-bound-p(k(H)<=delta)} is given as 
\begin{align*}
 \Pr(\varepsilon \max_i d_i\leq c_1) 
 ~=~ F_n\left(\frac{c_1}{\varepsilon}\right),\quad c_1>0. 
\end{align*}
For the probability of the denominator in \eqref{upper-bound-p(k(H)<=delta)}, 
we use Markov's inequality: 
\begin{align*}
\Pr\bigg(\big(\frac{1}{n}\sum_{i=1}^{n}d_i+\lambda\big)^{-1}\leq c_2\bigg)
~=~ 
 \Pr\bigg(\frac{1}{n}\sum_{i=1}^{n}d_i+\lambda\geq 1/c_2\bigg)
~\leq~ 
 c_2\left(E[d_1] +\lambda \right),\ \ \ c_2> 0. 
\end{align*}
Combining these two bounds\footnote{Let $A$, $B$, $a$, and $b$ be four positive numbers. 
If $A\geq a$ and $B\geq b$, then we have $AB\geq ab$. As the contraposition, 
if $AB< ab$, then $A< a$ or $B< b$ holds. 
}, we find
\begin{align*}
 &\Pr\!\left(
 \frac{\varepsilon \max_i d_i } 
  {\frac{1}{n}\sum_{i=1}^{n}d_i+\lambda} < c_1c_2 \right)
 ~\leq~
 F_n\left(\frac{c_1}{\varepsilon}\right) + c_2 (E[d_1]+\lambda).
\end{align*}
 Therefore, for any $\delta>0$ and $c>0$, we have \eqref{pr-d|X-upperbound}.

We prove the second inequality \eqref{pr-d|X-upperbound2}.
Let $\tau_1$ and $\tau_n$ be the maximal and minimal eigenvalues of $K_{11}$. Since all
 diagonal elements of $K_{11}$ are less than or equal to one, we have 
$0<\tau_1 \leq \mathrm{Tr}\,K_{11}\leq n$. Then, we have a lower bound of $Y_n$ and an
 upper bound of $Z_n$ as follows:
\begin{align*}
 Y_n &= \min_{\|a\|=1}\frac{1}{n}\sum_{i=1}^{n}d_i (k_i^\top a)^2 
 + \lambda a^\top K_{11} a
 ~\geq~ \lambda \tau_n,\\
 Z_n &=
 \max_{\|a\|=1}\frac{1}{n}\sum_{i=1}^{n}d_i (k_i^\top a)^2 
 + \lambda a^\top K_{11} a\\
 &\leq
 \frac{\max_j d_j}{n}
 \max_{\|a\|=1}\sum_{i=1}^{n} (k_i^\top a)^2 + \lambda \tau_1 \\
 &=
 \frac{\max_j d_j}{n} \tau_1^2 + \lambda \tau_1\\
 &\leq
 \tau_1 \max_j d_j+\lambda \tau_1, 
\end{align*}
where the last inequality for $Z_n$ follows from $\tau_1\leq n$. 
Therefore, for any $c>0$, we have 
\begin{align*}
\Pr\!\left(\kappa(H)>\kappa(K_{11})\big(1+\frac{c}{\lambda}\big)
\right)
 ~\leq~& 
 \Pr\!\left(
 \frac{\tau_1 \max_j d_j +\lambda\tau_1}{\lambda \tau_n} 
 > \kappa(K_{11})\big(1+\frac{c}{\lambda}\big) 
 \right)\\
 ~=~&  
 \Pr\!\left( \max_j d_j > c \right)\\
 ~=~&   
 1-\Pr\!\left( \max_j d_j \leq c \right)\\
 ~=~&    
 1-F_n(c). 
\end{align*} 
\end{proof}

In Lemma~\ref{thm:probabilistic_evaluation}, 
the distributions of $Y_n$ and $Z_n$ are separately
computed. This idea is borrowed from smoothed analysis of the condition numbers  
\cite{sankar06:_smoot_analy_of_condit_number}. In smoothed analysis, the probability
$\Pr(\kappa(H)\geq \delta)$ is bounded above to ensure that the condition number is
unlikely to be large. 
In the above lemma, we used the same technique also for upper-bounding the probability of
the form $\Pr(\kappa(H)\leq \delta)$. As a result, we obtained the possible lowest order
of the condition number $\kappa(H)$.  


Below, we show the proof of Theorem \ref{cor:cond-num-prob-convergence}. 
\begin{proof}
[proof of Theorem \ref{cor:cond-num-prob-convergence}]
 The inequality \eqref{pr-d|X-upperbound} in Lemma~\ref{thm:probabilistic_evaluation}
 provides 
 \begin{align*}
  \Pr(\kappa(H)< \delta_n)
  ~\leq~
  F_n\left(\frac{c_n}{\varepsilon}\right)
  +\frac{\delta_n}{c_n}(M+\lambda). 
 \end{align*}
Let $c_n$ be $\varepsilon s_n$ and $\delta_n$ be $o(s_n)$ then, 
we obtain
\begin{equation}
 \lim_{n\rightarrow\infty}\Pr(\kappa(H)<\delta_n) = 0. 
 \label{eqn:ineq-proof-1} 
\end{equation}
 We prove another inequality. 
 Due to the second inequality in Lemma~\ref{thm:probabilistic_evaluation}, 
 we have 
 \begin{align}
  \lim_{n\rightarrow\infty}
  \Pr\bigg(\kappa(H)>\kappa(K_{11})\big(1+\frac{t_n}{\lambda}\big)\bigg)
  \leq
  1- \lim_{n\rightarrow\infty}F_n(t_n)
  ~=~ 
  0
  \label{eqn:ineq-proof-2}
 \end{align}
We complete the proof by combining \eqref{eqn:ineq-proof-1} and \eqref{eqn:ineq-proof-2}. 
\end{proof}

\section{Proof of Theorem \ref{eqn:integrability}}
\label{appendix:Proof-of-reduction-theorem}
We show the proof of Theorem \ref{eqn:integrability}
\begin{proof}
 Assume that $\psi''(z)$ is not a constant function. Since $K_{11}$ is non-singular, 
 the vector $K_{11}\alpha+\frac{1}{m\lambda}K_{12}\1_m$ takes an arbitrary value in $\Real^n$
 by varying  $\alpha\in\Real^n$. Hence, each diagonal element of $D_{\psi,\alpha}$ can take
 arbitrary values in an open subset $S\subset \Real$. 
 We consider $R^{-1}M_{\psi,R}(\alpha) (R^{\top})^{-1}$ instead of $M_{\psi,R}$. 
 Suppose that there exists a matrix
 $R$ such that the matrix 
 \[
 R^{-1}M_{\psi,R}(\alpha) (R^{\top})^{-1}=\frac{1}{n}\, 
 \mathrm{diag}(s_1,\ldots,s_n) K_{11}(R^{\top})^{-1}+\lambda (R^{\top})^{-1}
 \]
 is symmetric for any $(s_1,\ldots,s_n)\in S^n$. 
 Let $a_{ij}$ be the $(i,j)$ element of $K_{11}(R^{\top})^{-1}$, and $t_{ij}$ be the
 $(i,j)$ element of $(R^\top)^{-1}$. Then, the $(i,j)$ and $(j,i)$ elements of
 $R^{-1}M_{\psi,R}(\alpha) (R^{\top})^{-1}$ are equal to 
 $\frac{1}{n}s_ia_{ij}+\lambda t_{ij}$ and $\frac{1}{n}s_ja_{ji}+\lambda t_{ji}$,
respectively.
 Due to the assumption, the equality 
 \[
 \frac{1}{n}s_ia_{ij}+\lambda t_{ij}=\frac{1}{n}s_ja_{ji}+\lambda t_{ji}
 \]
 holds for any $s_i, s_j\in S$. When $i\neq j$, we obtain $a_{ij}=a_{ji}=0$ and
 $t_{ij}=t_{ji}$. Thus, $K_{11}(R^\top)^{-1}$ should be equal to some diagonal matrix, and
 $(R^\top)^{-1}$ is a symmetric matrix. Thus, there exists a diagonal matrix 
 $Q=\mathrm{diag}(q_1,\ldots,q_n)$ such that $K_{11}=QR$ holds. 
 As a result, we have 
 $(K_{11})_{ij}=q_i R_{ij},\,(K_{11})_{ji}=q_j R_{ji},\,R_{ij}=R_{ji}$, and
 $(K_{11})_{ij}=(K_{11})_{ji}$. Hence we obtain 
 \[
 (K_{11})_{ij}~=~q_iR_{ij} ~=~q_jR_{ij}, 
 \]
 and then, $q_i=q_j$ or $R_{ij}=0$ holds for any $i$ and $j$. 
 Since $(K_{11})_{ij}$ is non-zero element, the only possibility is 
 $q_1=q_2=\cdots=q_n\neq 0$. 
 Therefore, the diagonal matrix $Q$ should be proportional to the identity
 matrix and there exists a constant $c\in\Real$ such that 
 the equality $R=cK_{11}$  holds. This equality contradicts the assumption. 
\end{proof}

\bibliographystyle{mlapa}




\end{document}